\def \be {\begin{equation*}}
\def \ee {\end{equation*}}
\def \bi {\begin{itemize}[align=parleft, leftmargin=*] \item}
\def \ei {\end{itemize}}
\def \im {\item}
\newcommand{\Rgr}{{ R_{\scriptscriptstyle \rm G} }}
\newcommand{\Rno}{{ R_{\scriptscriptstyle \rm N} }}
\newcommand{\bbRgr}{{ \bbR_{\scriptscriptstyle \rm G} }}
\newcommand{\bbRno}{{ \bbR_{\scriptscriptstyle \rm N} }}
\newcommand{\bbSCp} {{ \bbS'_{\rm \scriptscriptstyle C} }}
\newcommand{\hbSC} {{ \hbS_{\rm \scriptscriptstyle C} }}
\newcommand{\bbSVp} {{ \bbS'_{\rm \scriptscriptstyle V} }}
\newcommand{\hbSV} {{ \hbS_{\rm \scriptscriptstyle V} }}
\newcommand{\vect} {{\rm vec}}
\newcommand{\diag} {{\rm diag}}
\newcommand{\SA} {{ \ccalS_{\rm \scriptscriptstyle A} }}
\newcommand{\SL} {{ \ccalS_{\rm \scriptscriptstyle L} }}
\newcommand{\Nmin} {{ N_{\rm\scriptscriptstyle \min} }}
\newcommand{\Nmax} {{ N_{\rm\scriptscriptstyle \max} }}
\newcommand{\st} {{\mathbf{ST}}}
\newcommand{\strw} {{\mathbf{ST}\text{-}\mathbf{Rw}}}
\newcommand{\stba} {{\mathbf{ST}\text{-}\mathbf{Ba}}}
\newcommand{\fstcg} {{\mathbf{FST}_{\rm \scriptscriptstyle C}\text{-}R_{\rm \scriptscriptstyle G}}}
\newcommand{\fstcn} {{\mathbf{FST}_{\rm \scriptscriptstyle C}\text{-}R_{\rm \scriptscriptstyle N}}}
\newcommand{\fstvg} {{\mathbf{FST}_{\rm \scriptscriptstyle V}\text{-}R_{\rm \scriptscriptstyle G}}}
\newcommand{\fstvn} {{\mathbf{FST}_{\rm \scriptscriptstyle V}\text{-}R_{\rm \scriptscriptstyle N}}}
\newcommand{\fgl} {{\mathbf{FGL}}}
\newcommand{\normm}[1]{{\vert\kern-0.25ex\vert\kern-0.25ex\vert #1 
    \vert\kern-0.25ex\vert\kern-0.25ex\vert}}
\newcommand{\normms}[1]{{\left\vert\kern-0.25ex\left\vert\kern-0.25ex\left\vert #1 
    \right\vert\kern-0.25ex\right\vert\kern-0.25ex\right\vert}}
\newcommand{\normv}[1]{{\vert\kern-0.25ex\vert #1 
    \vert\kern-0.25ex\vert}}
\newcommand{\normvs}[1]{{\left\vert\kern-0.25ex\left\vert #1 
    \right\vert\kern-0.25ex\right\vert}}
\pgfplotsset{compat=1.17}
\newtheorem{mytheorem}{\bf Theorem}
\newtheorem{myassumption}{\bf Assumption}
\newtheorem{mycorollary}{\bf Corollary}
\newtheorem{mylemma}{\bf Lemma}
\newtheorem{myremark}{\bf Remark}
\begin{document}
% ==============================================================

% ==============================================================
\pgfplotsset{compat=1.17}
\pgfplotstableset{col sep=comma}
\tikzset{every mark/.append style={scale=1.5, solid}, font=\footnotesize}
\pgfplotsset{
    width=1.05\textwidth,
    %tick label style={font=\footnotesize},
    %label style={font=\footnotesize},
    legend style={
        font=\ssmall ,  %\scriptsize,  %\ssmall,
        inner xsep=1pt,
        inner ysep=1pt,
        nodes={inner sep=1pt}},
    legend cell align=left,
	every axis/.append style={line width=0.5pt},
	every axis plot/.append style={line width=1.25pt},
    every axis y label/.append style={yshift=-5pt}
}
% ==============================================================

% ==============================================================
\title{Estimating Fair Graphs from Graph-Stationary Data}

\author{\name Madeline Navarro \email nav@rice.edu \\
       \addr Department of Electrical and Computer Engineering\\
       Rice University\\
       Houston, TX 77005-1827, USA
       \AND
       \name Andrei Buciulea \email andrei.buciulea@urjc.es \\
       \addr Department of Signal Theory and Communications\\
       King Juan Carlos University\\
       Madrid, Spain
       \AND
       \name Samuel Rey \email samuel.rey.escudero@urjc.es \\
       \addr Department of Signal Theory and Communications\\
       King Juan Carlos University\\
       Madrid, Spain
       \AND
       \name Antonio G. Marques \email antonio.garcia.marques@urjc.es \\
       \addr Department of Signal Theory and Communications\\
       King Juan Carlos University\\
       Madrid, Spain
       \AND
       \name Santiago Segarra \email segarra@rice.edu \\
       \addr Department of Electrical and Computer Engineering\\
       Rice University\\
       Houston, TX 77005-1827, USA}

\editor{TBD}

\maketitle
% ==============================================================

% ==============================================================
\newcommand{\mad}[1]{{\color{BrickRed}[\textbf{Mad:} #1]}}
\newcommand{\sam}[1]{{\color{blue}[\textbf{Sam:} #1]}}
\newcommand{\ab}[1]{{\color[RGB]{204,163,0}[\textbf{AB:} #1]}}
\newcommand \draft[1]{{\color{Gray}#1}}
\newcommand \precite[1] {{\color{Orange}[ref. #1]}}
% ==============================================================

% ==============================================================
\begin{abstract}%
We estimate \emph{fair graphs from graph-stationary nodal observations} such that connections are not biased with respect to sensitive attributes.
Edges in real-world graphs often exhibit preferences for connecting certain pairs of groups.
Biased connections can not only exacerbate but even induce unfair treatment for downstream graph-based tasks.
We therefore consider group and individual fairness for graphs corresponding to group- and node-level definitions, respectively.
To evaluate the fairness of a given graph, we provide multiple bias metrics, including novel measurements in the spectral domain.
Furthermore, we propose \emph{Fair Spectral Templates (FairSpecTemp)}, an optimization-based method with two variants for estimating fair graphs from stationary graph signals, a general model for graph data subsuming many existing ones.
One variant of FairSpecTemp exploits commutativity properties of graph stationarity while directly constraining bias, while the other implicitly encourages fair estimates by restricting bias in the graph spectrum and is thus more flexible.
Our methods enjoy high probability performance bounds, yielding a \textit{conditional} tradeoff between fairness and accuracy.
In particular, our analysis reveals that \emph{accuracy need not be sacrificed to recover fair graphs}.
We evaluate FairSpecTemp on synthetic and real-world data sets to illustrate its effectiveness and highlight the advantages of both variants of FairSpecTemp.
\end{abstract}

\begin{keywords}
    Graph estimation, fairness, group fairness, graph signal processing, graph learning
\end{keywords}
% ==============================================================

% %%%%%%%%%%%%%%%%%%%%%%%%%%%%%%%%%%%%%%%%%%%%%%%%%%%%%%%%%%%%%%
\section{Introduction}
\label{s:intro}

% Graphs
Due to their well-understood properties yet rich modeling capacity, graphs have become a staple tool for interconnected data in many disciplines~\citep{kolaczyk2009statisticalanalysisnetwork,ortega2018graphsignalprocessing,marques2020GraphSignalProcessing,djuric2018cooperativegraphsignal}.
A graph itself may be of interest for analysis, for example, to assess social interactions~\citep{farine2015ConstructingConductingInterpreting}, or it can be used for tasks such as improving recommendations~\citep{duricic2023Beyondaccuracyreviewdiversity,mansoury2022GraphBasedApproachMitigating} or modeling epidemic spread~\citep{achterberg2022ComparingAccuracySeveral}.
% In addition to classical fields, graph machine learning has enjoyed ample research to process complex data for challenging tasks~\citep{grover2016node2vec,wu2021ComprehensiveSurveyGraph}.
In addition to classical fields, graph neural networks (GNNs) and other graph-based machine learning tools have enjoyed ample research to process complex data for challenging tasks~\citep{wu2021ComprehensiveSurveyGraph}.

% Graph learning
In many cases, the graph of interest is unavailable, and we must instead build its connections from data that reflects the underlying pairwise relationships~\citep{brugere2019NetworkStructureInference}.
Sensitive yet critical applications include identifying correlations between financial institutions~\citep{franzolini2024ChangePointDetection}, predicting pairwise connections between neurons or genes~\citep{yatsenko2015improvedestimationinterpretation,cai2013InferenceGeneRegulatory}, and tracking movement for contact tracing~\citep{chang2021Mobilitynetworkmodels}.
Some estimate graphs to aid another task, which includes clustering nodes~\citep{berahmand2025comprehensivesurveyspectral} and even improving GNN predictions through interpretable, optimization-based approaches~\citep{tenorio2023RobustGraphNeural,kose2024FilteringRewiringBias}.

% Fairness and fairness for graphs
Estimating models is essential in data science, yet real-world data is known to encode historical biases that may yield unfair outcomes if incorporated in the estimation process~\citep{chouldechova2017fairpredictiondisparate,lambrecht2019algorithmicbias}.
Graph connections in particular can come with biases that are challenging to address~\citep{lambrecht2019algorithmicbias,yang2024YourNeighborMatters}.
Preferences in edges between certain subpopulations have long been observed in social network analysis~\citep{karimi2018homophilyinfluencesranking,stoica2018algorithmicglassceiling,hofstra2017sourcessegregationsocial,halberstam2016homophilygroupsize,stewart2019informationgerrymanderingundemocratic}, which can limit communication and increase disparate treatment across communities~\citep{nilforoshan2023Humanmobilitynetworks,pariser2011filterbubble,chang2021Mobilitynetworkmodels}.
Even without a proclivity for linking certain groups of nodes, a poorly connected graph may yield unjust resource allocation or information spread~\citep{ogryczak2014FairOptimizationNetworks,bouveret2017FairDivisionGraph,christodoulou2023Fairallocationgraphs}.

Recovering connections from unfair data can lead to discriminatory outcomes, even if the underlying graph is not biased~\citep{navarro2024FairGLASSO,chang2021Mobilitynetworkmodels,rahmattalabi2019Exploringalgorithmicfairness}.
Both edges and nodal data can exhibit unfair behavior, either of which has been found to amplify harmful biases in downstream tasks~\citep{ribeiro2023AmplificationParadoxRecommender,jiang2022FMPFairGraph}.
For example, a social network estimated by tracking human movement may lead to epidemic intervention strategies that discriminate across income levels~\citep{chang2021Mobilitynetworkmodels}.
% Estimating graphs from unfair data can lead to inequitable decision-making, such as developing epidemic intervention strategies that discriminate across income levels~\citep{chang2021Mobilitynetworkmodels}.
Unbiased graph estimation remains new and underexplored, although recent progress includes methods for particular settings~\citep{zhou2024Fairnessawareestimationgraphical,zhang2023UnifiedFrameworkFair,moorthy2025EnsuringFairnessSpectral,tarzanagh2023FairCommunityDetection}.
However, fairness has received little attention in graph signal processing (GSP) and statistics~\citep{kose2024FairnessAwareOptimalGraph,navarro2024FairGLASSO,navarro2024MitigatingSubpopulationBias}.
Indeed, the majority of work promoting fairness for graphs lies in machine learning research~\citep{zhang2025FairnessamidstnonIID,dong2023FairnessGraphMining}.
These approaches often require black-box tools, for which it can be challenging to develop theoretical analyses or empirical explanations to relate biases to graph data~\citep{luo2024FUGNNHarmonizingFairness}.
% kose2022FairNodeRepresentation
Fair models become untrustworthy if their decisions cannot be understood~\citep{dong2022StructuralExplanationBias}, necessitating the use of principled approaches to mitigate biases in graphs.

% Our approach
We therefore propose to estimate graphs with unbiased connections from nodal observations under the assumption of \emph{graph stationarity}. Graph stationarity can be interpreted in several equivalent ways: as data whose covariance matrix is a polynomial of the graph, as data whose covariance matrix shares the eigenvectors of the graph, or as data generated by diffusing white noise through a linear graph (network) operator~\citep{dong2019LearningGraphsData,mateos2019connectingthedots,marques2017stationarygraphprocesses}. Importantly, graph stationarity is a well-founded and general graph signal model that encompasses many widely used types of networked data, including Gaussian Markov random fields (GMRFs)~\citep{marques2017stationarygraphprocesses}. 
To formalize our goal of fair connectivity, we introduce two notions: \emph{group} fairness, balancing edges across nodal groups in aggregate; and \emph{individual} fairness, encouraging equitable links on a node-by-node basis.
% We introduce two notions of fair connectivity: \emph{group} fairness, balancing edges across nodal groups in aggregate; and \emph{individual} fairness, encouraging equitable links on a node-by-node basis.
We then present metrics to measure biases from the two perspectives~\citep{navarro2024FairGLASSO,navarro2024MitigatingSubpopulationBias,kose2024FairnessAwareOptimalGraph}, which we relate to the plethora of existing bias metrics.
We further propose novel, alternative formulations to measure unfairness in the graph frequency domain.
Equipped with these metrics, we introduce two optimization-based approaches to recover fair graphs, each founded on graph stationarity with different advantages~\citep{segarra2017networktopologyinference,navarro2022jointinferencemultiple}.
Our contributions are as follows. 

% Our methods are further equipped with efficient algorithms and theoretical guarantees for problem relaxations and performance bounds, which characterize the tradeoff between fairness and accuracy in graph estimation.

% =================================================
% \subsection{Contributions}
% \label{ss:cont}
% We summarize our contributions as follows. 
\begin{itemize}[left= 5pt .. 15pt, noitemsep]
    \item[(1)] We present two desirable notions of fairness on graphs from group- and node-level perspectives, or group and individual fairness, respectively, where nodes are partitioned into subpopulations based on sensitive attributes.
    We also explore multiple metrics to quantify graph bias, including novel formulations of bias in the graph spectrum.
    \item[(2)] We propose fair graph estimation from nodal observations through two optimization-based approaches, both founded on graph stationarity.
    We also demonstrate when convex relaxations of the proposed optimization problems maintain the desired solutions.
    % We also demonstrate theoretical guarantees for convex relaxations, allowing us to present an efficient algorithm for each approach, which yield different advantages.
    \item[(3)] We present high-probability performance bounds in terms of both group and individual fairness, characterizing the tradeoff between fairness and accuracy in graph estimation.
    Importantly, we show that the fairness-accuracy tradeoff depends on the bias of the graph to be estimated, where a fair target graph can preclude a sacrifice in accuracy.
\end{itemize}
% =================================================
% %%%%%%%%%%%%%%%%%%%%%%%%%%%%%%%%%%%%%%%%%%%%%%%%%%%%%%%%%%%%%%

% %%%%%%%%%%%%%%%%%%%%%%%%%%%%%%%%%%%%%%%%%%%%%%%%%%%%%%%%%%%%%%
\subsection{Related Work}
\label{ss:related}

In this section, we review past work related to fair graph estimation.
This includes how past works define fairness for graphs and how these definitions are incorporated in graph machine learning and GSP.

% =================================================
\subsubsection{Graph Estimation}
\label{sss:nti}
% \vspace{.2cm}
% \noindent\textbf{Graph estimation.}

Graph estimation requires that nodal observations can be described using pairwise relationships, usually formulated through an algebraic or statistical model \citep{mateos2019connectingthedots,dong2019LearningGraphsData}.
% Graph estimation requires that nodal observations can be described using pairwise relationships, usually formulated through an algebraic or statistical model~\citep{mateos2019connectingthedots,dong2019LearningGraphsData}.
Statistical methods are among those most common, with staple models including correlation networks and GMRFs~\citep{kolaczyk2009statisticalanalysisnetwork,friedman2008sparseinversecovariance,meinshausen2006highdimensionalgraphsvariable,buciulea2025polynomialgraphicallasso,ying2020NonconvexSparseGraph}.
Others extend the graph estimation task for specific settings such as causal relationships or particular types of dynamical processes on graphs~\citep{rey2025NonnegativeWeightedDAG,cai2013InferenceGeneRegulatory,baingana2017TrackingSwitchedDynamic}.
Methods in GSP pose graph signal models to describe nodal behavior that, in some cases, generalize existing assumptions on networked data, including imposing nodal data that is smooth or low-pass on the graph~\citep{kalofolias2016howlearngraph,dong2016learninglaplacianmatrix,buciulea2022learninggraphssmooth,saboksayr2021AcceleratedGraphLearning}.
Of particular interest to our setting are works that estimate graphs from stationary graph signals, which was originated by \citet{segarra2017networktopologyinference} and has been adapted for several other approaches and settings~\citep{rey2022enhancedgraphlearningschemes,navarro2022jointinferencemultiple,navarro2024jointnetworktopology,shafipour2020OnlineTopologyInference,pasdeloup2017characterizationinferencegraph}.
A small number of papers considered estimating graphs under considerations of fairness, which we discuss further in the sequel.
% =================================================

% =================================================
\subsubsection{Fairness on Graphs}
\label{sss:fairgraphs}
% \vspace{.2cm}
% \noindent\textbf{Fairness on graphs.}

Past works primarily consider fairness for graphs at a node-level, analogous to traditional fairness goals.
In particular, group fairness on graphs promotes balanced treatment of subpopulations of nodes~\citep{hardt2016equalityopportunitysupervised,feldman2015certifyingremovingdisparate}, where it is assumed that the graph may introduce or exacerbate biased outcomes~\citep{rahman2019FairwalkFairGraph,bose2019Compositionalfairnessconstraints}.
In such cases, graph machine learning works typically encourage group fairness for node classification or removing sensitive information from node embeddings~\citep{ma2021Subgroupgeneralizationfairness,lin2024BeMapBalancedmessage,palowitch2020DebiasingGraphRepresentations,kose2023DynamicFairNode,dong2022EDITSModelingMitigating,kose2022FairNodeRepresentation,jiang2022FMPFairGraph,dai2021SayNoDiscrimination}.
Individual fairness is a stricter definition~\citep{dwork2012fairnessthroughawareness}, requiring that each node be treated equitably regardless of group membership, as opposed to only balancing treatment in aggregate.
However, few explicitly consider individual fairness from a graph connectivity perspective~\citep{kose2024FairnessAwareOptimalGraph,kose2023DynamicFairNode}, which we discuss in Section~\ref{ss:indiv_fairness}.
% Degree bias, PageRank bias, Lipschitz smoothness

Alternatively, some emphasize dyadic fairness, balancing edges with respect to \emph{pairs} of node groups, discussed in more detail in Section~\ref{ss:group_fairness}.
% Modifying the input graph, counterfactual fairness
A prominent application of dyadic fairness is fair link prediction, which shares our goal of promoting equity across node pairs, with applications for recommender systems~\citep{duricic2023Beyondaccuracyreviewdiversity,mansoury2022GraphBasedApproachMitigating,buyl2020DeBayesBayesianmethod,beutel2019FairnessRecommendationRanking}, knowledge graphs~\citep{zhang2023BiasedDebiasedPolarizationaware,bourli2020BiasKnowledgeGraph,shomer2023DegreeBiasEmbeddingBased,fu2020FairnessAwareExplainableRecommendation}, and graph transformers~\citep{luo2024FairGTFairnessawareGraph,luo2025FairGPScalableFaira}.
We next discuss obtention of fair graphs in greater detail.
% =================================================

% =================================================
\subsubsection{Estimating Fair Graphs}
\label{sss:fair_gsp}
% \vspace{.2cm}
% \noindent\textbf{Estimating fair graphs}

% Fair graph estimation
Compared to machine learning, fairness for graphs in signal processing and statistics is much rarer, with few but interesting new approaches mitigating bias for graph filters~\citep{kose2024FairnessAwareOptimalGraph,kose2024FilteringRewiringBias}, spectral clustering~\citep{moorthy2025EnsuringFairnessSpectral,zhang2023UnifiedFrameworkFair,tarzanagh2023FairCommunityDetection,gupta2022Consistencyconstrainedspectral}, and graph estimation~\citep{navarro2024FairGLASSO,navarro2024MitigatingSubpopulationBias,zhou2024Fairnessawareestimationgraphical}.
% kleindessner2019Guaranteesspectralclustering
Some consider estimating graphs to cluster nodes fairly~\citep{tarzanagh2023FairCommunityDetection,zhang2023UnifiedFrameworkFair,moorthy2025EnsuringFairnessSpectral}, but, as they are primarily interested in balancing groups across clusters, they do not explicitly encourage unbiased pairwise connections.
Others promote fair connectivity by modifying edges in a given, known graph~\citep{kose2024FilteringRewiringBias,spinelli2023Dropedgesadapt,li2021dyadicfairnessExploring,masrour2020BurstingFilterBubble,ma2022LearningFairNode,agarwal2021unifiedframeworkfair}.
% loveland2022FairEditPreservingFairness
% Graph generative models create new fair graphs from a given one
Additionally, fair graph generative models share our goal of creating graphs from data with unbiased edges~\citep{wang2025FGSMOTEFairNode,zheng2024FairGenFairGraph,kose2024FairWireFairgraph,wang2023FG2ANFairnessAwareGraph}, although these require observing at least one graph to construct a distribution of similar, unbiased samples, whereas we recover graphs from observed nodal data, more relevant to certain practical applications.

% Graph estimation
The works closest to our own estimate connections from nodal data to promote fair graphs~\citep{navarro2024FairGLASSO,navarro2024MitigatingSubpopulationBias,zhou2024Fairnessawareestimationgraphical}.
\citet{navarro2024FairGLASSO} and \citet{zhou2024Fairnessawareestimationgraphical} consider statistical approaches to estimate graphical models, although \citet{zhou2024Fairnessawareestimationgraphical} assign observed samples to groups rather than nodes, which is fundamentally different from our setting.
While pioneering the field of fair graph estimation, these papers consider more restrictive signal models, which are special cases of our assumption.
Our work encompasses the same goal and assumptions as~\citet{navarro2024MitigatingSubpopulationBias}, although we differ in our approach, implementation, and analysis.
First, \citet{navarro2024MitigatingSubpopulationBias} present similar definitions for graph fairness, albeit without much connection to past notions of bias nor our perspectives of bias in the graph spectrum.
Second, we provide two new constrained optimization frameworks for estimating fair graphs from stationary graph signals.
% which include proposed algorithms for solving each problem efficiently.
% , rather than relying on convex off-the-shelf solvers.
% Second, we provide two new constrained optimization frameworks for fair graph estimation from stationary graph signals, which includes proposed algorithms for solving each problem efficiently, whereas~\citet{navarro2024MitigatingSubpopulationBias} relied on convex off-the-shelf solvers.
Finally, we analyze the performance of our proposed optimization problems, both in terms of accuracy and fairness, with an explicit characterization of when a tradeoff between the two can occur. 
%\red{One idea for reducing the number of citations is to limit how often we cite the same paper. I understand that a single paper may be relevant to several paragraphs in Section 1 and also in Section 2, but perhaps we can save space, if needed. Also, sections 3 and 4 have repeated citations. We can discuss this.}
% =================================================
% %%%%%%%%%%%%%%%%%%%%%%%%%%%%%%%%%%%%%%%%%%%%%%%%%%%%%%%%%%%%%%

% %%%%%%%%%%%%%%%%%%%%%%%%%%%%%%%%%%%%%%%%%%%%%%%%%%%%%%%%%%%%%%
\section{Preliminaries}
\label{s:prelim}

To formalize our problem and proposed solutions, we first introduce the necessary notation and briefly review key concepts from GSP.

% =================================================
\subsection{Notation}
\label{ss:notation}

For any positive integer $N$, we let $[N] := \{1,2,\dots,N\}$.
We represent matrices and vectors by boldfaced upper-case $\bbX$ and lower-case letters $\bbx$, respectively.
Their entries are denoted by $X_{ij}$ and $x_i$ for indices $i$ and $j$, and rows and columns of a matrix $\bbX$ are denoted by $\bbX_{i,\cdot}$ and $\bbX_{\cdot,j}$, respectively.
Calligraphic letters represent index sets, where $\bar{\ccalC}$ denotes the complement of the index set $\ccalC$.
We let $\bbX_{\ccalC,\cdot}$ ($\bbX_{\cdot,\ccalC}$) be the submatrix of $\bbX$ with rows (columns) indexed by $\ccalC$, and $\bbx_{\ccalC}$ is the subvector of $\bbx$ with entries indexed by $\ccalC$.
The boldfaced numbers $\bbzero$ and $\bbone$ represent vectors of all zeros and ones, respectively, and $\bbI$ denotes the identity matrix.
We also let $\bbe_i$ denote the $i$-th standard basis vector, that is, $\bbe_i := \bbI_{\cdot,i}$.
% We also let $\bbe_i$ denote the $i$-th standard basis vector, that is, $\bbe_i := \bbI_{\cdot,i}$, the dimension of which will be explained upon use if not clear from context.
For a matrix $\bbX \in \reals^{M\times N}$, $\vect(\bbX)\in \reals^{MN}$ returns the concatenation of the columns of $\bbX$.
We denote matrix norms by $\normms{\cdot}$ and vector norms by $\normvs{\cdot}$, and for a matrix $\bbX$, $\normv{\bbX}$ evaluates the vector norm of $\vect(\bbX)$.
The operator $\diag(\bbX) \in \reals^N$ applied to a matrix returns a vector containing the diagonal entries of $\bbX \in \reals^{N \times N}$, while $\diag(\bbx) \in \reals^{N\times N}$ returns a diagonal matrix with $\bbx \in \reals^{N}$ populating the diagonal entries.
We introduce two masking operators for square matrices, $\bbX^- = \diag(\diag(\bbX))$ retaining only diagonal entries of $\bbX$ and $\bbX^+ = \bbX-\bbX^-$ only off-diagonal entries.
We use $O(\cdot)$ for big-O notation and $o(\cdot)$ for little-o notation.
The symbols $\otimes$, $\oplus$, $\odot$, and $\circ$ represent the Kronecker product, the Kronecker sum, the Khatri-Rao product, and the Hadamard (element-wise) product, respectively.
% =================================================

% =================================================
\subsection{Graph Signal Processing}
\label{ss:gsp}

Let $\ccalG = (\ccalV,\ccalE)$ denote a weighted, undirected graph without self-loops, consisting of $N$ nodes collected in $\ccalV$ and edges $\ccalE \subseteq \ccalV \times \ccalV$, where $(i,j) \in \ccalE$ if and only if an edge connects nodes $i,j\in \ccalV$ such that $i\neq j$.
Since we are interested in recovering potentially weighted connections, we rely on the graph-shift operator (GSO) $\bbS \in \reals^{N\times N}$ as a convenient way of representing edges in $\ccalG$~\citep{sandryhaila2013discretesignalprocessing,djuric2018cooperativegraphsignal}, where $S_{ij} \neq 0$ if and only if $(i,j)\in\ccalE$.
Arguably, the most common instantiations of $\bbS$ are the adjacency matrix $\bbA$ and the graph Laplacian $\bbL := \diag(\bbd) - \bbA$~\citep{shuman2013emergingfieldsignal,nt2021RevisitingGraphNeural} for the node degree vector $\bbd = \bbA\bbone$, where $d_i$ contains the sum of edge weights connected to node $i\in\ccalV$.
We define the set of valid adjacency matrices as
\alna{
    \SA
    &~:=~&
    \left\{
        \bbS \in \reals^{N\times N}
        \big|~\!
        \bbS = \bbS^\top, ~
        \bbS^+ \geq \bbzero, ~
        \diag(\bbS) = \bbzero, ~
        \bbS\bbone \geq \bbone
    \right\}.
\label{eq:valid_SA}}
Here, $S_{ij}\neq 0$ represents the weight of the edge $(i,j)$, $\bbS$ has zero-valued diagonal entries as we consider no self-loops, and $\bbS\bbone \geq \bbone$ ensures that every node has at least one edge.
We may alternatively consider $\bbS = \bbL$, which can be considered a discretization of the Laplace-Beltrami operator~\citep{ting2010AnalysisConvergenceGraph}.
The set of graph Laplacian matrices is defined as
\alna{
    \SL
    &~:=~&
    \left\{
        \bbS \in \reals^{N\times N}
        \big|~\!
        \bbS = \bbS^\top, ~
        \bbS^+ \leq \bbzero, ~
        \bbS\bbone = \bbzero, ~
        \diag(\bbS) \geq \bbone
    \right\},
\label{eq:valid_SL}}
so if $\bbS \in \SL$, then there exists some $\bbA \in \SA$ such that $\bbS = \diag(\bbd) - \bbA$, where $\bbd = \bbA\bbone = \diag(\bbS)$.
The majority of our analyses will be demonstrated with $\bbS \in \SA$ as the adjacency matrix for simplicity, but for each result we provide alternative discussions for $\bbS \in \SL$ as the graph Laplacian in Appendix~\ref{app:Lapl}.
Because $\ccalG$ is undirected, $\bbS$ can be diagonalized as $\bbS = \bbV \bbLambda \bbV^\top$, where $\bblambda = \diag(\bbLambda)$ is the ordered vector of eigenvalues, or \emph{graph frequencies}, of $\bbS$, while $\bbV = [\bbv_1,\dots,\bbv_N]$ denotes the eigenvectors of $\bbS$.

% Graph signal processing basics.
The field of GSP aims to process and analyze signals on graphs~\citep{sandryhaila2013discretesignalprocessing,shuman2013emergingfieldsignal,djuric2018cooperativegraphsignal}.
We model graph signals $\bbx\in\reals^N$ as real-valued vectors observed on $\ccalG$, with $x_i$ as the signal value at the $i$-th node.
Then, we interpret the matrix-vector product $\bbS^k \bbx$ as the $k$-hop shift of $\bbx$ over the graph $\ccalG$, where the node signal $x_i$ is propagated over the $k$-hop neighborhood of node $i$.
% Clearly, shifts are commutative, that is, $\bbx = \bbS^{k}\bbS^{-k}\bbx$.
If we collect multiple weighted shifts of graph signals, we naturally arrive at the definition of \emph{linear graph filters} $\bbH(\bbS) = \sum_{k=0}^{\infty} h_k \bbS^k$ as polynomials of $\bbS$, where $\bbH(\bbS)\bbx$ sums weighted diffusions of the input signal $\bbx$ over the graph $\ccalG$ with respect to the GSO $\bbS$~\citep{sandryhaila2013discretesignalprocessing}.
Linear graph filtering intuitively models many real-world processes on graphs, including heat diffusion and opinion spread~\citep{zhu2020NetworkInferenceConsensus,thanou2017LearningHeatDiffusion}, and they form the foundation of graph convolutional networks~\citep{gama2019ConvolutionalNeuralNetwork}.
% wang2021DissectingDiffusionProcess
% Such a filtering operation commutes with shifts, that is, $\bbH(\bbS)\bbS^k\bbx = \bbS^k \bbH(\bbS) \bbx$.
% Under normal GSOs, this also yields shared eigenvectors between $\bbS$ and $\bbH(\bbS)$~\citep{marques2017stationarygraphprocesses}.

% Stationary graph signals.
Defining linear graph filters leads to the notion of stationary graph signals~\citep{marques2017stationarygraphprocesses,perraudin2017stationarysignalprocessing,girault2015translationongraphs}.
In particular, we assume our data are stationary on $\ccalG$ with respect to the GSO $\bbS$, where each observation can be written as an instantiation of the stochastic output of the linear graph filter $\bbH(\bbS)$ excited by white input $\bbw \in \reals^N$ such that $\mbE[\bbw] = \bbzero$ and $\bbE[\bbw\bbw^\top] = \bbI$.
% Stationary graph signals are commutative with respect to shifts on the graph, that is, shifting one direction then back is the same.
The covariance matrix of the stationary graph signal $\bbx = \bbH(\bbS)\bbw$ is $\bbC = \mbE[\bbx\bbx^\top] = \bbH(\bbS)\bbH(\bbS)^\top = \bbH(\bbS)^2$.
Thus, $\bbC$ and $\bbS$ share the same eigenvectors~\citep{marques2017stationarygraphprocesses,segarra2017networktopologyinference}.
A critical consequence of the shared eigenbasis is that $\bbC\bbS = \bbS\bbC$, a fact commonly exploited for recovering networks from nodal data~\citep{buciulea2025polynomialgraphicallasso,shafipour2020OnlineTopologyInference}.

We are further interested in promoting fairness in graph connections, where nodes are partitioned based on sensitive information, and connections ought not to depend on the resultant subpopulations~\citep{feldman2015certifyingremovingdisparate,hardt2016equalityopportunitysupervised}.
In particular, each node is associated with one of $G$ groups.
For the $g$-th group, membership is denoted by the indicator vector $\bbz^{(g)} \in \{0,1\}^N$, where $z_i^{(g)} = 1$ if and only if node $i$ belongs to group $g$.
We collect all group labels in the indicator matrix $\bbZ = [\bbz^{(1)},\dots,\bbz^{(G)}] \in \{0,1\}^{N \times G}$.
Groups are non-overlapping, that is, $\sum_{g=1}^G Z_{ig} = 1$ for every $i\in[N]$, where each node belongs to exactly one group.
We represent the number of nodes in each group as $N_g = \sum_{i=1}^N Z_{ig}$, and because groups are non-overlapping, $N = \sum_{g=1}^G N_g$.
Observe that we may collect all group sizes in the vector $\bbZ^\top\bbone = [N_1,\dots,N_G]^\top$.
We let $\Nmin = \min_{g} N_g$ and $\Nmax = \max_g N_g$ denote the sizes of the smallest and largest groups, respectively.
Finally, let $\bbd^{(g)} := \bbd \circ \bbz^{(g)}$ denote the masked degree vector containing only the degrees for nodes in group $g \in [G]$.
% =================================================
% %%%%%%%%%%%%%%%%%%%%%%%%%%%%%%%%%%%%%%%%%%%%%%%%%%%%%%%%%%%%%%

% %%%%%%%%%%%%%%%%%%%%%%%%%%%%%%%%%%%%%%%%%%%%%%%%%%%%%%%%%%%%%%
\section{Measuring Bias for Graph Estimation}
\label{s:metric}

Our goal is to estimate the GSO $\bbS$ of a graph such that its edges are not dependent on nodal groups.
We thus formalize unbiased connectivity by presenting dyadic notions of fairness.
In particular, we characterize topological bias for both group and individual fairness.
Furthermore, we present a novel perspective by measuring bias in the graph frequency domain, a view that has had very limited consideration~\citep{luo2024FUGNNHarmonizingFairness,luo2024FairGTFairnessawareGraph}.

% =================================================
\subsection{Topological Group Fairness}
\label{ss:group_fairness}

We consider a graph with the GSO $\bbS$ to satisfy \emph{group fairness} if the distribution of the edge between any two nodes $i,j\in\ccalV$ is independent of their groups, where we apply the definition from~\citet{navarro2024FairGLASSO,navarro2024MitigatingSubpopulationBias,dong2023FairnessGraphMining},
\alna{
    \mbP\left[
        S_{ij} \big| z_i^{(g)} = z_j^{(g')} = 1 
    \right]
    =
    \mbP\left[
        S_{ij} \big| z_i^{(h)} = z_j^{(h')} = 1 
    \right]
    {\rm~for~all~}
    g,g',h,h'\in[G].
\nonumber}
By the symmetry of $\bbS$, the following condition is equivalent
\alna{
    \mbP\left[
        S_{ij} \big| z_i^{(g)} = z_j^{(g)} = 1
    \right]
    =
    \mbP\left[
        S_{ij} \big| z_i^{(g)} = z_j^{(h)} = 1
    \right]
    {\rm~for~all~}
    g,h\in[G].
\label{eq:dp_group}}
Defining topological fairness via~\eqref{eq:dp_group} extends the notion of \emph{demographic parity (DP)} to the dyadic setting, where DP requires that the outcome for any entity be independent of its group membership~\citep{feldman2015certifyingremovingdisparate}.
In particular, if~\eqref{eq:dp_group} holds, then the distribution of any edge is invariant to nodal groups.
Dyadic DP is the primary goal for fairness on graphs, natural for link prediction~\citep{subramonian2024NetworkedinequalityPreferential,beutel2019FairnessRecommendationRanking} and highly relevant for social network analysis~\citep{saxena2024fairsna,yang2024YourNeighborMatters}.
Many past works define fairness similarly to~\eqref{eq:dp_group}~\citep{yang2022ObtainingDyadicFairness,li2021dyadicfairnessExploring,liu2024Promotingfairnesslink,buyl2021KLDivergenceGraphModel}.
% kose2021FairnessAwareNodeRepresentation
% kose2022FairnessawareAdaptiveNetwork
However, most consider a weaker definition, only requiring independence for edges connecting nodes in the same group versus two different groups, which does not ensure that edge distributions are invariant to group labels; for example, under this definition we may still have $\mbP[S_{ij} | z_i^{(g)}=z_j^{(g)}=1] \neq \mbP[S_{ij} | z_i^{(h)} = z_j^{(h)} = 1]$ for some $g\neq h$.
We instead align with those that prohibit preferences toward any group pair~\citep{pal2024FairLinkPrediction,rahman2019FairwalkFairGraph,navarro2024FairGLASSO,navarro2024MitigatingSubpopulationBias}.
While we consider unsigned graphs in~\eqref{eq:valid_SA} and~\eqref{eq:valid_SL}, \eqref{eq:dp_group} does not necessitate unsigned edges~\citep{navarro2024FairGLASSO,saxena2024fairsna}, whereas a large number of previous works consider not only unsigned but unweighted edges.

% -------------------------
\subsubsection{Group-level Bias in Graph Spatial Domain}
\label{sss:group_bias_spatial}

To assess violation of~\eqref{eq:dp_group} in practice, we measure differences in the expected edge weights connecting nodes from each pair of groups, defining our group-wise dyadic DP metric as
\alna{
    \Rgr(\bbS)
    &~=~&
    \frac{1}{G^2-G}
    \sum_{g\neq h}
    \left(
        \frac{ \bbz^{(g)\top} \bbS^+ \bbz^{(g)} }{ N_g^2-N_g }
        -
        \frac{ \bbz^{(g)\top} \bbS^+ \bbz^{(h)} }{ N_g N_h }
    \right)^2,
\label{eq:Rgr}}
where we recall that $\bbS^+$ is a version of $\bbS$ with the diagonal entries set to zero. In words, $\Rgr(\bbS)$ measures the squared difference between the average edge connecting node pairs in the same group versus in different groups, ignoring the  diagonal entries.

% Compare to bias metrics for fair graph ML
As with our definition of group fairness in~\eqref{eq:dp_group}, similar metrics to $\Rgr(\bbS)$ in~\eqref{eq:Rgr} have been applied in other graph-based works to measure the violation of dyadic DP.
Note that $\Rgr(\bbS)$ was considered in~\citep{navarro2024FairGLASSO}, which is a modification of the group-wise metric in~\citep{navarro2024MitigatingSubpopulationBias}.
In terms of dyadic fairness, $\Rgr(\bbS)$ aligns with metrics that balance edge probabilities or link prediction scores across group pairs~\citep{li2022FairLPFairLink,yang2022ObtainingDyadicFairness,liu2024Promotingfairnesslink,luo2023Crosslinksmatterlink,khajehnejad2022CrossWalkFairnessEnhancedNode}. 
Moreover, some measures of bias in graph clustering tasks can be interpreted as quantifying topological bias~\citep{liu2023DualNodeEdge,moorthy2025EnsuringFairnessSpectral}.
Finally, while conceptually similar, some dyadic bias metrics are specific to link prediction by promoting equitable prediction accuracy, which requires known ground truth graph connectivity that we do not have for the task of graph estimation~\citep{pal2024FairLinkPrediction,spinelli2023Dropedgesadapt,saxena2022HMEIICTFairnessawarelink}.
% -------------------------

% -------------------------
\subsubsection{Group-level Bias in Graph Spectral Domain}
\label{sss:group_bias_spectral}

% Bias in terms of eigenvalues for known spectral templates
As we consider undirected edges, our GSO is diagonalizable $\bbS=\bbV\bbLambda\bbV^\top$.
Therefore, we also propose to investigate bias in terms of the spectrum of $\bbS$, which captures relevant information about graph structure and the behavior of graph signals~\citep{dabush2024VerifyingSmoothnessGraph,nt2021RevisitingGraphNeural}.
First, let $\tbz^{(g)} := \bbV^\top \bbz^{(g)}$ denote the \emph{frequency response} of the group indicator vector $\bbz^{(g)}$ in terms of $\bbS$ for every $g\in[G]$, which indicates how the $g$-th group is distributed throughout the graph.
For example, if $\bbS \in \SL$ represents the graph Laplacian and group $g$ tends to have more \emph{within-group connections}, then $\tbz^{(g)}$ has a higher concentration at \emph{lower frequencies}.
We can then equivalently express $\Rgr(\bbS)$ from~\eqref{eq:Rgr} as
\alna{
    \Rgr(\bbS)
    &~=~&
    \frac{1}{G^2-G}
    \sum_{g\neq h}
    \left(
        \frac{ \bbz^{(g)\top} (\bbS - \bbS^-) \bbz^{(g)} }{ N_g^2-N_g }
        -
        \frac{ \bbz^{(g)\top} \bbS \bbz^{(h)} }{ N_g N_h }
    \right)^2
&\nonumber\\&
    &~=~&
    \frac{1}{G^2-G}
    \sum_{g\neq h}
    \left(
        \frac{ \tbz^{(g)\top} \bbLambda \tbz^{(g)} }{ N_g^2-N_g }
        -
        \frac{ \tbz^{(g)\top} \bbLambda \tbz^{(h)} }{ N_g N_h }
        -
        \frac{ \bbz^{(g)\top} \diag(\bbS) }{ N_g^2-N_g }
    \right)^2
&\nonumber\\&
    &~=~&
    \frac{1}{G^2-G}
    \sum_{g\neq h}
    \left[
        \bblambda^\top
        \left(
            \frac{ \tbz^{(g)} \circ \tbz^{(g)} }{ N_g^2-N_g }
            -
            \frac{ \tbz^{(g)} \circ \tbz^{(h)} }{ N_g N_h }
            -
            \frac{ (\bbV \circ \bbV)^\top \bbz^{(g)} }{ N_g^2-N_g }
        \right)
    \right]^2.
\label{eq:Rgr_spec_A}}
With some abuse of notation, if $\bbV$ is known, we may write $\Rgr(\bbS) = \Rgr(\bblambda)$.
With~\eqref{eq:Rgr_spec_A}, we can interpret how the graph frequencies $\bblambda$ and the frequency responses $\tbz^{(g)}$ interact to produce bias in $\bbS$.
First, observe that if $\bbS \in \SA$, then $\bblambda^\top (\bbV\circ\bbV)^\top \bbz^{(g)} = \bbz^{(g)\top}\diag(\bbS) = \bbzero$.
In this case, for a given pair of distinct groups $g\neq h$, $\Rgr(\bbS)$ increases with the deviation between $\tbz^{(g)}$ and $\tbz^{(h)}$, with greater effect at higher magnitudes of $\tbz^{(g)}$ and frequencies $\bblambda$.
Thus, it is more critical for groups $g$ and $h$ to show similar behavior at higher frequencies when $\bbS \in \SA$.
Second, if $\bbS \in \SL$, then $\bblambda^\top(\bbV\circ\bbV)^\top \bbz^{(g)} = \bbone^\top \bbd^{(g)}$ sums the degrees of nodes in group $g$, so the difference between frequency responses $\tbz^{(g)}$ and $\tbz^{(h)}$ for $g\neq h$ must account for the differences in how well each group is connected throughout the graph.
In particular, we have that $\Rgr(\bbS) \approx 0$ if, for all $g,h\in[G]$ such that $g\neq h$,
\alna{
    (\bblambda \circ \tbz^{(g)})^\top \left( \frac{\tbz^{(g)}}{ N_g^2-N_g } - \frac{\tbz^{(h)} }{ N_g N_h } \right)
    =
    \sum_{i=1}^N
    \frac{\lambda_i \tilde{z}^{(g)}_i}{N_g}
    \left(
        \frac{ \tilde{z}^{(g)}_i }{ N_g-1 }
        -
        \frac{ \tilde{z}^{(h)}_i }{ N_h }
    \right)
    \approx
    \frac{ \bbone^\top\bbd^{(g)} }{ N_g^2-N_g }.
\label{eq:Rgr_spec_interp}}
% for all $g,h\in [G]$ such that $g\neq h$.
Observe that as $\tilde{z}_i^{(g)}$ and $\tilde{z}_i^{(h)}$ increasingly differ, particularly at high frequencies $\lambda_i$, $\bbone^\top\bbd^{(g)}$ must also increase.
More intuitively, even if group $g$ shows greater across-group connectivity than group $h$, we may still have $\Rgr(\bbS)\approx 0$ if group $g$ is densely connected.
Thus, \eqref{eq:Rgr_spec_interp} shows that differences in edge densities across groups can account for discrepancies in within- versus across-group connectivity patterns.
% Thus, \eqref{eq:Rgr_spec_interp} shows that differences in how many edges connect to each group can account for differences in within- versus across-group connectivity patterns.
This is expected since minimizing $\Rgr(\bbS)$ balances the \emph{average} edge weights across all pairs of groups.
However, the left-hand side of~\eqref{eq:Rgr_spec_interp} can be negative if any pair of densely connected groups differs too greatly at high frequencies, and then we cannot achieve $\Rgr(\bbS)\approx 0$ since the right-hand side of~\eqref{eq:Rgr_spec_interp} is strictly positive.
% -------------------------
% =================================================

% =================================================
\subsection{Topological Individual Fairness}
\label{ss:indiv_fairness}

While group fairness from~\eqref{eq:dp_group} is desirable, we may require a stricter, node-wise condition for fair connections by ensuring that each node has an equal likelihood of connecting to every group.
In particular, this yields a node-wise notion of dyadic DP~\citep{navarro2024FairGLASSO,navarro2024MitigatingSubpopulationBias}
\alna{
    \mbP\left[
        S_{ij} \big| z_j^{(g)} = 1
    \right]
    =
    \mbP\left[
        S_{ij} \big| z_j^{(h)} = 1
    \right]
    {\rm~for~all~}
    g,h\in[G],
    i\in[N].
\label{eq:dp_node}}
Observe that node-wise DP in~\eqref{eq:dp_node} differs from the group-wise definition in~\eqref{eq:dp_group} in that the condition for fair edges in~\eqref{eq:dp_node} for a given node $i \in [N]$ does not depend on its own group.
This concept is more in line with definitions of fairness that require equitable outcomes on a node-by-node basis~\citep{rahman2019FairwalkFairGraph,current2022FairEGMFairLink,chen2022GraphLearningLocalized,liu2023GeneralizedDegreeFairness}, also known as individual fairness~\citep{dwork2012fairnessthroughawareness}.
However, most node-wise, individual fairness notions focus on how an external model treats or represents nodes~\citep{dong2021IndividualFairnessGraph,kang2020InFoRMIndividualFairness,sium2024IndividualFairnessGraphs,song2022GUIDEGroupEquality,tsioutsiouliklis2021FairnessAwarePageRank}, while our definition~\eqref{eq:dp_node} is more related to those that consider unbiased connections~\citep{kose2024FairnessAwareOptimalGraph,kose2024FilteringRewiringBias}.
% salganik2022AnalyzingEffectSampling
In fact, the definition of individually fair balanced node clusters by~\citet{gupta2022Consistencyconstrainedspectral} is analogous to~\eqref{eq:dp_node} if node clusters are defined by group membership.
% -------------------------
\subsubsection{Node-level Bias in Graph Spatial Domain}
\label{sss:node_bias_spatial}

Similarly to $\Rgr$, we consider the following node-wise DP metric for~\eqref{eq:dp_node}
\alna{
    \Rno(\bbS)
    &~=~&
    \frac{1}{GN}
    \sum_{g=1}^G
    \sum_{i=1}^N
    \left(
        \frac{ [\bbS^+ \bbz^{(g)}]_i }{ N_g }
        -
        \sum_{h\neq g}
        \frac{ [\bbS^+ \bbz^{(h)}]_i }{ N_h (G-1) }
    \right)^2,
\label{eq:Rno}}
which measures any imbalance in how each node connects across groups.
To see this, observe that for terms in~\eqref{eq:Rno} corresponding to the $i$-th node, $\Rno(\bbS)$ sums differences in the average edge weights connecting node $i$ to different groups.
This metric serves as a measure of bias for individual fairness, where $\Rno(\bbS)=0$ only when the edge weights for each node are equally distributed across all groups.
This differs from metrics that promote similar node-level treatment based on centrality or embedding similarity~\citep{lahoti2019Operationalizingindividualfairness,jia2024Aligningrelationallearning,wang2024IndividualFairnessGroup}.
% jia2023PromotingfairnessGNNs
Indeed, $\Rno(\bbS)$ is closely related to measuring the linear correlation between group membership and neighborhoods of nodes~\citep{kose2024FairnessAwareOptimalGraph,kose2024FilteringRewiringBias}, which was adapted by \citet{navarro2024MitigatingSubpopulationBias} to yield~\eqref{eq:Rno} to ensure that nodes do not show preferences for connecting to certain groups.
Other analogous concepts balance connections at a node-by-node level, such as equitable transition probabilities across groups for random walks on graphs~\citep{wang2023FG2ANFairnessAwareGraph,wang2024AdvancingGraphCounterfactual,arnaiz-rodriguez2025StructuralGroupUnfairness}.
For example, the neighborhood fairness metric by~\citet{lee2025DisentanglingAmplifyingDebiasinga} measures group-wise entropy of the neighborhood of each node, indirectly encouraging more diversity in node-wise connections.
% To mitigate degree bias, we may consider normalizing the GSO by degree to impose stronger penalties on nodes of higher degree, that is, $\Rno(\bbD^{-1}\bbS)$ for $\bbD = \diag(\bbS^+\bbone)$~\citep{liu2023GeneralizedDegreeFairness,kang2022RawlsGCNRawlsianDifference}.
% -------------------------

% -------------------------
\subsubsection{Node-level Bias in Graph Spectral Domain}
\label{sss:node_bias_spectral}

Similar to $\Rgr(\bbS)$ in Section~\ref{sss:group_bias_spectral}, we can express the node-wise bias $\Rno(\bbS)$ with respect to the spectrum of $\bbS$.
As before, we let $\tbz^{(g)} = \bbV^\top \bbz^{(g)}$ be the frequency response of the group indicator vector $\bbz^{(g)}$ for each $g\in[G]$, and we write $\Rno(\bbS)$ from~\eqref{eq:Rno} as
\alna{
    \Rno(\bbS)
    % &~=~&
    &\,=\,&
    \frac{1}{GN(G-1)^2}
    \sum_{g=1}^G
    \left\| 
        \bbS^+
        \left[
            \sum_{h \neq g}
            \frac{ \bbz^{(g)} }{ N_g }
            -
            \frac{ \bbz^{(h)} }{ N_h }
        \right]
    \right\|_2^2
&\nonumber\\&
    % &~=~&
    &\,=\,&
    \frac{1}{GN(G-1)^2}
    \sum_{g=1}^G
    \left\| 
    \big( \bbLambda \bbV^\top - \bbV^\top\diag( (\bbV\circ\bbV) \bblambda ) \big)
        \left[
            \sum_{h \neq g}
            \frac{ \bbz^{(g)} }{ N_g }
            -
            \frac{ \bbz^{(h)} }{ N_h }
        \right]
    \right\|_2^2
&\nonumber\\&
    &\,=\,&
    \frac{1}{GN(G-1)^2}
    \sum_{g=1}^G
    \left\| 
        \sum_{h \neq g}
        \bblambda^\top
        \!\!
        \left[
            \diag \! \left(
                \!
                \frac{ \tbz^{(g)} }{ N_g } - \frac{ \tbz^{(h)} }{ N_h }
                \!
            \right)
            \!
            -
            (\bbV\circ\bbV)^\top
            \diag \! \left(
                \!
                \frac{ \bbz^{(g)} }{ N_g } - \frac{ \bbz^{(h)} }{ N_h }
                \!
            \right)
            \!
            \bbV
        \right]
        \!
    \right\|_2^2.
\label{eq:Rno_spec_A}}
% \red{with $\circ$ denoting the Hadamard product.}
We again use the notation $\Rno(\bbS) = \Rno(\bblambda)$ when the eigenvectors $\bbV$ are known.
We first observe that for $\bbS \in \SA$, since $(\bbV\circ\bbV)\bblambda = \diag(\bbS)=\bbzero$, the bias $\Rno(\bbS)$ increases as the frequency response $\tbz^{(g)}$ differs from those of the remaining groups $\tbz^{(h)}$.
This effect increases for higher frequencies in $\bblambda$ as for~\eqref{eq:Rgr_spec_A}, but differently, $\Rno(\bbS)$ requires that entries of $\tbz^{(g)}$ and $\tbz^{(h)}$ exhibit similar behavior rather than only requiring similarity in aggregate.
% More specifically, a small $\sum_{h\neq g} \left( \sum_i \tbz^{(g)}_i-\tbz^{(h)}_i \right)^2$ for all $g\in[G]$ is sufficient for low $\Rgr(\bbS)$ and $\Rno(\bbS)$, but a small $\left( \sum_{h\neq g} \sum_i \tbz^{(g)}_i-\tbz^{(h)}_i \right)^2$ only guarantees a low $\Rgr(\bbS)$.
Moreover, for $\bbS \in \SL$, as $(\bbV\circ\bbV)\bblambda \circ \bbz^{(g)} = \diag(\bbS)\circ\bbz^{(g)} = \bbd^{(g)}$ represents group-wise node degrees, a small $\Rno(\bbS)$ requires that differences in frequency responses be similar to the frequency response of differences in node degrees across groups, that is, $\Rno(\bbS) \approx 0$ if
\alna{
    \bblambda \circ \left(\frac{\tbz^{(g)}}{N_g} - \frac{\tbz^{(h)}}{N_h}\right)
    &~\approx~&
    \bbV^\top \left( \frac{\bbd^{(g)}}{N_g} - \frac{\bbd^{(h)}}{N_h} \right)
\nonumber}
for all $g,h\in[G]$ such that $g\neq h$.
Thus, differences in $\tbz^{(g)}$ and $\tbz^{(h)}$ must account for discrepancies in the distributions of connections $\bbV^\top\bbd^{(g)}$ and $\bbV^\top\bbd^{(h)}$, which aligns with our goal of balancing how edges are distributed across groups for each node.
% -------------------------

\begin{myremark}\label{rem:alternative_bias_metrics}
    The proposed metrics $\Rgr$ and $\Rno$ extend classical notions of group fairness, particularly DP, to dyadic settings.
    Other definitions such as equalized odds (EO) or balanced performance can be similarly adapted~\citep{li2021dyadicfairnessExploring,singer2022EqGNNEqualizedNode,buyl2021KLDivergenceGraphModel,dong2023InterpretingUnfairnessGraph,liu2024Promotingfairnesslink,pal2024FairLinkPrediction,saxena2022HMEIICTFairnessawarelink}.
    % ,wu2025Bridgingfairnessuncertainty
    However, these metrics require prior knowledge of graph structure, which is unlikely for inferring a network.
    While it is often feasible to have a limited subset of known edges, the distribution of edges across group pairs is likely to be poorly represented by this known subset, yielding an inaccurate approximation of bias. 
    Thus, we promote DP for balancing edge connections across groups, which is known to be a prominent source of bias in GNNs and social network analysis~\citep{saxena2024fairsna,dong2022EDITSModelingMitigating}.
    % wu2021LearningFairRepresentations
    One limitation is that these dyadic extensions of DP, EO, and other definitions do not directly account for bias beyond immediate neighborhoods~\citep{han2024MarginalNodesMatter,zhang2023PrerequisitedrivenFairClustering,wang2023FG2ANFairnessAwareGraph}.
    % kose2021FairnessAwareNodeRepresentation
    Since measuring the fairness of connections within a $k$-hop neighborhood requires powers of the GSO $\bbS^k$~\citep{jalali2020informationunfairnesssocial}, we leave this for future work.
\end{myremark}
% % =================================================
% %%%%%%%%%%%%%%%%%%%%%%%%%%%%%%%%%%%%%%%%%%%%%%%%%%%%%%%%%%%%%%

% %%%%%%%%%%%%%%%%%%%%%%%%%%%%%%%%%%%%%%%%%%%%%%%%%%%%%%%%%%%%%%
\section{Fair Network Topology Inference}
\label{s:method}

Next, we consider the task of recovering an unknown graph $\ccalG$ such that our estimate has fair connections in terms of either group fairness in~\eqref{eq:dp_group} or individual fairness in~\eqref{eq:dp_node}.
To this end, we infer the GSO $\bbS$ of $\ccalG$ from a data matrix $\bbX := [\bbx_1,\dots,\bbx_M] \in \reals^{N\times M}$ whose columns comprise $M$ graph signals that are stationary on $\ccalG$ with respect to $\bbS$.
As in Section~\ref{ss:gsp}, stationarity implies that the graph signal covariance matrix $\bbC$ and the GSO $\bbS$ share the same eigenvectors $\bbV$.
The eigenvectors of $\bbS$ following the \emph{spectral template} $\bbV$ given by $\bbC$ implies that $\bbC\bbS = \bbS\bbC$, which is a more tractable condition.
Since infinitely many valid GSOs share the eigenvectors of and thus commute with $\bbC$~\citep{segarra2017networktopologyinference}, we address the inference of $\bbS$ by obtaining a target GSO with desirable structural properties~\citep{rey2022enhancedgraphlearningschemes,zhang2025NetworkGamesInduced}.
In particular, our goal is to obtain the sparsest GSO satisfying the conditions of graph-stationarity~\citep{segarra2017networktopologyinference,marques2017stationarygraphprocesses}, that is, we let our target GSO $\bbS^*$ satisfy
\alna{
    \bbS^* \in~
    &\argmin_{\bbS \in \ccalS}& ~~
    \| \bbS^+ \|_0
    ~~{\rm s.t.}~~
    \bbC\bbS = \bbS\bbC,
\label{eq:opt_C_target}}
where $\ccalS$ enforces valid GSO, such as adjacency matrices with $\ccalS = \SA$ in~\eqref{eq:valid_SA} or graph Laplacians with $\ccalS = \SL$ in~\eqref{eq:valid_SL}.
Because $\bbC$ and any $\bbS \in \ccalS$ are symmetric, they are both diagonalizable, so the equality constraint in~\eqref{eq:opt_C_target} enforces shared eigenvectors between the target GSO $\bbS^*$ and the covariance matrix $\bbC$, while the $\ell_0$ (pseudo)norm encourages minimal nonzero entries in $\bbS^*$.
We thus define our target $\bbS^*$ as perfectly describing stationary graph signals via $\bbC \bbS^* = \bbS^* \bbC$ while having the most parsimonious representation, allowing for interpretable analyses and reduced downstream computational complexity.

% =================================================
% \subsection{FairSpecTemp via commutativity}
\subsection{FairSpecTemp via Convex Optimization}
\label{ss:fst_commut}

While our goal is to obtain $\bbS^*$, we encounter two issues in practice: first, we typically do not have access to the true covariance matrix $\bbC$, and second, we require a GSO that is fair, which does not violate a level of permissible bias in our estimated graph.
To address these concerns, we instead solve an approximation of the problem~\eqref{eq:opt_C_target} given the sample covariance $\hbC := \frac{1}{M}\bbX\bbX^\top$ computed from the data matrix $\bbX\in\reals^{N\times M}$.
We present our approach \emph{Fair Spectral Templates (FairSpecTemp)} as the following optimization problem
\alna{
    \bbSCp \in
    \argmin_{\bbS \in \ccalS} ~~
    \| \bbS^+ \|_0
    ~~{\rm s.t.}~~
    \| \hbC\bbS - \bbS\hbC \|_F \leq \epsilon, ~~
    R(\bbS) \leq \tau^2.
\label{eq:opt_C_l0}}
Observe that $\tau\geq 0$ bounds the permitted structural bias measured by $R$, where we emphasize our proposed bias metrics $R \in \{ \Rgr, \Rno \}$; however, the constraint in~\eqref{eq:opt_C_l0} is adaptable to other metrics such as those in~\citep{wang2023FG2ANFairnessAwareGraph}.
Henceforth, we write $R$ to denote either $\Rgr$ or $\Rno$, and we specify the particular metric when needed.
The first constraint in~\eqref{eq:opt_C_l0} encourages $\bbSCp$ to exhibit graph stationarity given $\hbC$ by requiring that $\bbSCp$ and $\hbC$ approximately commute~\citep{segarra2017networktopologyinference,navarro2022jointinferencemultiple}.
% Because we obtain $\bbSCp$ via commutativity, we introduce the shorthand $\FSTC$ for FairSpecTemp in~\eqref{eq:opt_C_l0}.

In practice, we instead solve a convex relaxation of~\eqref{eq:opt_C_l0}
\alna{
    \hbSC \in
    \argmin_{\bbS \in \ccalS} ~~
    \| \bbS^+ \|_1
    ~~{\rm s.t.}~~
    \| \hbC\bbS - \bbS\hbC \|_F \leq \epsilon, ~~
    R(\bbS) \leq \tau^2,
\label{eq:opt_C_l1}}
where~\eqref{eq:opt_C_l1} differs from~\eqref{eq:opt_C_l0} by replacing the non-convex $\ell_0$ norm with a convex $\ell_1$ norm.
While~\eqref{eq:opt_C_l1} tends to be amenable to faster, simpler algorithms and yields uniqueness guarantees due to its convexity, it still involves a relaxation of~\eqref{eq:opt_C_l0}.
However, we next provide sufficient conditions under which problems~\eqref{eq:opt_C_l0} and~\eqref{eq:opt_C_l1} yield the same solution~\citep{zhang2016OneConditionSolution,navarro2022jointinferencemultiple}.
To this end, we first consider vectorized versions of the constraints in problems~\eqref{eq:opt_C_l0} and~\eqref{eq:opt_C_l1}.
This allows us to apply guarantees for the equivalence of solutions to the problems in vectorized form~\citep{zhang2016OneConditionSolution}, thus showing that equivalent solutions exist for the original matrix-based problems~\eqref{eq:opt_C_l0} and~\eqref{eq:opt_C_l1}.

First, we let $\ccalD$, $\ccalL$ and $\ccalU$ denote index sets of diagonal, lower triangular, and upper triangular entries of any $N\times N$ matrix, that is,
\alna{
    \ccalD := \{ N(i-1) + i \, \big| ~\! i\in[N] \},
&\nonumber\\&
    \ccalL := \{ N(i-1) + j \, \big| ~\! i,j\in[N], i<j \},
&\nonumber\\&
    \ccalU := \{ N(j-1) + i \, \big| ~\! i,j\in[N], i<j \}.
\nonumber}
% define U here
Then, our vector-valued optimization variable of interest is $\bbs = \vect(\bbS)_{\ccalL} \in \reals^{N(N-1)/2}$, containing the lower triangular entries of $\bbS$, which completely characterizes the GSO $\bbS$ for $\ccalS \in \{ \SA, \SL \}$.
To see this, observe that since $\bbS$ is symmetric, $\bbs = \vect(\bbS)_{\ccalL} = \vect(\bbS)_{\ccalU}$.
Furthermore, if we let $\bbU := \bbI_{\cdot,\ccalL}+\bbI_{\cdot,\ccalU}$ with $\bbI$ as the $N^2 \times N^2$ identity matrix, then $\bbB \vect(\bbS) = \bbB\bbU\bbs$ for any $\bbB$ with $N^2$ columns.
If $\ccalS=\SA$, $\vect(\bbS)_{\ccalD}=\bbzero$, and if $\ccalS=\SL$, $\vect(\bbS)_{\ccalD} = -\bbE\bbs$ for $\bbE := (\bbone^\top \otimes \bbI)\bbU$.
Thus, we can recover any entry of $\bbS$ from $\bbs$ for $\SA$ or $\SL$.

For the remainder of this section, we show our theoretical result for adjacency matrices, so we set $\ccalS = \SA$.
However, a similar analysis for graph Laplacian GSOs, that is, $\ccalS=\SL$, is shown in Appendix~\ref{app:Lapl_convex}. Reminding that $\oplus$ stands for the Kronecker sum, we define the matrix $\hbSigma := (\hbC \oplus (-\hbC))\bbU$, so that $\| \hbC\bbS-\bbS\hbC \|_F = \| \hbSigma \bbs \|_2$.
We define a matrix $\bbRgr := [ \check{\bbr}^{(1,2)}_{\rm \scriptscriptstyle G}, \dots, \check{\bbr}^{(G-1,G)}_{\rm \scriptscriptstyle G} ]^\top \in \reals^{(G^2-G) \times (N^2-N)/2}$ such that $\Rgr(\bbS) = \| \bbRgr \bbs \|_2^2$, where the rows of $\bbRgr$ comprise the vectors
\alna{
    \check{\bbr}^{(g,h)}_{\rm \scriptscriptstyle G}
    &:=~&
    \frac{1}{\sqrt{ G^2-G }}
    \cdot
    \bbU^\top
    \left(
        \frac{ \bbz^{(g)} \otimes \bbz^{(g)} }{ N_g^2-N_g }
        -
        \frac{ \bbz^{(g)} \otimes \bbz^{(h)} }{ N_g N_h }
    \right)
    \in\reals^{ (N^2-N)/2 }
    \quad \forall ~ g,h\in[G], ~ g\neq h.
\nonumber}
Similarly, we define $\bbRno := [ \check{\bbR}_{\rm \scriptscriptstyle N}^{(1)}, \dots, \check{\bbR}_{\rm \scriptscriptstyle N}^{(G)} ]^\top \in \reals^{GN \times (N^2-N)/2}$ such that $\Rno(\bbS) = \| \bbRno\bbs \|_2^2$ by concatenating the following $G$ matrices
\alna{
    \check{\bbR}^{(g)}_{\rm \scriptscriptstyle N}
    &~:=~&
    \frac{1}{(G-1)\sqrt{GN}}
    \cdot
    \bbU^\top
    \left(
        \left[
            \sum_{h\neq g} 
            \frac{\bbz^{(g)}}{N_g} -  \frac{\bbz^{(h)}}{N_h}
        \right]
        \otimes \bbI
    \right)
    \in \reals^{ (N^2-N)/2 \times N }
    \quad \forall ~ g\in[G].
\nonumber}
For $R \in \{ \Rgr, \Rno \}$, we let $\bbR$ denote the corresponding matrix $\bbRgr$ or $\bbRno$, which we specify as needed.
We next share the following result guaranteeing that a solution to the non-convex problem~\eqref{eq:opt_C_l0} is the unique solution to the convex problem~\eqref{eq:opt_C_l1}.

% Theorem for convex relaxation
\begin{mytheorem}\label{thm:C_convex}
    % Let $\bbR = \bbRgr$ when $R = \Rgr$, otherwise $\bbR = \bbRno$ when $R = \Rno$. 
    If problems~\eqref{eq:opt_C_l0} and~\eqref{eq:opt_C_l1} are feasible for $\ccalS = \SA$, then for $\bbSCp$ as a solution to~\eqref{eq:opt_C_l0}, we have that $\hbSC = \bbSCp$ is the unique solution to~\eqref{eq:opt_C_l1} if
    \begin{itemize}[left= 15pt .. 24pt, itemsep=1pt]
    \im[(i)] The submatrix $\hbSigma_{\cdot,\ccalI}$ is full column rank, and
    \im[(ii)] There exists a constant $\psi > 0$ such that
    \alna{
        \normms{
            \left(
                \psi^{-2} \bbPhi^\top \bbPhi + \bbI_{\cdot,\bar{\ccalI}} \bbI_{\bar{\ccalI},\cdot}
            \right)^{-1}_{\bar{\ccalI},\ccalI} 
        }_{\infty}
        ~<~
        1,
    \label{eq:thm_C_cvx_cond}}
    \end{itemize}
    where $\ccalI = {\rm supp}(\bbs')$ for $\bbs' = \vect(\bbSCp)_{\ccalL}$, and $\bbPhi := [ \hbSigma^\top, \bbR^\top, \bbE^\top ]^\top$.
\end{mytheorem}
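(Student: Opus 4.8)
The plan is to pass to the vectorized variable $\bbs=\vect(\bbS)_{\ccalL}\in\reals^{N(N-1)/2}$ and run a dual-certificate (exact-recovery) argument in the spirit of \citet{zhang2016OneConditionSolution} and \citet{navarro2022jointinferencemultiple}. For $\ccalS=\SA$ the matrix $\bbS$ is symmetric with zero diagonal, so $\bbs$ carries all of $\bbS$, and $\bbs\geq\bbzero$, so the objective is the \emph{linear} functional $\|\bbS^+\|_1=2\,\bbone^\top\bbs$, while $\|\bbS^+\|_0=2\,\|\bbs\|_0$. Using $\|\hbC\bbS-\bbS\hbC\|_F=\|\hbSigma\bbs\|_2$, $R(\bbS)=\|\bbR\bbs\|_2^2$, and the equivalence $\bbS\in\SA\Leftrightarrow\{\bbs\geq\bbzero,\ \bbE\bbs\geq\bbone\}$, problems \eqref{eq:opt_C_l0} and \eqref{eq:opt_C_l1} become optimization over the \emph{same} closed convex set $\ccalF:=\{\bbs\geq\bbzero:\|\hbSigma\bbs\|_2\leq\epsilon,\ \|\bbR\bbs\|_2\leq\tau,\ \bbE\bbs\geq\bbone\}$ --- the former minimizing $\|\bbs\|_0$, the latter the linear $\bbone^\top\bbs$. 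It therefore suffices to prove that $\bbs':=\vect(\bbSCp)_{\ccalL}$, which is feasible with ${\rm supp}(\bbs')=\ccalI$, is the \emph{unique} minimizer of $\bbone^\top\bbs$ over $\ccalF$; the statement for the matrix problems then follows since $\bbs\mapsto\bbS$ is a bijection here (and the analogous bijection with $\vect(\bbS)_{\ccalD}=-\bbE\bbs$ handles $\ccalS=\SL$).

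Next I would certify optimality and uniqueness of $\bbs'$ via KKT-type multipliers. The constraints active at $\bbs'$ are some degree inequalities $\bbE_{k,\cdot}\bbs\geq1$, the nonnegativity constraints on $\bar{\ccalI}$, and (generically) the two conic constraints; working out the normal cone of $\ccalF$ at $\bbs'$ shows that stationarity takes the form $\bbone=\bbPhi^\top\bby+\bbmu$ with $\bbPhi=[\hbSigma^\top,\bbR^\top,\bbE^\top]^\top$, where the $\hbSigma$- and $\bbR$-blocks of $\bby$ are nonpositive multiples of $\hbSigma\bbs'$ and $\bbR\bbs'$, the $\bbE$-block is nonnegative and vanishes on inactive degree rows, and $\bbmu\geq\bbzero$ is supported on $\bar{\ccalI}$ (so $(\bbPhi^\top\bby)_{\ccalI}=\bbone_{\ccalI}$). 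Granting such multipliers with the \emph{strict} property $\bbmu_{\bar{\ccalI}}>\bbzero$, a one-line computation along $\bbdelta:=\bbs-\bbs'$ for feasible $\bbs$ gives $\bbone^\top\bbs-\bbone^\top\bbs'=\bbmu_{\bar{\ccalI}}^\top\bbs_{\bar{\ccalI}}+\bby^\top\bbPhi\bbdelta\geq0$, each contribution being nonnegative: $\bbmu_{\bar{\ccalI}}^\top\bbs_{\bar{\ccalI}}\geq0$ since $\bbs_{\bar{\ccalI}}\geq\bbzero$; the $\bbE$-term since $\bbE\bbs\geq\bbone$ with equality on the active rows at $\bbs'$; and the $\hbSigma$- and $\bbR$-terms by Cauchy--Schwarz together with $\|\hbSigma\bbs\|_2\leq\epsilon=\|\hbSigma\bbs'\|_2$ (constraint active) and the alignment of the multiplier. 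Moreover equality forces $\bbs_{\bar{\ccalI}}=\bbzero$ and $\hbSigma\bbdelta=\bbzero$; since $\bbdelta$ is then supported on $\ccalI$, condition (i) --- $\hbSigma_{\cdot,\ccalI}$ full column rank --- yields $\bbdelta=\bbzero$, i.e.\ $\bbs=\bbs'$. This is where condition (i) does its work.

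It remains to show condition (ii) guarantees the strict certificate. I would build $\bby$ by a ridge-regularized least-squares construction: among $\bby$ with $(\bbPhi^\top\bby)_{\ccalI}=\bbone_{\ccalI}$, take the one minimizing a $\psi$-weighted combination of $\|\bby\|_2^2$ and $\|(\bbPhi^\top\bby)_{\bar{\ccalI}}\|_2^2$. Condition (i) makes $\bbPhi^\top\bbPhi$ positive definite on the $\ccalI$-coordinate subspace, hence $\psi^{-2}\bbPhi^\top\bbPhi+\bbI_{\cdot,\bar{\ccalI}}\bbI_{\bar{\ccalI},\cdot}$ invertible and the construction well posed; a block-inverse computation then identifies $(\bbPhi^\top\bby)_{\bar{\ccalI}}$ with a signed copy of $(\psi^{-2}\bbPhi^\top\bbPhi+\bbI_{\cdot,\bar{\ccalI}}\bbI_{\bar{\ccalI},\cdot})^{-1}_{\bar{\ccalI},\ccalI}\,\bbone_{\ccalI}$. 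Since the induced $\infty$-norm and $\|\bbone_{\ccalI}\|_\infty=1$ give $\|(\bbPhi^\top\bby)_{\bar{\ccalI}}\|_\infty\leq\normms{(\psi^{-2}\bbPhi^\top\bbPhi+\bbI_{\cdot,\bar{\ccalI}}\bbI_{\bar{\ccalI},\cdot})^{-1}_{\bar{\ccalI},\ccalI}}_{\infty}$, the bound \eqref{eq:thm_C_cvx_cond} forces $\bbmu_{\bar{\ccalI}}=\bbone_{\bar{\ccalI}}-(\bbPhi^\top\bby)_{\bar{\ccalI}}>\bbzero$ entrywise, which is exactly the strict complementary slackness required above; the scalar $\psi>0$ is a free regularization weight (letting $\psi\to\infty$ recovers a classical min-norm / irrepresentability-type certificate), so one may pick the $\psi$ making \eqref{eq:thm_C_cvx_cond} easiest to verify. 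The verbatim argument for $\ccalS=\SL$ is deferred to Appendix~\ref{app:Lapl_convex}.

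The step I expect to be the main obstacle is reconciling the two second-order-cone constraints with the linear form of condition (ii): one must verify that the regularized certificate can be realized with the $\hbSigma$- and $\bbR$-blocks of $\bby$ genuinely \emph{aligned} with $\hbSigma\bbs'$ and $\bbR\bbs'$ (so the Cauchy--Schwarz bound in the perturbation argument is tight at $\bbs'$, and so $\hbSigma\bbdelta=\bbzero$ is actually forced at equality), rather than with an arbitrary $\bby$, and that the degree block carries the correct sign on exactly the active rows. Handling the degenerate case where a conic constraint is inactive at $\bbs'$ (its multiplier then vanishing, so the uniqueness argument must lean on the remaining constraints), and using feasibility of \eqref{eq:opt_C_l0}/\eqref{eq:opt_C_l1} only to guarantee $\ccalF\neq\emptyset$ and $\ccalI\neq\emptyset$, are minor points by comparison.
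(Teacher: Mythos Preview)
Your vectorization, the ridge-regularized certificate construction, and the role of condition (i) all match the paper. The difference is \emph{where} you apply the certificate, and that difference is exactly the gap you yourself flag.

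You attack the SOCP \eqref{eq:opt_C_l1} directly: you want KKT multipliers whose $\hbSigma$- and $\bbR$-blocks are aligned with $\hbSigma\bbs'$ and $\bbR\bbs'$ and whose $\bbE$-block is nonnegative on the right rows, and you then hope the ridge-regularized $\bby$ produced by condition~(ii) can be realized with that structure. As you note, there is no reason the least-squares $\bby$ respects the conic alignment, so the Cauchy--Schwarz step and the implication ``equality $\Rightarrow \hbSigma\bbdelta=\bbzero$'' are not justified. This is a real hole, not a technicality.

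The paper dissolves this obstacle by never doing KKT on the SOCP. For any $\hbs\in\hat{\Omega}$ it introduces the \emph{auxiliary equality-constrained} problem
\[
\tilde{\Omega}(\hbs)=\argmin_{\bbs}\ \|\bbs\|_1\quad\text{s.t.}\quad \bbPhi\bbs=\bbPhi\hbs,
\]
and observes that (a) any $\tbs\in\tilde{\Omega}(\hbs)$ is automatically feasible for \eqref{eq:vec_C_l1} (since $\bbPhi\tbs=\bbPhi\hbs$ reproduces $\hbSigma\hbs$, $\bbR\hbs$, $\bbE\hbs$ exactly, and the chain $\|\tbs\|_1\leq\|\hbs\|_1=\bbone^\top\bbE\hbs=\bbone^\top\bbE\tbs\leq\|\tbs\|_1$ forces $\tbs\geq\bbzero$), so $\hbs\in\tilde{\Omega}(\hbs)$. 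Because the constraint is now a \emph{linear equality}, the dual certificate only needs $\bby\in\mathrm{Im}(\bbPhi^\top)$ with $\bby_{\ccalI}=\mathrm{sign}(\bbs'_{\ccalI})$ and $\|\bby_{\bar{\ccalI}}\|_\infty<1$---no conic alignment, no sign pattern on the $\bbE$-block---and this is exactly what the ridge-regularized construction and \eqref{eq:thm_C_cvx_cond} deliver. Condition (i) then pins down uniqueness in $\tilde{\Omega}(\hbs)$, forcing $\hbs=\bbs'$ for every $\hbs\in\hat{\Omega}$.

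In short: the missing idea in your proposal is the passage to the linear auxiliary problem $\bbPhi\bbs=\bbPhi\hbs$. Once you make that move, the ``main obstacle'' you identify disappears, because the Lagrange multiplier for a linear equality is unconstrained in $\mathrm{Im}(\bbPhi^\top)$ and the ridge-regularized $\bby$ qualifies as is.
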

The proof is in Appendix~\ref{app:thm_C_convex}, which is based on that of~\citep[Theorem 1]{segarra2017networktopologyinference} and employs~\citep[Theorem 1]{zhang2016OneConditionSolution}.
Condition \textit{(ii)} guarantees that $\bbSCp$ is a solution to~\eqref{eq:opt_C_l1} by verifying the existence of a solution to the dual problem of~\eqref{eq:opt_C_l1}, ensuring the existence of its associated primal solution $\bbSCp$.
Moreover, condition \textit{(i)} guarantees that it is unique, as a full column rank $\hbSigma_{\cdot,\ccalI}$ requires that there can be no other solution minimizing the $\ell_1$ norm in~\eqref{eq:opt_C_l1}.
Thus, under the conditions of Theorem~\ref{thm:C_convex}, we may solve the convex problem~\eqref{eq:opt_C_l1}, which is also a solution to~\eqref{eq:opt_C_l0}.
As our goal is to obtain the sparsest graph satisfying the commutativity constraint, this result is theoretically satisfactory even if~\eqref{eq:opt_C_l0} has more than one solution.
In practice, there is likely to be a single ground truth graph to be obtained, so while we can guarantee that the unique solution to~\eqref{eq:opt_C_l1} also minimizes the $\ell_0$ norm, we cannot ensure that it approximates the ground truth GSO if~\eqref{eq:opt_C_l0} has more than one solution.
However, we empirically find that solving a convex relaxation of~\eqref{eq:opt_C_l0} performs well on both synthetic and real-data simulations in Section~\ref{s:results}.
% In practice, there is likely to be a single ground truth graph to be obtained, but we empirically find that solving a convex relaxation of~\eqref{eq:opt_C_l0} performs well on both synthetic and real-data simulations in Section~\ref{s:results}.
An analogous result to Theorem~\ref{thm:C_convex} can be found for $\ccalS = \SL$ in Appendix~\ref{app:Lapl_convex}.
% =================================================

% =================================================
\subsection{FairSpecTemp Performance Guarantees}
\label{ss:theory}

Our goal is to recover the target GSO $\bbS^*$ defined in~\eqref{eq:opt_C_target} by solving~\eqref{eq:opt_C_l0}; however, under the conditions of Theorem~\ref{thm:C_convex}, we can instead solve the convex relaxation~\eqref{eq:opt_C_l1} and still obtain the desired sparse estimate of $\bbS^*$.
Thus, this section presents our main results on the performance of FairSpecTemp in terms of error bounds between the estimate $\hbSC$ from~\eqref{eq:opt_C_l1} and the target GSO $\bbS^*$ from~\eqref{eq:opt_C_target}.
We again restrict to adjacency matrix GSOs $\ccalS = \SA$, but adapting the result for graph Laplacians $\ccalS = \SL$ is straightforward and only changes the error bounds up to a scaling, which we explain in Appendix~\ref{app:Lapl_error}.
We require the following assumption, analogous to those considered in~\citep{navarro2022jointinferencemultiple,segarra2017networktopologyinference}.

\begin{myassumption}\label{assump:error_bound}
    The sample covariance matrix $\hbC = \frac{1}{M} \bbX\bbX^\top$ is obtained from $M$ graph signals $\bbX = [\bbx_1,\dots,\bbx_M] \in\reals^{N\times M}$ as diffusions of zero-mean Gaussian white noise over the graph $\ccalG$ with GSO $\bbS^*$, where the $m$-th graph signal is defined by $\bbx_m = \bbH(\bbS^*)\bbw_m$ for graph filter $\bbH(\bbS^*)$ and $\bbw_m \sim \ccalN(\bbzero,\bbI)$.
    We let
    % There exists a constant $c_1 > 0$ such that
    \begin{itemize}[left= 16pt .. 26pt, itemsep=1pt]
    \im[(A1)]
        % $\log N = o( \min\{ R/(\log R)^2, R^{1/3} \} )$,
        $\log N = o( M^{1/3} )$,
    \im[(A2)]
        % $\epsilon \geq c_1 N\omega \sqrt{ \frac{\log N}{M} }$ for $\omega:= \max\{ \max_i C_{ii}, \max_i [\bbS^*\bbC\bbS^*]_{ii} \}$ and some $c_1>0$,
        $\epsilon \geq c_1 N\omega \sqrt{ \frac{\log N}{M} }$ for $\omega:= \max\{ \normm{\bbC^-}_{\infty}, \normm{ (\bbS^* \bbC \bbS^*)^- }_{\infty} \}$ and some $c_1>0$,
    \im[(A3)]
        $\hbSigma=(\hbC\oplus (-\hbC))\bbU$ is full column rank with a smallest singular value $\sigma_{\scriptscriptstyle\min}(\hbSigma)>0$, and
    \im[(A4)] $k \geq \sqrt{ \| \vect(\bbS^*)_{\ccalL} \|_0 }$ for some $k\geq 0$. 
    \end{itemize}
\end{myassumption}
We establish graph-stationarity, our setting of interest, by assuming our data are outputs of a linear graph filter applied to zero-mean white noise, as discussed in Section~\ref{ss:gsp}.
% Recall from Section~\ref{ss:gsp} that applying a linear graph filter to zero-mean white noise yields stationary graph signals on $\ccalG$ with respect to $\bbS$.
The remaining items let us connect~\eqref{eq:opt_C_l1} to performance by ensuring a sufficiently feasible task.
Assumption \textit{(A1)} imposes a reasonable rate of growth of the number of samples $M$ relative to the number of nodes $N$, while \textit{(A2)} allows us to characterize how estimation error increases with $N$ and decreases with $M$.
We require \textit{(A3)} to relate the minimization of the $\ell_1$ norm in the objective of~\eqref{eq:opt_C_l1} to the commutativity constraint, which yields the bound in estimation error.
Finally, observe that \textit{(A4)} merely introduces a definition for the sparsity of $\bbS^*$, as the choice of $k$ is arbitrary.
Our first main result characterizes the performance of FairSpecTemp when considering group fairness $\Rgr$.
% \red{Observe that~\eqref{eq:Rgr} requires that each group contain at least two nodes, that is, the smallest group size must satisfy $\Nmin \geq 2$.}

\begin{mytheorem}\label{thm:err_bnd_gr}
    % Under Assumption~\ref{assump:error_bound} and given $\hbSC$ as the solution to~\eqref{eq:opt_C_l1} for $R = \Rgr$,
    Under Assumption~\ref{assump:error_bound}, consider the GSO estimate $\hbSC \in \reals^{N\times N}$ as the solution to~\eqref{eq:opt_C_l1} with the group-wise bias $R = \Rgr$ bounded above by $\tau \geq 0$, where group membership is encoded in the indicator matrix $\bbZ \in \{0,1\}^{N\times G}$.
    % define Z, G, N, tau. refer to assumption 1 for sigma, k.
    Then, if the smallest group size satisfies $\Nmin \geq 2$, the $\ell_1$ error between $\hbSC$ and the target GSO $\bbS^*$ in~\eqref{eq:opt_C_target} is lower bounded by
    % describe constants, reintroduce estimated and target GSOs
    \alna{
        \| \hbSC - \bbS^* \|_1
        &~\geq~&
        \begin{Bmatrix}
            % (\Nmin - 1) \sqrt{G} (\sqrt{ \Rgr(\bbS^*) } - \tau),
            \frac{1}{2} \Nmin \sqrt{G} (\sqrt{ \Rgr(\bbS^*) } - \tau),
            &
            \Rgr(\bbS^*) > \tau^2
            \\
            0,
            &
            \Rgr(\bbS^*) \leq \tau^2
        \end{Bmatrix},
    \label{eq:err_low_gr}}
    and with probability at least $1 - e^{c_2 N}$ for some $c_2 > 0$, the error is upper bounded by
    \alna{
        \| \hbSC - \bbS^* \|_1
        ~\leq~
        \begin{Bmatrix}
            (\phi_1 + \phi_2) \epsilon
            + \phi_3 \sqrt{ \Rgr(\bbS^*) },
            &
            \Rgr(\bbS^*) > \tau^2
            \\
            \phi_1 \epsilon,
            &
            \Rgr(\bbS^*) \leq \tau^2
        \end{Bmatrix},
    \label{eq:err_upp_gr}}
    \vspace{-1em}
    \alna{
        \text{where}~~~~
        \quad
        \phi_1
        =
        \frac{ 4k (2 + k) }{ \sigma_{\scriptscriptstyle\min}(\hbSigma) },
        ~~~~~
        \phi_2
        =
        \frac{ 4 (1 + k) \| \bbZ^\top\bbone \|_2^2 }{ \sigma_{\scriptscriptstyle\min}(\hbSigma) \Nmin },
        ~~~~~
        \phi_3
        =
        2
        (1 + k) G \| \bbZ^\top\bbone \|_2^2,
        &&
        \qquad\quad~~
    &\nonumber\\[-.7cm]&
    % &\nonumber\\[.1cm]&
    %     \text{and~the~smallest~group~size~satisfies~}
    %     \Nmin \geq 2.
    %     &&
    \nonumber}
    and $k$, $\sigma_{\scriptscriptstyle\min}(\hbSigma)$, and $\epsilon$ are defined in Assumption~\ref{assump:error_bound}.
    % \draft{the smallest group size satisfies $\Nmin \geq 2$, and $k$, $\sigma_{\scriptscriptstyle\min}(\hbSigma)$, and $\epsilon$ are defined in Assumption~\ref{assump:error_bound}.}
\end{mytheorem}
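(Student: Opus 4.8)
The plan is to split on the two regimes $\Rgr(\bbS^*)>\tau^2$ and $\Rgr(\bbS^*)\le\tau^2$ and, in each, to argue the lower and upper estimates separately, passing to the vectorizations $\hat\bbs=\vect(\hbSC)_\ccalL$, $\bbs^*=\vect(\bbS^*)_\ccalL$ and using $\Rgr(\bbS)=\|\bbRgr\bbs\|_2^2$, $\|\hbC\bbS-\bbS\hbC\|_F=\|\hbSigma\bbs\|_2$, and the identity $\|\bbS\|_1=2\|\bbs\|_1$ valid for symmetric $\bbS\in\SA$ with zero diagonal. For the \emph{lower bound}, $\bbs\mapsto\sqrt{\Rgr(\bbS)}=\|\bbRgr\bbs\|_2$ is a seminorm, so the triangle inequality gives $\sqrt{\Rgr(\bbS^*)}-\sqrt{\Rgr(\hbSC)}\le\|\bbRgr(\bbs^*-\hat\bbs)\|_2$; feasibility of $\hbSC$ for~\eqref{eq:opt_C_l1} forces $\sqrt{\Rgr(\hbSC)}\le\tau$, so when $\Rgr(\bbS^*)>\tau^2$ we get $\sqrt{\Rgr(\bbS^*)}-\tau\le\|\bbRgr(\bbs^*-\hat\bbs)\|_2\le\|\bbRgr\|_{1\to 2}\|\bbs^*-\hat\bbs\|_1=\tfrac12\|\bbRgr\|_{1\to 2}\|\hbSC-\bbS^*\|_1$, where $\|\bbRgr\|_{1\to 2}$ is the largest $\ell_2$ norm among the columns of $\bbRgr$. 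A column is indexed by a node pair $\{p,q\}$, and from the explicit rows $\check\bbr^{(g,h)}_{\rm G}$ one computes its squared norm to be $4/(G(N_c^2-N_c)^2)$ when $p,q$ share a group $c$ and $2/(G(G-1)N_a^2N_b^2)$ when $p,q$ lie in distinct groups $a,b$; since $\Nmin\ge2$ the same-group value dominates and $\|\bbRgr\|_{1\to 2}\le 2/(\sqrt G\,\Nmin)$. Rearranging yields $\|\hbSC-\bbS^*\|_1\ge\tfrac12\Nmin\sqrt G(\sqrt{\Rgr(\bbS^*)}-\tau)$, while for $\Rgr(\bbS^*)\le\tau^2$ the claimed bound is the trivial $\|\hbSC-\bbS^*\|_1\ge0$.

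For the \emph{upper bound} when $\Rgr(\bbS^*)\le\tau^2$, $\bbS^*$ is itself feasible for~\eqref{eq:opt_C_l1} on a high-probability event: the bias constraint holds by hypothesis, $\bbS^*\in\SA$, and since $\bbC\bbS^*=\bbS^*\bbC$ forces $\hbSigma\bbs^*=(\hbSigma-\bbSigma)\bbs^*$, a standard Gaussian covariance-concentration bound gives $\|\hbSigma\bbs^*\|_2\le\epsilon$ with probability at least $1-e^{c_2N}$ under (A2). Then the analysis of~\citep[Theorem~1]{segarra2017networktopologyinference} (see also~\citep{navarro2022jointinferencemultiple}) applies verbatim: optimality of $\hat\bbs$ yields $\|\hat\bbs\|_1\le\|\bbs^*\|_1$, the error $\bbdelta=\hat\bbs-\bbs^*$ obeys the cone condition $\|\bbdelta_{\bar{\ccalK}}\|_1\le\|\bbdelta_\ccalK\|_1$ with $\ccalK={\rm supp}(\bbs^*)$, so $\|\bbdelta\|_1\le 2\sqrt{|\ccalK|}\,\|\bbdelta\|_2\le 2k\,\|\hbSigma\bbdelta\|_2/\sigma_{\scriptscriptstyle\min}(\hbSigma)$ by (A3) and (A4), and $\|\hbSigma\bbdelta\|_2\le2\epsilon$; tracking the constants as in that reference gives $\|\hbSC-\bbS^*\|_1\le\phi_1\epsilon$.

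For the \emph{upper bound} when $\Rgr(\bbS^*)>\tau^2$, $\bbS^*$ is infeasible, so I interpose a fair surrogate $\bbS^\sharp$. I build $\bbS^\sharp$ from $\bbS^*$ by reweighting its aggregate within- and across-group edge masses $\bbz^{(g)\top}\bbS^{*+}\bbz^{(h)}$ so that every per-pair average $\bbz^{(g)\top}\bbS^{*+}\bbz^{(h)}/(N_gN_h)$ and every within-group average equals a common value $t$, which makes $\Rgr(\bbS^\sharp)=0\le\tau^2$; choosing $t$ as a suitably weighted median and bounding the moved mass by Cauchy--Schwarz against the per-pair gaps $a_{gh}$, which satisfy $\sum_{g\ne h}a_{gh}^2=(G^2-G)\Rgr(\bbS^*)$ by~\eqref{eq:Rgr}, gives $\|\bbS^\sharp-\bbS^*\|_1\le\phi_3\sqrt{\Rgr(\bbS^*)}$. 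Using optimality of $\hat\bbs$ against $\bbs^\sharp$ and re-running the cone/minimum-singular-value argument of the previous paragraph with $\bbS^\sharp$ in place of $\bbS^*$ — now accounting for the extra support of $\bbs^\sharp$ (at most one new edge per group pair) and for the extra commutativity residual $\|\hbSigma\bbs^\sharp\|_2$ it carries — produces $\|\hbSC-\bbS^\sharp\|_1\le(\phi_1+\phi_2)\epsilon$, and $\|\hbSC-\bbS^*\|_1\le\|\hbSC-\bbS^\sharp\|_1+\|\bbS^\sharp-\bbS^*\|_1$ closes the proof.

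I expect the surrogate construction to be the main obstacle: the reweighting must equalize the group-pair averages while simultaneously (i) keeping $\bbS^\sharp\in\SA$, in particular nonnegative and with $\bbS^\sharp\bbone\ge\bbone$, (ii) not inflating $\|\hbSigma\bbs^\sharp\|_2$ beyond what the $\phi_2\epsilon$ term can absorb, and (iii) adding only $O(G^2)$ new edges so that the restricted-eigenvalue step still yields the clean $(1+k)$-type constants of $\phi_2$ and $\phi_3$. Carrying this out carefully, and tracking how the surrogate's extra support and extra commutativity slack propagate into $\phi_2$, is where essentially all the work lies; the remainder is the standard spectral-template estimation analysis together with the seminorm manipulations above.
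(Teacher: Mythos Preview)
Your lower-bound argument and the feasible-case upper bound are essentially sound and close to the paper's, though the paper takes a slightly different route for the lower bound: it proves $\|\bbRgr\bbs\|_2^2\le \tfrac{4}{G\Nmin^2}\|\bbS\|_F^2$ for any $\bbS$ and then uses $\|\hbSC-\bbS^*\|_1\ge\|\hbSC-\bbS^*\|_F$, whereas you go through the $\ell_1\!\to\!\ell_2$ operator norm of $\bbRgr$. Your route is in fact a bit sharper (it gives $\Nmin\sqrt{G}$ rather than $\tfrac12\Nmin\sqrt{G}$), so the stated inequality certainly follows.

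The genuine gap is in the infeasible-case upper bound. Your surrogate $\bbS^\sharp$ is built to satisfy the bias constraint, but there is no mechanism to keep $\|\hbSigma\bbs^\sharp\|_2$ of order $\epsilon$. You only know $\|\hbSigma\bbs^*\|_2\le\epsilon$; since the reweighting moves mass of order $\phi_3\sqrt{\Rgr(\bbS^*)}$, the perturbation $\|\hbSigma(\bbs^\sharp-\bbs^*)\|_2$ is generically of that same order (times an operator norm of $\hbSigma$), not of order $\epsilon$. Hence $\bbS^\sharp$ need not be feasible for~\eqref{eq:opt_C_l1}, the optimality inequality $\|\hat\bbs\|_1\le\|\bbs^\sharp\|_1$ need not hold, and even a relaxed cone/RE step would produce a residual that scales with $\sqrt{\Rgr(\bbS^*)}$ inside what you want to be the $(\phi_1+\phi_2)\epsilon$ piece. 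The constraint $\bbS^\sharp\bbone\ge\bbone$ is a second, independent obstacle when some within-group blocks must be down-weighted.

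The paper sidesteps all of this by never constructing a surrogate. It first derives an inequality of the form
\[
\|\hat\bbs-\bbs^*\|_1 \;\le\; (1+k)\big(\|\hat\bbs\|_1-\|\bbs^*\|_1\big)\;+\;\frac{2k(2+k)\epsilon}{\sigma_{\scriptscriptstyle\min}(\hbSigma)},
\]
which uses only $\|\hbSigma\bbs^*\|_2\le\epsilon$ (Lemma~\ref{lem:target_feas}) and holds regardless of whether $\bbS^*$ satisfies the bias constraint. In the feasible case $\|\hat\bbs\|_1-\|\bbs^*\|_1\le0$ gives $\phi_1\epsilon$. In the infeasible case the paper simply drops $-\|\bbs^*\|_1$ and bounds $\|\hat\bbs\|_1$ itself: Lemma~\ref{lem:grp_tradeoff} shows that any $\bbs$ with $\bbs\ge\bbzero$, $\|\hbSigma\bbs\|_2\le\epsilon$, and $\|\bbRgr\bbs\|_2\le\tau<\|\bbRgr\bbs^*\|_2$ satisfies $\|\bbs\|_1\le\|\bbZ^\top\bbone\|_2^2\big(G\|\bbRgr\bbs^*\|_2+\tfrac{2\epsilon}{\sigma_{\scriptscriptstyle\min}(\hbSigma)\Nmin}\big)$. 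Plugging this into the displayed inequality and multiplying by $2$ yields exactly $(\phi_1+\phi_2)\epsilon+\phi_3\sqrt{\Rgr(\bbS^*)}$. The point is that the commutativity and fairness constraints together already cap the size of the feasible set; one never needs to find a feasible point near $\bbS^*$.
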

% 
% \red{I would add a bit more text to the theorem, so that it is easy to read. Furthermore, not talking about the role of the sparsity of the graph $k^2$ on the subsequent discussion looks a bit awkward to me (ok if you have thought about this).} 
The proof of Theorem~\ref{thm:err_bnd_gr} is in Appendix~\ref{app:thm_err_bnd_gr}.
For the setting where $\bbS^*$ is fair enough to be feasible, that is, $\Rgr(\bbS^*)\leq\tau^2$, the result follows directly from classical graph-stationary network inference~\citep{navarro2022jointinferencemultiple}.
Note that $\Nmin\geq 2$ ensures that each group contains at least two nodes, as required in~\eqref{eq:Rgr} so that balancing within- versus across-group edges is feasible.

Critically, Theorem~\ref{thm:err_bnd_gr} demonstrates a \emph{conditional tradeoff between fairness and accuracy}.
First, consider the case when the target GSO $\bbS^*$ is unfair, that is, $\Rgr(\bbS^*) > \tau^2$.
For the upper bound between our estimate $\hbSC$ and $\bbS^*$ in~\eqref{eq:err_upp_gr}, not only will the discrepancy due to the sample covariance increase from $\phi_1\epsilon$ to $(\phi_1+\phi_2)\epsilon$, but the bound also incurs an additional term based on the level of bias in the target $\Rgr(\bbS^*)$.
Additionally, observe that the smallest value of $\| \bbZ^\top\bbone \|_2^2 = \sum_{g} N_g^2$ occurs when all groups have the same number of nodes, so the upper bound in~\eqref{eq:err_upp_gr} grows with the relative imbalance across groups.
For the result in~\eqref{eq:err_low_gr}, as would be expected, an unfair $\bbS^*$ introduces a lower bound on our estimation error, which increases as $\Rgr(\bbS^*)$ becomes further removed from $\tau^2$.
Conversely, if $\Rgr(\bbS^*) \leq \tau^2$, then we require \emph{no sacrifice in performance bounds} in either~\eqref{eq:err_low_gr} or~\eqref{eq:err_upp_gr}, as $\phi_1\epsilon$ corresponds to the classical error bound for estimating $\bbS^*$ from stationary graph signals.
Thus, Theorem~\ref{thm:err_bnd_gr} shows that if the target GSO $\bbS^*$ is unfair, our guarantees on the performance of FairSpecTemp will necessarily suffer, but if $\bbS^*$ is fair, then we may impose fairness for graph estimation with no consequences for our error bounds.
Finally, as expected, since solving~\eqref{eq:opt_C_l1} yields a sparse graph, as the target $\bbS^*$ grows denser, $k$ increases and hence so does the upper bound $\phi_1 \epsilon$.

In addition to the performance of FairSpecTemp with respect to the group-wise bias metric $\Rgr$, we also present an analogous result for individual fairness via $\Rno$.
\begin{mytheorem}\label{thm:err_bnd_no}
    Under Assumption~\ref{assump:error_bound}, consider the GSO estimate $\hbSC \in \reals^{N\times N}$ as the solution to~\eqref{eq:opt_C_l1} with the node-wise bias $R = \Rno$ bounded above by $\tau \geq 0$, where group membership is encoded in the indicator matrix $\bbZ \in \{ 0,1 \}^{N\times G}$.
    Then, the $\ell_1$ error between $\hbSC$ and the target GSO $\bbS^*$ in~\eqref{eq:opt_C_target} is lower bounded by
    \alna{
        \| \hbSC - \bbS^* \|_1
        &~\geq~&
        \begin{Bmatrix}
            \frac{1}{2} \Nmin \sqrt{GN} (\sqrt{ \Rno(\bbS^*) } - \tau),
            &
            \Rno(\bbS^*) > \tau^2
            \\
            0,
            &
            \Rno(\bbS^*) \leq \tau^2
        \end{Bmatrix},
    \label{eq:err_low_no}}
    and with probability at least $1 - e^{c_2 N}$ for some $c_2 > 0$, the error is upper bounded by
    \alna{
        \| \hbSC - \bbS^* \|_1
        ~\leq~
        \begin{Bmatrix}
            (\phi_1 + \phi_4) \epsilon
            + \phi_5 \sqrt{ \Rno(\bbS^*) },
            &
            \Rno(\bbS^*) > \tau^2
            \\
            \phi_1 \epsilon,
            &
            \Rno(\bbS^*) \leq \tau^2
        \end{Bmatrix},
    \label{eq:err_upp_no}}
    \vspace{-1em}
    \alna{
        \text{where}~~~~
        \quad
        \phi_1
        =
        \frac{ 4k (2 + k) }{ \sigma_{\scriptscriptstyle\min}(\hbSigma) },
        ~~~~~
        \phi_4
        =
        \frac{ 2 (1 + k) N_{\scriptscriptstyle\max}^2 \sqrt{G} }{ \sigma_{\scriptscriptstyle\min}(\hbSigma) \Nmin },
        ~~~~~
        \phi_5
        =
        (1 + k) N_{\scriptscriptstyle\max}^2 \sqrt{G^3},
        \qquad\qquad
    &\nonumber\\[-.7cm]&
    \nonumber}
    and $k$, $\sigma_{\scriptscriptstyle\min}(\hbSigma)$, and $\epsilon$ are defined in Assumption~\ref{assump:error_bound}.
\end{mytheorem}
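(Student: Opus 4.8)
The proof would follow the template of Theorem~\ref{thm:err_bnd_gr} (Appendix~\ref{app:thm_err_bnd_gr}), simply replacing the group-wise metric $\Rgr$ and its matrix $\bbRgr$ by their node-wise analogues $\Rno$ and $\bbRno$; the only substantive difference is bookkeeping of the combinatorial constants produced by the different structure of $\Rno$. Write $\bbD := \hbSC - \bbS^*$, note that $\bbS \mapsto \sqrt{\Rno(\bbS)} = \| \bbRno \vect(\bbS)_{\ccalL} \|_2$ is a seminorm on zero-diagonal symmetric matrices (so $\Rno(-\bbD) = \Rno(\bbD)$ and the triangle inequality holds), and split according to whether $\Rno(\bbS^*) \leq \tau^2$ (so $\bbS^*$ is, up to concentration, feasible for~\eqref{eq:opt_C_l1}) or $\Rno(\bbS^*) > \tau^2$ (so it is not).

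For the \emph{lower bound} with $\Rno(\bbS^*) > \tau^2$: feasibility of $\hbSC$ in~\eqref{eq:opt_C_l1} gives $\sqrt{\Rno(\hbSC)} \leq \tau$, so writing $\bbS^* = \hbSC - \bbD$ and applying the seminorm triangle inequality yields $\sqrt{\Rno(\bbD)} \geq \sqrt{\Rno(\bbS^*)} - \sqrt{\Rno(\hbSC)} \geq \sqrt{\Rno(\bbS^*)} - \tau$. It then suffices to establish the reverse-type bound $\sqrt{\Rno(\bbD)} \leq \tfrac{2}{\Nmin\sqrt{GN}}\| \bbD \|_1$. Using~\eqref{eq:Rno_spec_A}, $\Rno(\bbD) = \tfrac{1}{GN(G-1)^2}\sum_g \| \bbD \bbv^{(g)} \|_2^2$ with $\bbv^{(g)} = \tfrac{G}{N_g}\bbz^{(g)} - \bbu$ for $\bbu := \sum_h \tfrac{1}{N_h}\bbz^{(h)}$; since $\sum_g \tfrac{1}{N_g}\bbD\bbz^{(g)} = \bbD\bbu$, the cross terms cancel and $\sum_g \| \bbD\bbv^{(g)} \|_2^2 \leq G^2\sum_g N_g^{-2}\| \bbD\bbz^{(g)} \|_2^2$. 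Bounding $\| \bbD\bbz^{(g)} \|_2$ by the entrywise $\ell_1$ norm of the columns of $\bbD$ indexed by group $g$, summing over $g$ (these squared norms sum to at most $\| \bbD \|_1^2$), and using $N_g \geq \Nmin$ together with $G/(G-1)\leq 2$ gives the claim; rearranging produces~\eqref{eq:err_low_no}, and the case $\Rno(\bbS^*)\leq\tau^2$ is vacuous.

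For the \emph{upper bound} with $\Rno(\bbS^*) \leq \tau^2$: under~\textit{(A1)}--\textit{(A2)} a standard Gaussian concentration bound on $\hbC - \bbC$ (as in~\citep{segarra2017networktopologyinference,navarro2022jointinferencemultiple}) shows $\| \hbC\bbS^* - \bbS^*\hbC \|_F \leq \epsilon$ with probability at least $1 - e^{c_2 N}$, so $\bbS^*$ is feasible for~\eqref{eq:opt_C_l1}; hence $\| \hbSC^+ \|_1 \leq \| \bbS^{*+} \|_1$, and the classical restricted-nullspace argument for stationary-graph inference, using the full column rank of $\hbSigma$ from~\textit{(A3)} and the sparsity bound~\textit{(A4)}, yields $\| \bbD \|_1 \leq \phi_1\epsilon$, exactly as in~\citep{navarro2022jointinferencemultiple} and in the feasible case of Theorem~\ref{thm:err_bnd_gr}. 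For $\Rno(\bbS^*) > \tau^2$, $\bbS^*$ is infeasible; mirroring the proof of Theorem~\ref{thm:err_bnd_gr}, the plan is to produce a feasible surrogate $\bar\bbS$ for~\eqref{eq:opt_C_l1} that stays within $O(\sqrt{\Rno(\bbS^*)})$ of $\bbS^*$ in $\ell_1$ and within $O(k^2)$ of its support --- e.g. by correcting $\bbS^*$ along the row space of $\bbRno$ until $\Rno(\bar\bbS)\leq\tau^2$ --- so that optimality of $\hbSC$ gives $\| \hbSC^+ \|_1 \leq \| \bar\bbS^+ \|_1$ and the classical argument applied with $\bar\bbS$ as reference (whose commutativity residual exceeds $\epsilon$ by a structure-dependent $O(\epsilon)$, the source of the $\phi_4\epsilon$ term) yields $\| \hbSC - \bar\bbS \|_1 \leq (\phi_1 + \phi_4)\epsilon$. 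The triangle inequality $\| \bbD \|_1 \leq \| \hbSC - \bar\bbS \|_1 + \| \bar\bbS - \bbS^* \|_1$ then gives~\eqref{eq:err_upp_no}, with $\phi_4,\phi_5$ inheriting the $\Nmax$- and $G$-dependence of $\bbRno$ (compare $\phi_2,\phi_3$ in Theorem~\ref{thm:err_bnd_gr}, which instead carry the $\| \bbZ^\top\bbone \|_2^2$ of $\bbRgr$).

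The main obstacle is the surrogate construction in the infeasible regime: exhibiting $\bar\bbS$ that is simultaneously fair ($\Rno(\bar\bbS)\leq\tau^2$), a valid adjacency matrix, and commuting with $\hbC$ up to only an $O(\epsilon)$ degradation, while keeping both $\| \bar\bbS - \bbS^* \|_1$ and the sparsity of $\bar\bbS$ controlled so that the stationary-inference bound carries through with precisely the constants $\phi_4 = \tfrac{2(1+k)\Nmax^2\sqrt{G}}{\sigma_{\scriptscriptstyle\min}(\hbSigma)\Nmin}$ and $\phi_5 = (1+k)\Nmax^2\sqrt{G^3}$. The remaining ingredients --- the seminorm manipulation for the lower bound, the concentration estimate, and the reuse of the classical bound --- are routine or already available from~\citep{navarro2022jointinferencemultiple,segarra2017networktopologyinference} and the proof of Theorem~\ref{thm:err_bnd_gr}.
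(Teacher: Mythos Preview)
Your lower bound and the feasible case of the upper bound are handled correctly and match the paper. The divergence is in the infeasible upper bound: the paper does \emph{not} construct a feasible surrogate $\bar\bbS$, and neither does the proof of Theorem~\ref{thm:err_bnd_gr} that you claim to be mirroring. Instead, both proofs rely on an auxiliary lemma (Lemma~\ref{lem:nod_tradeoff} here, Lemma~\ref{lem:grp_tradeoff} for Theorem~\ref{thm:err_bnd_gr}) which shows that the two constraints of~\eqref{eq:opt_C_l1} already force a bound on the $\ell_1$ norm of \emph{any} feasible point: for $\bbs\geq\bbzero$ with $\|\hbSigma\bbs\|_2\leq\epsilon$ and $\|\bbRno\bbs\|_2\leq\tau<\|\bbRno\bbs^*\|_2$, one has
\[
\|\bbs\|_1 \;\leq\; \tfrac{N_{\scriptscriptstyle\max}^2\sqrt{G}}{2}\Bigl(G\,\|\bbRno\bbs^*\|_2 + \tfrac{2\epsilon}{\sigma_{\scriptscriptstyle\min}(\hbSigma)\Nmin}\Bigr).
\]
This is obtained by lower-bounding $\|\bbRno\bbs\|_2^2$ in terms of $\|\bbS^+\bbZ\|_F^2$ and $\|\bbS^+\bbone\|_2^2$, then invoking $\|\bbS^+\|_1^2\leq N_{\scriptscriptstyle\max}\|\bbS^+\bbZ\|_F^2$ and using the commutativity constraint (via the full-rank $\hbSigma$) to control $\|\bbS^+\bbone\|_2^2\leq 2N\epsilon^2/\sigma_{\scriptscriptstyle\min}^2(\hbSigma)$. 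With this bound on $\|\hbs\|_1$ in hand, the paper simply drops the negative $-\|\bbs^*\|_1$ from the inequality
\[
\|\hbs-\bbs^*\|_1 \;\leq\; (1+k)\bigl(\|\hbs\|_1-\|\bbs^*\|_1\bigr) + \tfrac{2k(2+k)\epsilon}{\sigma_{\scriptscriptstyle\min}(\hbSigma)}
\]
(the analogue of~\eqref{eq:err_tri_equ}), and $\phi_4,\phi_5$ drop out immediately.

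Your surrogate plan could in principle work, but the obstacle you flag---simultaneously enforcing fairness, validity in $\SA$, near-commutativity, sparsity, and $\ell_1$-closeness to $\bbS^*$---is genuinely hard, and the paper sidesteps it entirely. The trade-off is that the lemma route discards $-\|\bbs^*\|_1$, which is why Remark~\ref{rem:sparse_bias_tradeoff_gr} exists; a successful surrogate argument might retain some of that slack. But for the bound as stated, the lemma is both simpler and the actual mechanism behind the constants $\phi_4,\phi_5$.
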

The proof can be found in Appendix~\ref{app:thm_err_bnd_no}, which follows similar steps to the proof of Theorem~\ref{thm:err_bnd_gr}, excepting the steps that obtain the lower bound~\eqref{eq:err_low_no}.
When $\Rno(\bbS^*)\leq\tau^2$ and $\bbS^*$ is therefore feasible, the error bounds for $\hbSC$ are equivalent to those in~\eqref{eq:err_low_gr} and~\eqref{eq:err_upp_gr}.
Theorem~\ref{thm:err_bnd_no} again demonstrates a conditional tradeoff between fairness and accuracy, where an unfair $\bbS^*$ such that $\Rno(\bbS^*) > \tau^2$ increases both the upper and lower bounds on the estimation error of $\hbSC$, while a fair $\bbS^*$ with $\Rno(\bbS^*) \leq \tau^2$ does not negatively affect performance guarantees. 
Observe that the discrepancy due to imposing fairness again depends on the bias in the target GSO $\Rno(\bbS^*)$, but the stricter, node-wise metric $\Rno$ has a larger lower bound in~\eqref{eq:err_low_no} than the group-wise metric $\Rgr$ in~\eqref{eq:err_low_gr}.
Moreover, the additional terms in the upper bound~\eqref{eq:err_upp_no} containing $\phi_4$ and $\phi_5$ depend on the largest group size $N_{\scriptscriptstyle\max}^2$, which achieves its smallest value when all groups have the same size.
Theorem~\ref{thm:err_bnd_no} again demonstrates that FairSpecTemp must sacrifice accuracy for fairness when the target GSO $\bbS^*$ is biased, but a fair $\bbS^*$ requires no tradeoff between fairness and accuracy.
% Unlike previous fair network inference~\citep{navarro2024FairGLASSO,navarro2024MitigatingSubpopulationBias,zhou2024Fairnessawareestimationgraphical}, we show performance bounds with respect to individual fairness.

\begin{myremark}\label{rem:sparse_bias_tradeoff_gr}
    Note that the feasibility of $\bbS^*$ for~\eqref{eq:opt_C_l1} determines the upper bound for both~\eqref{eq:err_upp_gr} and~\eqref{eq:err_upp_no} in Theorems~\ref{thm:err_bnd_gr} and~\ref{thm:err_bnd_no}, respectively.
    However, even if $\bbS^*$ is not feasible, that is, $R(\bbS^*)>\tau^2$, the error bound $\phi_1\epsilon$ still holds if $\bbS^*$ is much fairer than it is sparse.
    More specifically, if $R = \Rgr$ and $\| \bbS^* \|_1 \geq 2\| \bbZ^\top \bbone \|_2^2 \left( G\sqrt{\Rgr(\bbS^*)} + \frac{2\epsilon}{ \sigma_{ \scriptscriptstyle\min }(\hbSigma)  N_{ \scriptscriptstyle\min }  } \right)$ or if $R = \Rno$ and $\| \bbS^* \|_1 \geq N_{\scriptscriptstyle\max}^2 \sqrt{G} \left( G\sqrt{ \Rno(\bbS^*) } + \frac{2\epsilon}{ \sigma_{\scriptscriptstyle\min}(\hbSigma) N_{\scriptscriptstyle\min} } \right)$, then the upper bound $\| \hbSC - \bbS^* \|_1 \leq \phi_1 \epsilon$ holds with high probability.
    Thus, even if the target GSO $\bbS^*$ is not fair enough to be feasible, imposing the constraint $R(\bbS^*) \leq \tau^2$ may be preferable if $\bbS^*$ is not sparse.
\end{myremark}

\begin{myremark}\label{rem:upp_bnd_est_bias}
    While we bound the bias in the estimated GSO $\hbSC$ via $\tau^2$, the actual bias $R(\hbSC)$ may be restricted by the observed data encoded in the sample covariance matrix $\hbC$.
    To see this, recall the definition of $\hbSigma$ based on $\hbC$, and observe that under a full-column rank $\hbSigma$ as in Assumption \textit{(A3)},
    $R(\bbS) = \| \bbR \vect(\bbS)_{\ccalL} \|_2^2 \leq \| \bbR\hbSigma^{\dagger} \|_2^2 \epsilon^2$
    for any $\bbS \in \ccalS$ that may be obtained from~\eqref{eq:opt_C_l1}. 
    Hence, we can view $\| \bbR \hbSigma^{\dagger} \|_2$ as measuring the bias in the observed data, where the measurement $R(\hbSC)$ cannot exceed this value scaled by $\epsilon$, which indicates that some introduced error represented by $\epsilon$ may distort the level of possible bias.
\end{myremark}
% =================================================

% =================================================
\subsection{FairSpecTemp via Shared Eigenbasis}
\label{ss:fst_st}

% \bbS^* as target without bias constraint
FairSpecTemp proposed in~\eqref{eq:opt_C_l0} encourages similar eigenvectors between $\hbC$ and $\hbSC$ while directly constraining the bias in $\hbSC$.
However, to mitigate the fairness-accuracy tradeoff shown in Theorems~\ref{thm:err_bnd_gr} and~\ref{thm:err_bnd_no}, we propose an alternative to~\eqref{eq:opt_C_l0} that allows us to enforce fairness implicitly. 
Recall that graph stationarity implies that $\bbC \bbS^* = \bbS^* \bbC$ because $\bbS^*$ and $\bbC$ share the same eigenvectors $\bbV = [\bbv_1,\dots,\bbv_N]$ (see Section~\ref{ss:gsp}).
% Observe that the formulation in~\eqref{eq:opt_C_l0} arises because graph stationarity implies that $\bbC \bbS^* = \bbS^* \bbC$, as $\bbS^*$ and $\bbC$ share the same eigenvectors $\bbV = [\bbv_1,\dots,\bbv_N]$ (see Section~\ref{ss:gsp}).
We can therefore equivalently define $\bbS^*$ as
\alna{
    \bbS^*, \bblambda^* \!~\in~
    &\argmin_{\bbS \in \ccalS, ~\! \bblambda} ~~
    \| \bbS^+ \|_0
    ~~{\rm s.t.}~~
    \bbS = \sum_{i=1}^N \lambda_i \bbv_i\bbv_i^\top,
\label{eq:opt_V_target}}
where we not only obtain the GSO $\bbS^*$ but also its eigenvalues $\bblambda^*$ given the eigendecomposition $\bbC = \bbV \bbGamma \bbV^\top$~\citep{segarra2017networktopologyinference}.
Since there are no conditions on $\bblambda$ in~\eqref{eq:opt_V_target}, the constraint $\bbS = \sum_i \lambda_i \bbv_i \bbv_i^\top$ is equivalent to the condition $\bbC \bbS = \bbS \bbC$, and therefore the sets of optimal GSOs for~\eqref{eq:opt_V_target} and~\eqref{eq:opt_C_target} are equivalent.
% The optimization problem~\eqref{eq:opt_V_target} requires $\frac{N^2+N}{2}$ parameters instead of the $\frac{N^2-N}{2}$ parameters needed for~\eqref{eq:opt_C_target}, but we will empirically show that graph estimation based on~\eqref{eq:opt_V_target} can lead to more robust solutions, particularly if we consider measuring bias in the spectral domain as in~\eqref{eq:Rgr_spec_A} and~\eqref{eq:Rno_spec_A}.
% but we will show that graph estimation based on~\eqref{eq:opt_V_target} can lead to more efficient solutions, particularly if we consider measuring bias in the spectral domain as in~\eqref{eq:Rgr_spec_A} and~\eqref{eq:Rno_spec_A}.
% 
We then propose an analogous approximation of~\eqref{eq:opt_V_target} as in~\eqref{eq:opt_C_l0}, where we estimate the target GSO $\bbS^*$ given the sample covariance matrix $\hbC$ while encouraging fair connections.
With the eigendecomposition $\hbC = \hbV \hbGamma \hat{\bbV}^\top$ and eigenvectors $\hbV = [\hbv_1,\dots,\hbv_N]$, we propose the following adaptation of FairSpecTemp,
\alna{
    \bbSVp,
    \bblambda'_{\rm \scriptscriptstyle V}
    ~\! \in ~
    \argmin_{\bbS \in \ccalS, ~\! \bblambda} ~~
    \| \bbS^+ \|_0
    ~~{\rm s.t.}~~
    \left\| \bbS - \sum_{i=1}^N \lambda_i \hbv_i \hbv_i^\top \right\|_F \leq \epsilon, ~
    R(\bbS) \leq \tau^2
\label{eq:opt_V_l0}}
for $R$ measuring bias, where we again focus on $R \in \{ \Rgr, \Rno \}$, although this choice is flexible to other topological bias penalties~\citep{wang2023FG2ANFairnessAwareGraph}.
% We let $\FSTV$ refer to this version~\eqref{eq:opt_V_l0} of FairSpecTemp that directly applies noisy spectral templates $\hbV$.
% Observe that jointly optimizing both $\bbS$ and $\bblambda$ allows more degrees of freedom, although this increased flexibility can cause greater error when $\hbV$ is very different from the true eigenvectors $\bbV$.

% replace W to J, M with F
Similarly to~\eqref{eq:opt_C_l1}, we relax the $\ell_0$ norm by replacing it with the convex $\ell_1$ norm to get
% \red{we relax~\eqref{eq:opt_V_l0} by substituting the convex $\ell_1$ norm to get} \red{I guess we replace the L0 norm.}
\alna{
    \hbSV, 
    \hblambda_{\rm \scriptscriptstyle V}
    ~\! \in ~
    \argmin_{\bbS \in \ccalS, ~\! \bblambda } ~~
    \| \bbS^+ \|_1
    ~~{\rm s.t.}~~
    \left\| \bbS - \sum_{i=1}^N \lambda_i \hbv_i\hbv_i^\top \right\|_F \leq \epsilon, ~
    R(\bbS) \leq \tau^2,
\label{eq:opt_V_l1}}
for which we again provide sufficient conditions under which problems~\eqref{eq:opt_V_l0} and~\eqref{eq:opt_V_l1} return the same solution.
Given the definitions $\hbJ := \hbV \odot \hbV$ and $\hbF = (\bbI - \hbJ\hbJ^\top)\bbU$, we present our result guaranteeing that the convex relaxation~\eqref{eq:opt_V_l1} is minimized by a solution to~\eqref{eq:opt_V_l0} as follows.
As before, we show this for adjacency matrices $\ccalS=\SA$, but an analogous result for graph Laplacian GSOs $\ccalS = \SL$ can be found in Appendix~\ref{app:Lapl_convex}.

% Theorem for convex relaxation
\begin{mytheorem}\label{thm:V_convex}
    If problems~\eqref{eq:opt_V_l0} and~\eqref{eq:opt_V_l1} are feasible for $\ccalS = \SA$, then for $\bbSVp$ as a solution to~\eqref{eq:opt_V_l0}, we have that $\hbSV = \bbSVp$ is the unique solution to~\eqref{eq:opt_V_l1} if
    \begin{itemize}[left= 15pt .. 24pt, itemsep=1pt]
    \im[(i)] The submatrix $\hbF_{\cdot,\ccalI}$ is full column rank, and
    \im[(ii)] There exists a constant $\psi > 0$ such that
    \alna{
        \normms{
            \left(
                \psi^{-2} \bbPsi^\top \bbPsi + \bbI_{\cdot,\bar{\ccalI}} \bbI_{\bar{\ccalI},\cdot}
            \right)^{-1}_{\bar{\ccalI},\ccalI}
        }_{\infty}
        ~<~
        1,
    \label{eq:thm_V_cvx_cond}}
    \end{itemize}
    where $\ccalI = {\rm supp}(\bbs')$ for $\bbs' = \vect(\bbSVp)_{\ccalL}$, and $\bbPsi := [ \hbF^\top, \bbR^\top, \bbE^\top ]^\top$.
\end{mytheorem}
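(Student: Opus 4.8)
The plan is to reduce \eqref{eq:opt_V_l0} and \eqref{eq:opt_V_l1} to vectorized problems having exactly the same structure as the vectorized forms of \eqref{eq:opt_C_l0} and \eqref{eq:opt_C_l1} underlying Theorem~\ref{thm:C_convex}, with the commutativity operator $\hbSigma$ replaced by the spectral-template operator $\hbF$, and then to invoke verbatim the argument of Appendix~\ref{app:thm_C_convex}. As there, I fix $\ccalS = \SA$, write $\bbs = \vect(\bbS)_\ccalL$, and recall that $\vect(\bbS) = \bbU\bbs$ since $\diag(\bbS) = \bbzero$. The only step not already present in the proof of Theorem~\ref{thm:C_convex} is the elimination of the free eigenvalue variable $\bblambda$ from the constraint $\| \bbS - \sum_{i} \lambda_i \hbv_i\hbv_i^\top \|_F \le \epsilon$.

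First I would rewrite that constraint: since $\sum_{i=1}^N \lambda_i \hbv_i \hbv_i^\top = \hbV \diag(\bblambda)\hbV^\top$, its vectorization equals $(\hbV \odot \hbV)\bblambda = \hbJ\bblambda$, so the constraint reads $\| \bbU\bbs - \hbJ\bblambda \|_2 \le \epsilon$. Because $\hbV$ is orthogonal, $\hbJ^\top \hbJ = \bbI$ — its $(i,j)$ entry is $(\hbv_i^\top \hbv_j)^2 = \delta_{ij}$ — so $\hbJ$ has orthonormal columns and $\hbJ\hbJ^\top$ is the orthogonal projector onto its range. Since $\bblambda$ appears nowhere else in \eqref{eq:opt_V_l0} or \eqref{eq:opt_V_l1} (not in the $\ell_0/\ell_1$ objective, not in $R(\bbS) \le \tau^2$, not in the constraints defining $\SA$), it can be optimized out: the least-squares choice $\bblambda = \hbJ^\top \bbU\bbs$ gives residual $(\bbI - \hbJ\hbJ^\top)\bbU\bbs = \hbF\bbs$. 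Hence a feasible $\bblambda$ exists if and only if $\| \hbF\bbs \|_2 \le \epsilon$, and \eqref{eq:opt_V_l0} is equivalent to $\min_{\bbs} \|\bbs\|_0$ subject to $\| \hbF\bbs \|_2 \le \epsilon$, $\| \bbR\bbs \|_2 \le \tau$ (using $R(\bbS) = \| \bbR\bbs \|_2^2$ with $\bbR \in \{ \bbRgr, \bbRno \}$), and the polyhedral constraints $\bbs \ge \bbzero$, $\bbE\bbs \ge \bbone$ that encode $\SA$; similarly \eqref{eq:opt_V_l1} becomes the $\ell_1$ analogue. This is precisely the vectorized form of \eqref{eq:opt_C_l0}--\eqref{eq:opt_C_l1} with $\hbSigma$ replaced by $\hbF$ and $\bbPhi = [ \hbSigma^\top, \bbR^\top, \bbE^\top ]^\top$ replaced by $\bbPsi = [ \hbF^\top, \bbR^\top, \bbE^\top ]^\top$.

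With this reduction done, the rest is the argument of Appendix~\ref{app:thm_C_convex}, following~\citep[Theorem~1]{segarra2017networktopologyinference} and applying~\citep[Theorem~1]{zhang2016OneConditionSolution}: $\bbSVp$, a solution of the $\ell_0$ problem, is feasible for the convex $\ell_1$ problem \eqref{eq:opt_V_l1} with the same feasible set; condition \textit{(ii)}, through the matrix $\bbPsi$, exhibits a dual certificate proving $\bbSVp$ is optimal for \eqref{eq:opt_V_l1}; and condition \textit{(i)}, full column rank of $\hbF_{\cdot,\ccalI}$ on $\ccalI = \supp{\bbs'}$, precludes any other feasible point attaining the same $\ell_1$ value, giving uniqueness. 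The $\ccalS = \SL$ case would be handled as in Appendix~\ref{app:Lapl_convex}: $\vect(\bbS)_\ccalD = -\bbE\bbs$ introduces an extra additive term but no structural change.

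The main obstacle, such as it is, is confined to the second paragraph: verifying that optimizing out $\bblambda$ produces exactly $\hbF = (\bbI - \hbJ\hbJ^\top)\bbU$ rather than a messier operator. The crux is the orthonormality of the columns of $\hbJ = \hbV \odot \hbV$, which makes the least-squares residual an orthogonal projection and thereby turns the existentially quantified Frobenius-norm constraint into a clean norm bound on $\bbs$; one should also check that feasibility is preserved in both directions under the reduction, which it is since any feasible $\bbs$ is completed by $\bblambda = \hbJ^\top\bbU\bbs$. Everything downstream transcribes the proof of Theorem~\ref{thm:C_convex} under the substitutions $\hbSigma \to \hbF$ and $\bbPhi \to \bbPsi$.
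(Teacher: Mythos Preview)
Your proposal is correct and follows essentially the same approach as the paper: vectorize, eliminate $\bblambda$ via the least-squares choice $\bblambda=\hbJ^\top\bbU\bbs$ using $\hbJ^\top\hbJ=\bbI$ (the paper phrases this as $\hbJ^\dagger=\hbJ^\top$) to reduce the first constraint to $\|\hbF\bbs\|_2\le\epsilon$, and then transcribe the proof of Theorem~\ref{thm:C_convex} with $\hbSigma\to\hbF$ and $\bbPhi\to\bbPsi$. The paper's own proof is in fact slightly terser than yours, simply asserting the equivalence of the two vectorized problems and deferring the remainder to Appendix~\ref{app:thm_C_convex}.
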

The proof of Theorem~\ref{thm:V_convex} is also based on~\citep[Theorem 1]{segarra2017networktopologyinference} and~\citep[Theorem 1]{zhang2016OneConditionSolution} and shown in Appendix~\ref{app:thm_V_convex}.
Analogously to Theorem~\ref{thm:C_convex}, condition \textit{(ii)} of Theorem~\ref{thm:V_convex} guarantees that $\bbSVp$ is a solution to~\eqref{eq:opt_V_l1} by verifying the existence of a solution to the dual problem of~\eqref{eq:opt_V_l1}, while condition \textit{(i)} ensures that $\bbSVp$ is the unique solution to~\eqref{eq:opt_V_l1}.
Under the conditions of Theorem~\ref{thm:V_convex}, we may solve the convex problem~\eqref{eq:opt_V_l1} for the sparsest GSO $\bbSVp$ satisfying the commutativity constraint in~\eqref{eq:opt_V_l0} and~\eqref{eq:opt_V_l1}.

For a final relaxation of~\eqref{eq:opt_V_l0}, we loosen the bias constraint with respect to $\bbS$ by solving
\alna{
    \hbS_{\rm \scriptscriptstyle V},
    \hblambda_{\rm \scriptscriptstyle V}
    ~\in~
    \argmin_{\bbS \in \ccalS, ~\! \bblambda} ~~
    \| \bbS^+ \|_1
    ~~{\rm s.t.}~~
    \left\| \bbS - \sum_{i=1}^N \lambda_i \hbv_i\hbv_i^\top \right\|_F \leq \epsilon, ~
    R(\bblambda) \leq \tau^2,
\label{eq:opt_V_P_l1}}
where we encourage fairness via the eigenvalues $\bblambda$ instead of the GSO $\bbS$.
Observe that~\eqref{eq:opt_V_P_l1} estimates graphs while relaxing both the graph-stationarity and fairness constraints in comparison with~\eqref{eq:opt_C_l1}.
More specifically, jointly optimizing $\bbS$ and $\bblambda$ allows more degrees of freedom, although this increased flexibility can cause greater error when $\hbV$ differs further from the true eigenvectors $\bbV$.
Moreover, by using our spectral bias metrics in~\eqref{eq:Rgr_spec_A} and~\eqref{eq:Rno_spec_A} for~\eqref{eq:opt_V_P_l1}, we promote fairness via $R(\bblambda) \approx R(\bbS)$ since $\hbV$ is known.
Note that $R(\hblambda_{\rm \scriptscriptstyle V})\leq \tau^2$ does not guarantee that $R(\hbS_{\rm \scriptscriptstyle V}) \leq \tau^2$.
However, since~\eqref{eq:Rgr_spec_A} and~\eqref{eq:Rno_spec_A} consist of terms multiplying $\bblambda$ by matrices or vectors that can be precomputed, algorithms based on~\eqref{eq:opt_V_P_l1} can enjoy far simpler computations than~\eqref{eq:opt_V_l1}. 
Additionally, we show in Section~\ref{s:results} that, given sufficiently many samples, solving the more flexible~\eqref{eq:opt_V_P_l1} can accomplish both fair and accurate graph estimation, even when the target graph is biased.
% =================================================
% %%%%%%%%%%%%%%%%%%%%%%%%%%%%%%%%%%%%%%%%%%%%%%%%%%%%%%%%%%%%%%

% %%%%%%%%%%%%%%%%%%%%%%%%%%%%%%%%%%%%%%%%%%%%%%%%%%%%%%%%%%%%%%
\section{Numerical Evaluation}
\label{s:results}

% Some intro to the experiment section
In this section, we apply FairSpecTemp using both the commutativity condition in~\eqref{eq:opt_C_l1} and the shared eigenbases formulation in~\eqref{eq:opt_V_P_l1} to estimate graphs with unbiased connections, considering both group $\Rgr$ and individual fairness $\Rno$. 
Problems of the form in~\eqref{eq:opt_C_l1} or~\eqref{eq:opt_V_P_l1} are widely studied and can be solved by many efficient algorithms.
We adopt the Fast Iterative Shrinkage-Thresholding Algorithm (FISTA) to handle the associated constraints and the non-smooth $\ell_1$ norm~\citep{beck2009fastiterativeshrinkagethresholding,navarro2024FairGLASSO}.
To evaluate the performance of FairSpecTemp, we first conduct experiments on synthetic graphs under a variety of controlled settings.
This enables us to observe situations where a trade-off between fair and accurate estimation arises, as well as others where both can be achieved simultaneously.
We then move to real-world data by applying our graph estimation method to a financial investment task using historical stock market information.
In this setting, the structure of the estimated graphs informs dynamic investment decisions over time.

We aim to recover a target GSO $\bbS^*$, which represents a ground truth graph of interest in the following simulations.
To assess an estimate GSO $\hbS$, we measure error, group-wise bias, and node-wise bias respectively as
\alna{
    d(\hbS,\bbS^*)
    &~=~&
    \frac{
        \| \hbS - \bbS^* \|_F^2
    }{
        \| \bbS^* \|_F^2
    },
    ~~
    b_{\scriptscriptstyle\rm G}(\hbS)
    \,=\,
    \frac{
        (N^2-N) \sqrt{ \Rgr(\hbS) }
    }{
        2\| \hbS \|_1
    },
    ~~
    b_{\scriptscriptstyle\rm N}(\hbS)
    \,=\,
    \frac{
        (N^2-N) \sqrt{ \Rno(\hbS) }
    }{
        2\| \hbS \|_1
    },
\label{eq:perf_funcs}}
where the bias is normalized by average edge weight.
Throughout the ensuing simulations, we consider the following methods: 
(i)~$\st$: SpecTemp without fairness, that is,~\eqref{eq:opt_C_l1} with $\tau=\infty$~\citep{segarra2017networktopologyinference}; 
(ii)~$\fstcg$: FairSpecTemp via~\eqref{eq:opt_C_l1} with $R=\Rgr$;
(iii)~$\fstcn$: FairSpecTemp via~\eqref{eq:opt_C_l1} with $R=\Rno$;
(iv)~$\fstvg$: FairSpecTemp via~\eqref{eq:opt_V_P_l1} with $R=\Rgr$;
(v)~$\fstvn$: FairSpecTemp via~\eqref{eq:opt_V_P_l1} with $R=\Rno$;
(vi)~$\strw$: SpecTemp with randomly rewired edges;
(vii)~$\stba$: SpecTemp with edges reweighted to be balanced across group pairs;
and 
(viii)~$\fgl$: Fair GLASSO~\citep{navarro2024FairGLASSO}, estimating fair Gaussian graphical models with $R = \Rgr$.
As discussed in Section~\ref{ss:related}, fair graph estimation from nodal data is highly limited, with $\fgl$ as one existing method developed for the same task, albeit assuming a stricter graph signal model.
Hence, we consider additional baselines to evaluate how well FairSpecTemp can recover fair, accurate graphs.
In particular, for $\strw$, we randomly rewire edges of the $\st$ estimate, which, with sufficiently many rewirings, effectively decouples dependencies between edges and group labels.
However, rewiring edges may not reduce bias for highly imbalanced groups since edge densities across group pairs may be relatively unchanged.
We therefore also consider $\stba$, where we maintain the support of the initial $\st$ estimate and instead rescale GSO entries for more balanced edge weights across pairs of groups.
The code to reproduce all results, along with implementation details, is available on GitHub\footnote{\url{https://github.com/mn51/fair_nti}}.

% -----------------------------------------------
\begin{figure*}[t]
    \centering
    \begin{minipage}{0.15\textwidth}
        \begin{center}
            \includegraphics[width=\textwidth]{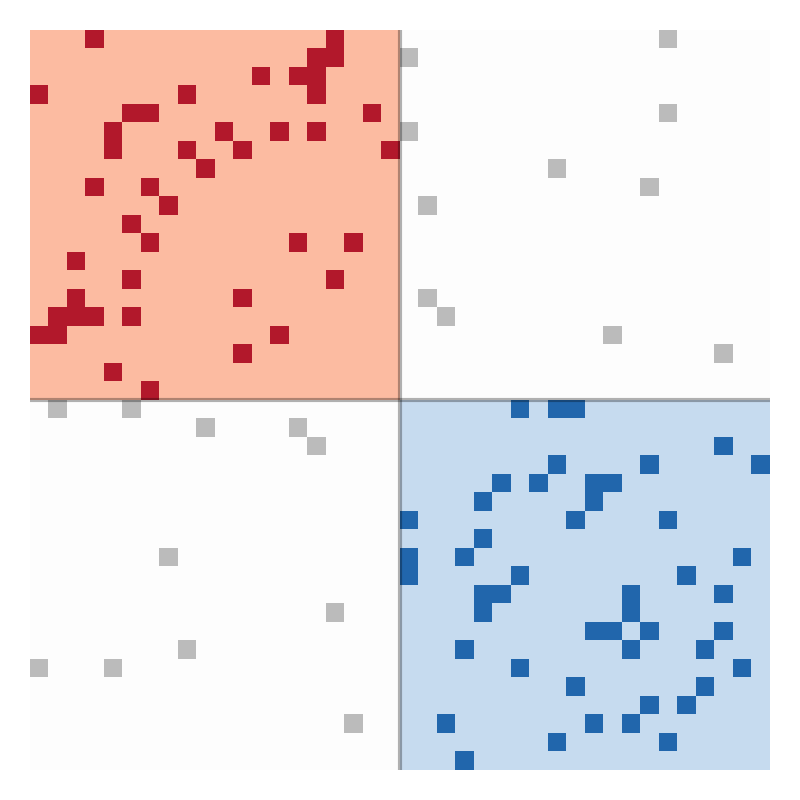}
            \\
            {\scriptsize (a) Within-group preference}
        \end{center}
    \end{minipage}
    \begin{minipage}{0.15\textwidth}
        \begin{center}
            \includegraphics[width=\textwidth]{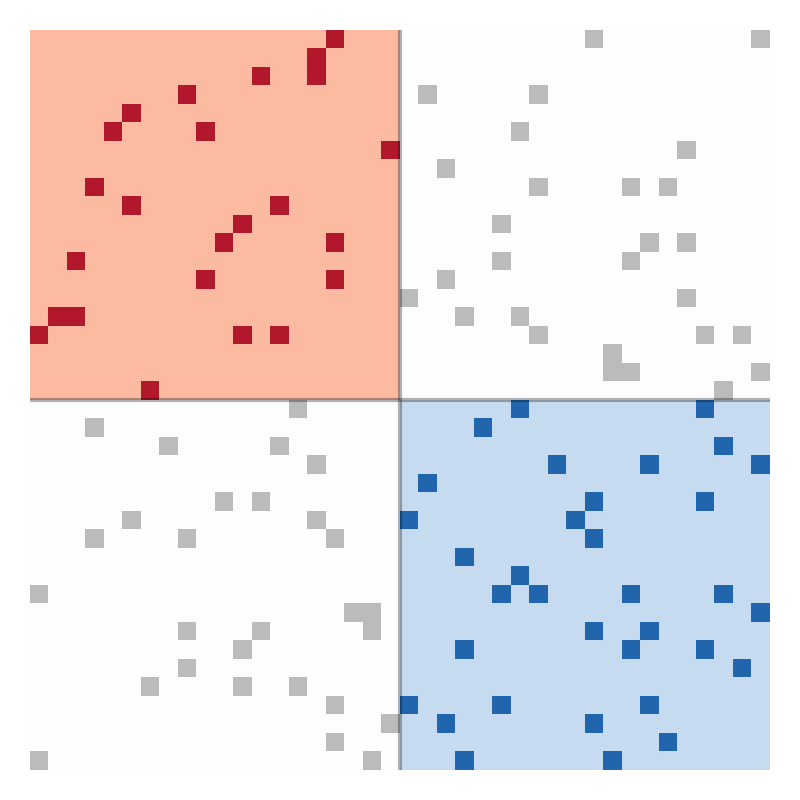}
            \\
            {\scriptsize (b) Group-wise balance}
        \end{center}
    \end{minipage}
    \begin{minipage}{0.15\textwidth}
        \begin{center}
            \includegraphics[width=\textwidth]{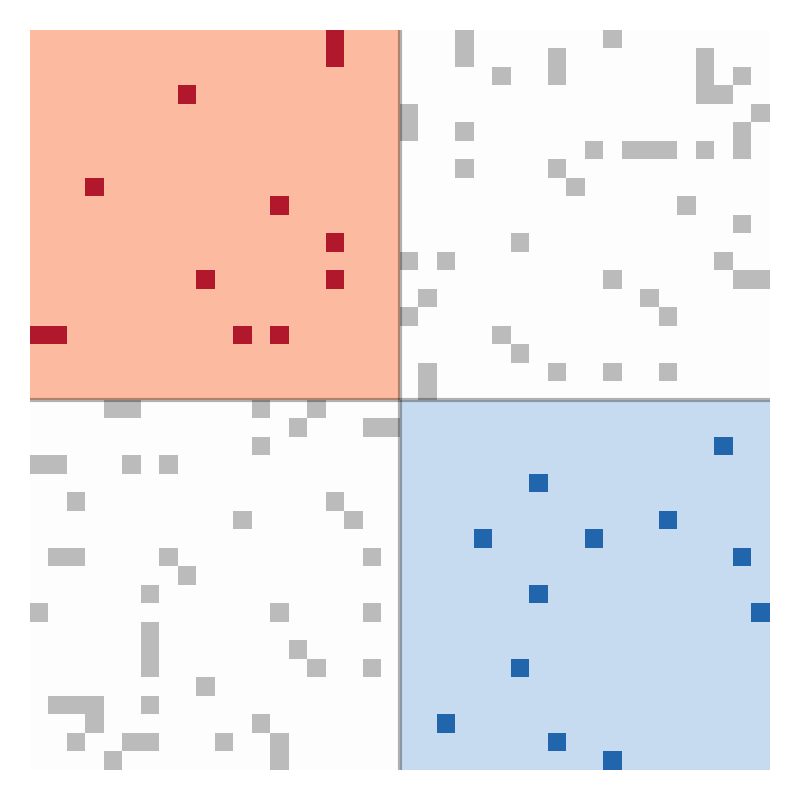}
            \\
            {\scriptsize (c) Across-group preference}
        \end{center}
    \end{minipage}
    \hspace{.25cm}
    \begin{minipage}{0.15\textwidth}
        \begin{center}
            \includegraphics[width=\textwidth]{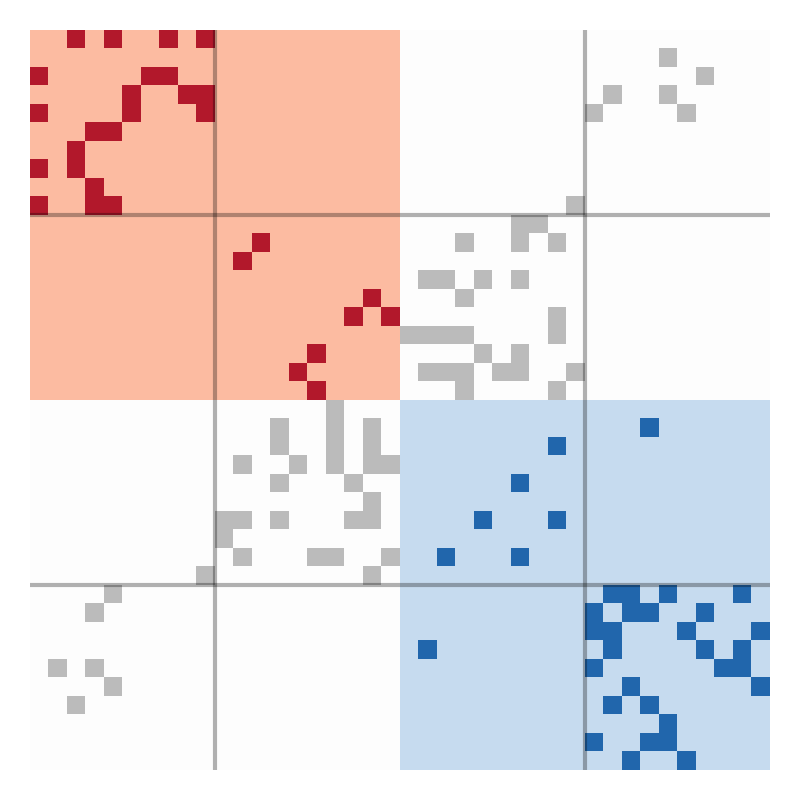}
            \\
            {\scriptsize (d) Sub-group preference}
        \end{center}
    \end{minipage}
    \begin{minipage}{0.15\textwidth}
        \begin{center}
            \includegraphics[width=\textwidth]{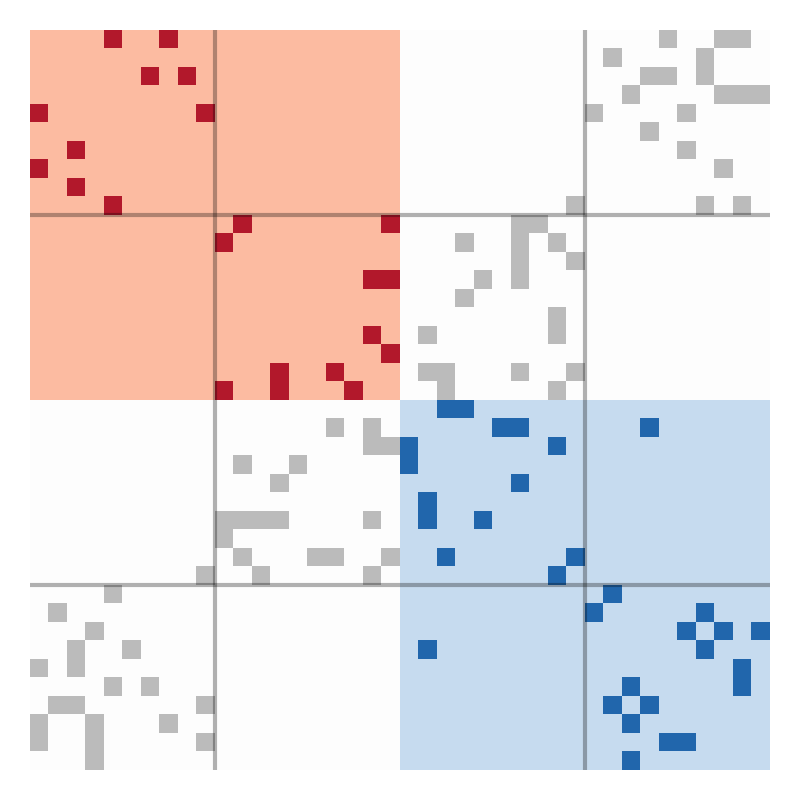}
            \\
            {\scriptsize (e) Sub-group balance}
        \end{center}
    \end{minipage}
    \begin{minipage}{0.15\textwidth}
        \begin{center}
            \includegraphics[width=\textwidth]{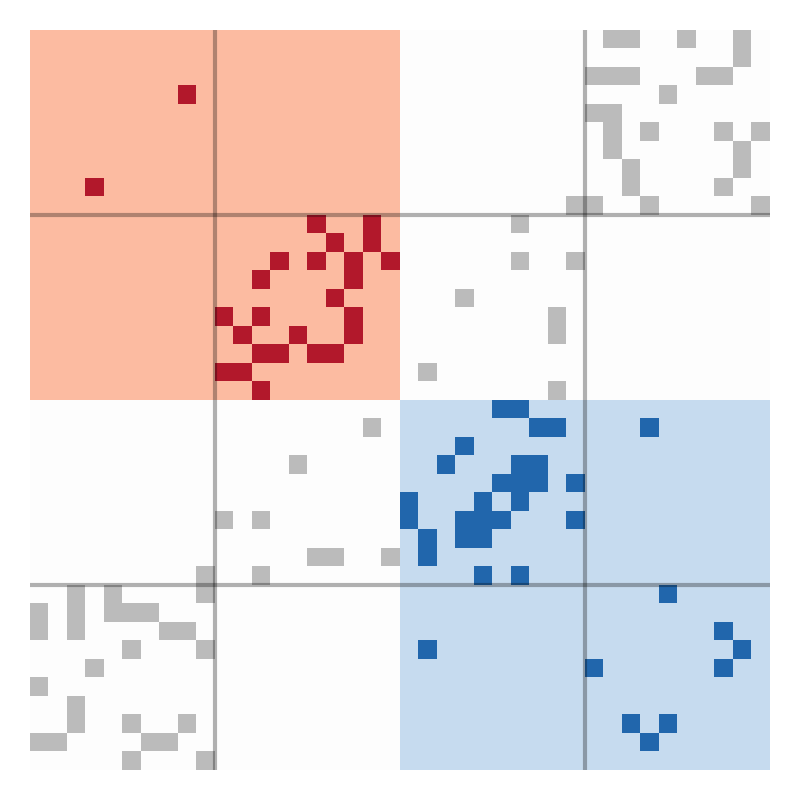}
            \\
            {\scriptsize (f) Sub-group preference}
        \end{center}
    \end{minipage}
    
    \caption{
    Target $\bbS^*$ as $\Rgr(\bbS^*)$ and $\Rno(\bbS^*)$ vary.
    Red denotes node pairs in group 1, blue node pairs in group 2, and gray node pairs in different groups.
    Both $\Rgr(\bbS^*)$ and $\Rno(\bbS^*)$ vary in (a)-(c), while $\Rgr(\bbS^*)$ is low but $\Rno(\bbS^*)$ varies in (d)-(f).
    (a) Majority of edges connect the \textit{same} group ($\Rgr(\bbS^*)$ and $\Rno(\bbS^*)$ high).
    (b) Balanced number of edges within and across groups ($\Rgr(\bbS^*)$ and $\Rno(\bbS^*)$ low).
    (c) Majority of edges connect \textit{different} groups ($\Rgr(\bbS^*)$ and $\Rno(\bbS^*)$ high).
    (d) Subgroups of nodes show strong within- or across-group preferences ($\Rgr(\bbS^*)$ low, $\Rno(\bbS^*)$ high).
    (e) Every node has relatively balanced edges across groups ($\Rgr(\bbS^*)$ and $\Rno(\bbS^*)$ low).
    (f) Alternate subgroups of nodes show strong within- or across-group preferences ($\Rgr(\bbS^*)$ low, $\Rno(\bbS^*)$ high).
    }
\label{fig:synth_topbias_vis}
\end{figure*}
% -----------------------------------------------

% -----------------------------------------------
\begin{figure*}[t]
    \centering
    \begin{minipage}[b][][b]{0.3\textwidth}
        \centering
        \scalebox{.85}{\begin{tikzpicture}[baseline,scale=.9,trim axis left, trim axis right]

\pgfmathsetmacro{\initmarksize}{3.5}

\pgfplotstableread{data/exp4_groupbias.csv}\errbiastable

\begin{axis}[
    % scale only axis,
    xlabel={(a) Across-group ratio},
    xmin=0.2,
    xmax=0.8,
    ylabel={Error $d(\hbS,\bbS^*)$},
    yticklabel style={
        /pgf/number format/fixed,
        /pgf/number format/precision=2,
        font=\footnotesize
    },
    ymin=-0.05,
    ymax=0.55,
    grid style=densely dashed,
    grid=both,
    legend style={
        at={(.5,1.02)},
        anchor=south,
        font=\footnotesize},
    legend columns=3,
    width=200,
    height=175,
    label style={font=\small},
    tick label style={font=\small}
    ]

    % ST
    \addplot[black!80!white, mark=triangle*, solid]  table 
        [x=ratio_range, y=err_med_st] {\errbiastable};

    % FST-C-G
    \addplot[Blue2, mark=o, solid] table 
        [x=ratio_range, y=err_med_fstcg] {\errbiastable};

    % FST-C-N
    \addplot[Red2, mark=square, solid, mark size=1.5pt] table 
        [x=ratio_range, y=err_med_fstcn] {\errbiastable};

    % FST-V-G
    \addplot[white!70!Blue2, mark=*, solid] table 
        [x=ratio_range, y=err_med_fstvg] {\errbiastable};

    % FST-V-N
    \addplot[white!70!Red2, mark=square*, solid, mark size=1.5pt] table 
        [x=ratio_range, y=err_med_fstvn] {\errbiastable};

    % \addlegendentry{FST-C-G}
    % \addlegendentry{FST-C-N}
    % \addlegendentry{ST}
    % \addlegendentry{FST-V-G}
    % \addlegendentry{FST-V-N}
\end{axis}

\end{tikzpicture}}
    \end{minipage}
    \hspace{.45cm}
    \begin{minipage}[b][][b]{0.3\textwidth}
        \centering
        \scalebox{.85}{\begin{tikzpicture}[baseline,scale=.9,trim axis left, trim axis right]

\pgfmathsetmacro{\initmarksize}{3.5}

\pgfplotstableread{data/exp4_groupbias.csv}\errbiastable

\begin{axis}[
    % scale only axis,
    xlabel={(b) Across-group ratio},
    xmin=0.2,
    xmax=0.8,
    ylabel={Group-wise bias $b_{\scriptscriptstyle \rm G}(\hbS)$},
    yticklabel style={
        /pgf/number format/fixed,
        /pgf/number format/precision=2,
        font=\footnotesize
    },
    ymin=-0.05,
    ymax=0.65,
    grid style=densely dashed,
    grid=both,
    legend style={
        at={(.5,1.02)},
        anchor=south,
        font=\footnotesize},
    legend columns=5,
    width=200,
    height=175,
    label style={font=\small},
    tick label style={font=\small}
    ]

    % ST (group bias)
    \addplot[black!80!white, mark=triangle*, solid]  table 
        [x=ratio_range, y=biasg_med_st] {\errbiastable};

    % FST-C-G (group bias)
    \addplot[Blue2, mark=o, solid] table 
        [x=ratio_range, y=biasg_med_fstcg] {\errbiastable};

    % FST-C-N (group bias)
    \addplot[Red2, mark=square, solid, mark size=1.5pt] table 
        [x=ratio_range, y=biasg_med_fstcn] {\errbiastable};

    % FST-V-G (group bias)
    \addplot[white!70!Blue2, mark=*, solid] table 
        [x=ratio_range, y=biasg_med_fstvg] {\errbiastable};

    % FST-V-N (group bias)
    \addplot[white!70!Red2, mark=square*, solid, mark size=1.5pt] table 
        [x=ratio_range, y=biasg_med_fstvn] {\errbiastable};
        
    % \addlegendentry{${\rm ST}$~~~~~~}
    % \addlegendentry{$\FSTC$, $\Rgr$~~~~~~}
    % \addlegendentry{$\FSTC$, $\Rno$~~~~~~}
    % \addlegendentry{$\FSTV$, $\Rgr$~~~~~~}
    % \addlegendentry{$\FSTV$, $\Rno$}
    \addlegendentry{$\st$~~~~~~}
    \addlegendentry{$\fstcg$~~~~~~}
    \addlegendentry{$\fstcn$~~~~~~}
    \addlegendentry{$\fstvg$~~~~~~}
    \addlegendentry{$\fstvn$}
\end{axis}

\end{tikzpicture}}
    \end{minipage}
    \hspace{.55cm}
    \begin{minipage}[b][][b]{0.3\textwidth}
        \centering
        \scalebox{.85}{\begin{tikzpicture}[baseline,scale=.9,trim axis left, trim axis right]

\pgfmathsetmacro{\initmarksize}{3.5}

\pgfplotstableread{data/exp4_groupbias.csv}\errbiastable

\begin{axis}[
    % scale only axis,
    xlabel={(c) Across-group ratio},
    xmin=0.2,
    xmax=0.8,
    ylabel={Node-wise bias $b_{\scriptscriptstyle \rm N}(\hbS)$},
    yticklabel style={
        /pgf/number format/fixed,
        /pgf/number format/precision=2,
        font=\footnotesize
    },
    ymin=0.15,
    ymax=0.85,
    grid style=densely dashed,
    grid=both,
    legend style={
        at={(.5,1.02)},
        anchor=south,
        font=\footnotesize},
    legend columns=3,
    width=200,
    height=175,
    label style={font=\small},
    tick label style={font=\small}
    ]

    % ST (node bias)
    \addplot[black!80!white, mark=triangle*, solid]  table 
        [x=ratio_range, y=biasn_med_st] {\errbiastable};

    % FST-C-G (node bias)
    \addplot[Blue2, mark=o, solid] table 
        [x=ratio_range, y=biasn_med_fstcg] {\errbiastable};

    % FST-C-N (node bias)
    \addplot[Red2, mark=square, solid, mark size=1.5pt] table 
        [x=ratio_range, y=biasn_med_fstcn] {\errbiastable};

    % FST-V-G (node bias)
    \addplot[white!70!Blue2, mark=*, solid] table 
        [x=ratio_range, y=biasn_med_fstvg] {\errbiastable};

    % FST-V-N (node bias)
    \addplot[white!70!Red2, mark=square*, solid, mark size=1.5pt] table 
        [x=ratio_range, y=biasn_med_fstvn] {\errbiastable};
\end{axis}

\end{tikzpicture}}
    \end{minipage}
    \\[.3cm]
    \begin{minipage}[b][][b]{0.3\textwidth}
        \centering
        \scalebox{.85}{\begin{tikzpicture}[baseline,scale=.9,trim axis left, trim axis right]

\pgfmathsetmacro{\initmarksize}{3.5}

\pgfplotstableread{data/exp4_indivbias.csv}\errbiastable

\begin{axis}[
    % scale only axis,
    xlabel={(d) Subgroup edges rewired},
    xmin=0.2,
    xmax=0.8,
    ylabel={Error $d(\hbS,\bbS^*)$},
    yticklabel style={
        /pgf/number format/fixed,
        /pgf/number format/precision=2,
        font=\footnotesize
    },
    ymin=-0.05,
    ymax=0.55,
    grid style=densely dashed,
    grid=both,
    legend style={
        at={(.5,1.02)},
        anchor=south,
        font=\footnotesize},
    legend columns=3,
    width=200,
    height=175,
    label style={font=\small},
    tick label style={font=\small}
    ]

    % FST-C-G
    \addplot[Blue2, mark=o, solid] table 
        [x=ratio_range, y=err_med_fstcg] {\errbiastable};

    % FST-C-N
    \addplot[Red2, mark=square, solid, mark size=1.5pt] table 
        [x=ratio_range, y=err_med_fstcn] {\errbiastable};

    % ST
    \addplot[black!80!white, mark=triangle*, solid]  table 
        [x=ratio_range, y=err_med_st] {\errbiastable};

    % FST-V-G
    \addplot[white!70!Blue2, mark=*, solid] table 
        [x=ratio_range, y=err_med_fstvg] {\errbiastable};

    % FST-V-N
    \addplot[white!70!Red2, mark=square*, solid, mark size=1.5pt] table 
        [x=ratio_range, y=err_med_fstvn] {\errbiastable};
    
    % \addlegendentry{FST-C-G}
    % \addlegendentry{FST-C-N}
    % \addlegendentry{ST}
    % \addlegendentry{FST-V-G}
    % \addlegendentry{FST-V-N}
\end{axis}

\end{tikzpicture}}
    \end{minipage}
    \hspace{.45cm}
    \begin{minipage}[b][][b]{0.3\textwidth}
        \centering
        \scalebox{.85}{\begin{tikzpicture}[baseline,scale=.9,trim axis left, trim axis right]

\pgfmathsetmacro{\initmarksize}{3.5}

\pgfplotstableread{data/exp4_indivbias.csv}\errbiastable

\begin{axis}[
    % scale only axis,
    xlabel={(e) Subgroup edges rewired},
    xmin=0.2,
    xmax=0.8,
    ylabel={Group-wise bias $b_{\scriptscriptstyle \rm G}(\hbS)$},
    yticklabel style={
        /pgf/number format/fixed,
        /pgf/number format/precision=2,
        font=\footnotesize
    },
    ymin=-0.05,
    ymax=0.65,
    grid style=densely dashed,
    grid=both,
    legend style={
        at={(.5,1.02)},
        anchor=south,
        font=\footnotesize},
    legend columns=5,
    width=200,
    height=175,
    label style={font=\small},
    tick label style={font=\small}
    ]

    % ST (group bias)
    \addplot[black!80!white, mark=triangle*, solid]  table 
        [x=ratio_range, y=biasg_med_st] {\errbiastable};

    % FST-C-G (group bias)
    \addplot[Blue2, mark=o, solid] table 
        [x=ratio_range, y=biasg_med_fstcg] {\errbiastable};

    % FST-C-N (group bias)
    \addplot[Red2, mark=square, solid, mark size=1.5pt] table 
        [x=ratio_range, y=biasg_med_fstcn] {\errbiastable};

    % FST-V-G (group bias)
    \addplot[white!70!Blue2, mark=*, solid] table 
        [x=ratio_range, y=biasg_med_fstvg] {\errbiastable};

    % FST-V-N (group bias)
    \addplot[white!70!Red2, mark=square*, solid, mark size=1.5pt] table 
        [x=ratio_range, y=biasg_med_fstvn] {\errbiastable};
        
    % \addlegendentry{${\rm ST}$~~~~~~}
    % \addlegendentry{$\FSTC$, $\Rgr$~~~~~~}
    % \addlegendentry{$\FSTC$, $\Rno$~~~~~~}
    % \addlegendentry{$\FSTV$, $\Rgr$~~~~~~}
    % \addlegendentry{$\FSTV$, $\Rno$}
\end{axis}

\end{tikzpicture}}
    \end{minipage}
    \hspace{.55cm}
    \begin{minipage}[b][][b]{0.3\textwidth}
        \centering
        \scalebox{.85}{\begin{tikzpicture}[baseline,scale=.9,trim axis left, trim axis right]

\pgfmathsetmacro{\initmarksize}{3.5}

\pgfplotstableread{data/exp4_indivbias.csv}\errbiastable

\begin{axis}[
    % scale only axis,
    xlabel={(f) Subgroup edges rewired},
    xmin=0.2,
    xmax=0.8,
    ylabel={Node-wise bias $b_{\scriptscriptstyle \rm N}(\hbS)$},
    yticklabel style={
        /pgf/number format/fixed,
        /pgf/number format/precision=2,
        font=\footnotesize
    },
    ymin=0.15,
    ymax=0.85,
    grid style=densely dashed,
    grid=both,
    legend style={
        at={(.5,1.02)},
        anchor=south,
        font=\footnotesize},
    legend columns=3,
    width=200,
    height=175,
    label style={font=\small},
    tick label style={font=\small}
    ]

    % FST-C-G (node bias)
    \addplot[Blue2, mark=o, solid] table 
        [x=ratio_range, y=biasn_med_fstcg] {\errbiastable};

    % FST-C-N (node bias)
    \addplot[Red2, mark=square, solid, mark size=1.5pt] table 
        [x=ratio_range, y=biasn_med_fstcn] {\errbiastable};

    % ST (node bias)
    \addplot[black!80!white, mark=triangle*, solid]  table 
        [x=ratio_range, y=biasn_med_st] {\errbiastable};

    % FST-V-G (node bias)
    \addplot[white!70!Blue2, mark=*, solid] table 
        [x=ratio_range, y=biasn_med_fstvg] {\errbiastable};

    % FST-V-N (node bias)
    \addplot[white!70!Red2, mark=square*, solid, mark size=1.5pt] table 
        [x=ratio_range, y=biasn_med_fstvn] {\errbiastable};
    
    % \addlegendentry{FST-C-G}
    % \addlegendentry{FST-C-N}
    % \addlegendentry{ST}
    % \addlegendentry{FST-V-G}
    % \addlegendentry{FST-V-N}

    % \addplot[black!10!blue, mark=o, solid] table [x=errST, y=biasST] {\errbiastable};
    % \addplot[mark options={black!50!blue}, mark repeat=8, mark size=\initmarksize, only marks, mark=*, forget plot] table [x=errST, y=biasST] {\errbiastable};

    % \addplot[black!20!red, mark=square, solid] table [x=errFST, y=biasFST] {\errbiastable};
    % \addplot[mark options={black!50!red}, mark repeat=8, mark size=\initmarksize, only marks, mark=square*, forget plot] table [x=errFST, y=biasFST] {\errbiastable};

    % \addlegendentry{SpecTemp}
    % \addlegendentry{FairSpecTemp}
\end{axis}

\end{tikzpicture}}
    \end{minipage}
    % 
    % \hspace{.4cm}
    
    \caption{
    % Bias and error of estimated graphs as topology of target $\bbS^*$ varies.
    \textbf{Top row:} Performance of estimates $\hbS$ as $\Rgr(\bbS^*)$ and $\Rno(\bbS^*)$ vary corresponding to Figure~\ref{fig:synth_topbias_vis}(a)-(c).
    % \\
    \textbf{Bottom row:} Performance of estimates $\hbS$ for low $\Rgr(\bbS^*)$ as $\Rno(\bbS^*)$ varies corresponding to Figure~\ref{fig:synth_topbias_vis}(d)-(f).
    % \textbf{Top row:} Performance of estimated GSOs $\hbS$ as group-wise bias $\Rgr(\bbS^*)$ varies.
    % \textbf{Bottom row:} Performance of estimated GSOs $\hbS$ as node-wise bias $\Rno(\bbS^*)$ varies.
    % \\
    \textbf{Left column:} (a) and (d) depict estimation error $d(\hbS,\bbS^*)$.
    \textbf{Middle column:} (b) and (e) show \emph{group-wise} bias $b_{\scriptscriptstyle\rm G}(\hbS)$.
    \textbf{Right column:} (c) and (f) show \emph{node-wise} bias $b_{\scriptscriptstyle\rm N}(\hbS)$.
    % \\
    % (a) Error $d(\hbS,\bbS^*)$.
    % (b) Group-wise bias $b_{\scriptscriptstyle\rm G}(\hbS)$.
    % (c) Node-wise bias $b_{\scriptscriptstyle\rm N}(\hbS)$.
    % \\
    % (d) Error $d(\hbS,\bbS^*)$.
    % (e) Group-wise bias $b_{\scriptscriptstyle\rm G}(\hbS)$.
    % (f) Node-wise bias $b_{\scriptscriptstyle\rm N}(\hbS)$.
    }
\label{fig:synth_topbias}
\end{figure*}
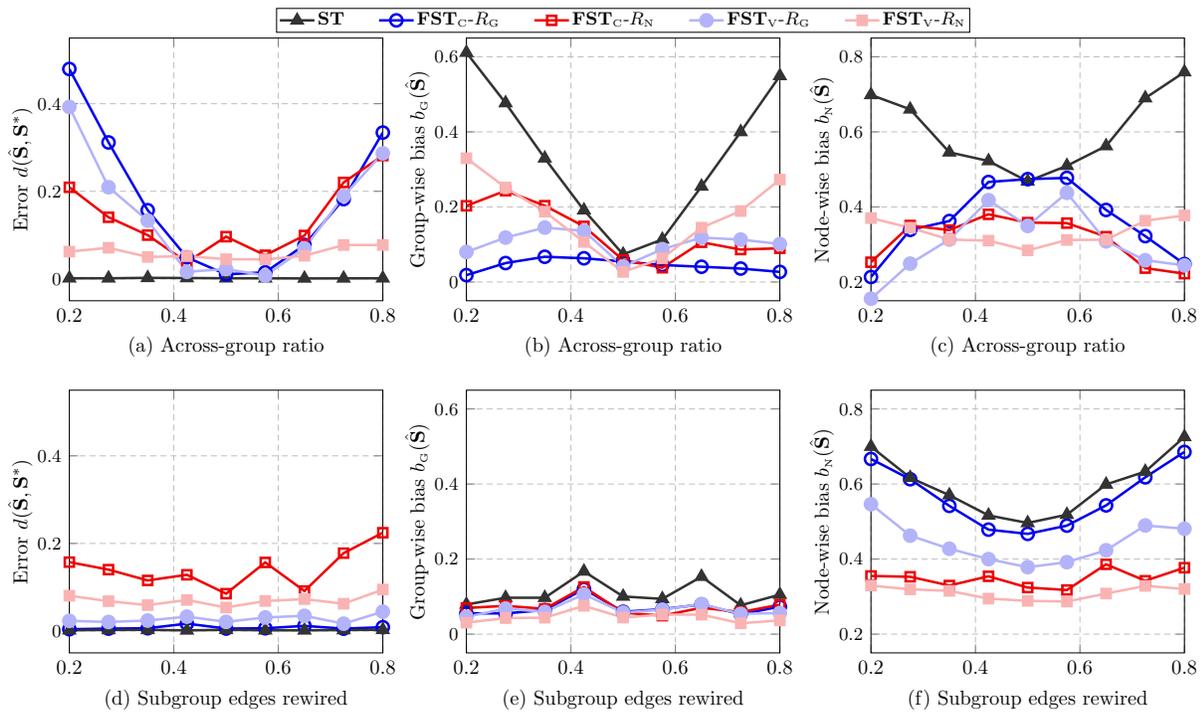
% -----------------------------------------------

% =================================================
\subsection{Group Versus Individual Fairness}
\label{ss:group-v-indiv}

We first estimate unbiased graphs by imposing either group fairness via $\Rgr$ in~\eqref{eq:Rgr} or individual fairness via $\Rno$ in~\eqref{eq:Rno} as the connectivity of the target graph $\bbS^*$ varies.
We compare $\st$ to all four variants of FairSpecTemp,~\eqref{eq:opt_C_l1} and~\eqref{eq:opt_V_P_l1} with $R \in \{ \Rgr,\Rno \}$.
% (i)~$\st$: SpecTemp without fairness, that is,~\eqref{eq:opt_C_l1} with $\beta=0$~\citep{segarra2017networktopologyinference}, 
% (ii)~$\fstcg$: FairSpecTemp via~\eqref{eq:opt_C_l1} with $R=\Rgr$,
% (iii)~$\fstcn$: FairSpecTemp via~\eqref{eq:opt_C_l1} with $R=\Rno$,
% (iv)~$\fstvg$: FairSpecTemp via~\eqref{eq:opt_V_P_l1} with $R=\Rgr$,
% and 
% (v)~$\fstvn$: FairSpecTemp via~\eqref{eq:opt_V_P_l1} with $R=\Rno$.
% 
We consider $N=40$ nodes, $G=2$ groups, $M=10^5$ graph signals, and, for a fixed number of edges, we modify the topology in $\bbS^*$ to vary the bias $\Rgr(\bbS^*)$ or $\Rno(\bbS^*)$.
In particular, we examine the following two scenarios.

First, we vary the ratio of across-group edges, altering both $\Rgr(\bbS^*)$ and $\Rno(\bbS^*)$.
We start with a graph that is highly modular with respect to groups, where the fraction of across-group edges is 0.2, visualized in Figure~\ref{fig:synth_topbias_vis}(a).
Then, we randomly rewire an increasing number of within-group edges to connect nodes across groups until the across-group fraction is 0.8, that is, the graph is highly bipartite with respect to groups, shown in Figure~\ref{fig:synth_topbias_vis}(c).
The graph is fair with respect to both $\Rgr(\bbS^*)$ and $\Rno(\bbS^*)$ when the across-group edge ratio is 0.5 in Figure~\ref{fig:synth_topbias_vis}(b), as each node has an equal likelihood of connecting to either group.
We plot the error, group-wise bias, and node-wise bias for the estimated GSOs in Figure~\ref{fig:synth_topbias}(a)-(c) corresponding to the scenario in Figure~\ref{fig:synth_topbias_vis}(a)-(c).
As expected, all FairSpecTemp approaches achieve fairer GSOs than $\st$.
However, when $\bbS^*$ is unfair, that is, the across-group edge ratio is closer to 0 or 1, the improved bias comes at a sacrifice in accuracy, particularly for $\fstcg$ and $\fstvg$, which yield the lowest group-wise bias $b_{\scriptscriptstyle\rm G}$.
Conversely, when $\bbS^*$ is fair with an across-group edge ratio of 0.5, only $\fstcg$ and $\fstvg$ rival $\st$ in both error and group-wise bias $b_{\scriptscriptstyle\rm G}$.
Interestingly, $\fstvg$ also improves the node-wise bias $b_{\scriptscriptstyle\rm N}$, aligning with our intuition that the increased degrees of freedom for~\eqref{eq:opt_V_P_l1} allows greater flexibility in estimation, potentially mitigating the tradeoff between fairness and accuracy.
% 
% this is expected because of theoretical results...

Second, we consider a low $\Rgr(\bbS^*)$ but a varying $\Rno(\bbS^*)$.
To this end, we partition both groups into two subgroups.
We first generate a graph such that one subgroup per group exhibits high within-group connectivity, and we randomly rewire an increasing subset of these edges to connect across groups.
Similarly, the remaining two subgroups exhibit high across-group connectivity, which are increasingly rewired to become within-group edges.
Since the number of within- versus across-group edges remain balanced, the group-wise bias $\Rgr(\bbS^*)$ remains low, but if the fraction of rewired edges is close to 0 or 1 as depicted in Figure~\ref{fig:synth_topbias_vis}(d) and (f), then $\Rno(\bbS^*)$ will be high, as nodes show strong preferences for either within- or across-group connections.
Figure~\ref{fig:synth_topbias_vis}(e) shows when $\Rno(\bbS^*)$ is low, that is, when the fraction of rewired edges is 0.5, as all nodes demonstrate balanced connectivity between both groups.
Graph estimation performance for this setting is shown in Figure~\ref{fig:synth_topbias}(d)-(f).
Since $\Rgr(\bbS^*)$ is always low, the group-wise fair methods $\fstcg$ and $\fstvg$ obtain estimates similarly as accurate as $\st$, and all methods obtain a low group-wise bias $b_{\rm\scriptscriptstyle G}$.
However, only $\fstcn$ and $\fstvn$ also reduce the node-wise bias $b_{\rm\scriptscriptstyle N}$, with the greatest sacrifice in accuracy for $\fstcn$.
Indeed, Figure~\ref{fig:synth_topbias} demonstrates scenarios where~\eqref{eq:opt_V_P_l1} exhibits superior performance, as $\fstvn$ shows the lowest bias in terms of both $b_{\rm \scriptscriptstyle G}$ and $b_{\rm \scriptscriptstyle N}$ with less error than $\fstcn$, while $\fstvg$ may incur greater $b_{\rm \scriptscriptstyle N}$, yet it remains lower than that of $\fstcg$.
Figure~\ref{fig:synth_topbias} illustrates scenarios in which we may prefer encouraging group versus individual fairness by choice of $R$; in the remaining experiments, we focus on the more common goal of group fairness, that is, we measure estimation bias with $b_{\rm \scriptscriptstyle G}$.
% =================================================

% =================================================
\subsection{Performance as Graph Size Varies}
\label{ss:vary_size}

To further investigate the differences between FairSpecTemp in~\eqref{eq:opt_C_l1} and~\eqref{eq:opt_V_P_l1}, we next consider estimating unbiased graphs as the number of nodes increases.
We generate Erdos-Renyi (ER) graphs with $G=2$ groups and $M=10^6$ stationary graph signals while increasing the number of nodes $N$.
To maintain comparable sparsity as $N$ varies, we let the ER edge probabilities be $p = \frac{5}{N-1}$ for a constant expected degree of 5.
% and the edge probabilities $p = \frac{5}{N-1}$, maintaining an expected degree of 5 for comparable sparsity as $N$ varies.
We compare graph estimation with and without the fairness constraints: $\st$, $\fstcg$, and $\fstvg$.
% \draft{
Figure~\ref{fig:synth_biaserror}(a) reports both the estimation error $d(\hbS,\bbS^*)$ (left $y$-axis) and the group-wise bias $b_{\rm \scriptscriptstyle G}(\hbS)$ (right $y$-axis).
As expected, both FairSpecTemp approaches achieve lower bias than $\st$, but $\fstcg$ with an explicit bias constraint $\Rgr(\hbS) \leq \tau^2$ exhibits more stability in both bias and error across all graph sizes $N$.
In contrast, $\fstvg$ maintains a higher error than both $\fstcg$ and $\st$, yet only $\fstvg$ decreases both error and bias as $N$ increases, with a more competitive error between $\fstvg$ and $\fstcg$ at $N = 100$.
Thus, while $\fstcg$ produces the more expected result of balancing bias and error, the more flexible $\fstvg$ can obtain fairer graphs for larger graph sizes.
% }
% 
% \draft{We evaluate the performance of our proposed methods in estimating unbiased graphs as the network size increases. Using ER graphs, we vary the number of nodes \( N \) while keeping the number of samples fixed at \(10^6\). To ensure comparable sparsity across graph sizes, the connection probability is adjusted according to \( p = \frac{5}{N-1} \), maintaining an expected degree of 5. 
% We compare three methods: $\st$, $\fstcg$ and $\fstvg$. Figure~\ref{fig:synth_biaserror}(a) reports the groupwise bias (left y-axis) and estimation error (right y-axis) as a function of \( N \). 
% Both FST-based methods consistently achieve lower bias than ST, with $\fstcg$ showing stable bias estimation across all graph sizes and $\fstvg$ exhibiting decreasing bias as \( N \) increases. 
% In terms of estimation error, $\fstcg$ maintains a nearly constant and low error, outperforming both $\st$ and $\fstvg$, whereas $\fstvg$ incurs higher error, similar to $\st$. 
% These results highlight a trade-off: while $\fstvg$ is more effective at reducing bias in larger networks, $\fstcg$ provides a more balanced performance by achieving both low bias and stable error. 
% There is no universally superior method, and the choice depends on the application's sensitivity to bias versus estimation accuracy.}
% =================================================

% =================================================
\subsection{Estimating Fair Graphs from Biased Data}
\label{ss:biased_data}

We next consider a setting likely to be encountered in real-world data: The graph structure to be estimated is fair, but the observed data is biased.
% Here, we not only compare $\fstcg$ and $\fstvg$, but we also demonstrate how they perform relative to $\st$ and $\fstcn$ for graph data exhibiting within-group preferences.
To simulate an increasing level of within-group preference in the graph data, we first generate an ER graph with GSO $\bbS^*$ of $N=100$ nodes and edge probability $p=0.05$ so the connections are fair with respect to the $G=2$ nodal groups.
Then, we progressively bias the connections in $\bbS^*$ by increasing within-group edge weights while simultaneously decreasing across-group edge weights, and we generate increasingly biased sets of $M=10^6$ samples from the perturbed graphs.
We plot the error and group-wise bias for the estimated graphs in Figure~\ref{fig:synth_biaserror} as the level of bias in the data increases.
Each curve corresponds to a method $\st$, $\stba$, $\fstcg$, $\fstvg$, and $\fstcn$, and filled, darker markers represent the least biased data.

We observe that $\st$ sees the greatest degradation in terms of both accuracy and fairness, whereas all FairSpecTemp methods become less accurate as the data grows more biased but maintain lower bias in estimation.
Since we inject bias in data via imbalanced edge weights, we examine how effectively $\stba$ can mitigate this effect by rebalancing edge weights.
As expected, reweighting edges in $\stba$ may reduce bias, but altering the estimate from $\st$ can require a larger sacrifice of accuracy for a sufficiently fair outcome.
% altering the estimate from $\st$ may reduce bias in $\stba$, but reweighting edges to reduce bias
As for the node-wise $\fstcn$, we consistently see slightly higher bias and error relative to both $\fstcg$ and $\fstvg$.
This is particularly noticeable when the bias in the data is small (filled markers in Figure~\ref{fig:synth_biaserror}(b)), as the stricter node-wise constraint for $\fstcn$ may struggle to achieve a similarly low bias without a greater sacrifice in accuracy, as shown in Figure~\ref{fig:synth_topbias}.
We also note a tradeoff between $\fstcg$ and $\fstvg$ as the observed data differs further from the true, fair distribution.
When the data is fairer, both $\fstcg$ and $\fstvg$ achieve Pareto optimal solutions with comparable fairness and accuracy.
However, as the bias in the data grows, $\fstvg$ maintains the fairest estimates but incurs increasingly greater error, whereas graphs estimated via $\fstcg$ may not be as fair, but in comparison with $\st$, $\fstcg$ maintains a competitive accuracy with far lower bias.
\draft{ 
}
% =================================================

% -----------------------------------------------
\begin{figure*}[t]
    \centering
    \begin{minipage}[b][][b]{0.3\textwidth}
        \scalebox{.85}{\begin{tikzpicture}[baseline,scale=0.9,trim axis left, trim axis right]

\pgfplotstableread{data/exp3_results_median.csv}\errtable

\definecolor{btr}{RGB}{255,255,255}

% Bias 
\definecolor{bnp}{RGB}{8, 48, 107}     
\definecolor{bdp}{RGB}{8, 81, 156}     
\definecolor{bnw}{RGB}{33, 102, 172}    

% Error 
\definecolor{fnp}{RGB}{103, 0, 13}      
\definecolor{fdp}{RGB}{178, 24, 43}     
\definecolor{fnw}{RGB}{214, 96, 77}     

% Axis
\definecolor{fax}{RGB}{200,0,3}
\definecolor{bax}{RGB}{4,50,255}

\begin{axis}[
    xlabel={},
    xtick={},
    xmin=30,
    xmax=100,
    axis y line*=right,
    ylabel={\textcolor{Red2}{Group-wise bias $b_{\scriptscriptstyle \rm G}(\hbS)$}},
    ylabel shift=-10pt,
    ylabel style={ rotate=180 },
    xticklabel style={font=\footnotesize},
    yticklabel style={
        Red2,
        % Blue2,
        /pgf/number format/fixed,
        /pgf/number format/precision=2,
        font=\footnotesize
    },
    % y tick label style = {fax},
    % ymin=2e-4,
    % ymax=2,
    ymin=2e-2,
    ymax=0.18,
    % ymode=log,
    grid style=densely dashed,
    grid=major,
    legend style={
        at={(.5,1.02)},
        anchor=south,
        font=\footnotesize},
    legend columns=6,
    width=180,
    height=175,
    xtick pos=left,
    ytick pos=right,
    label style={font=\small},
    tick label style={font=\small}
    ]
    
    \addlegendimage{empty legend}
        \label{bias_empty}
    \addplot[black!40!white, mark=triangle*, densely dashed, mark size=3pt] 
        table [x=NN, y=ST-group-bias] {\errtable};
        \label{bias_st}
    \addplot[Red2, mark=*, densely dashed] 
        table [x=NN, y=FST-group-bias] {\errtable};
        \label{bias_fstcg}
    \addplot[white!60!Red2, mark=square*, densely dashed] 
        table [x=NN, y=FST-Eig-group-bias] {\errtable};
        \label{bias_fstvg}

\end{axis}

\begin{axis}[
    xlabel={(a) Number of nodes $N$},
    xmin=30,
    xmax=100,
    axis y line*=left,
    ylabel={\textcolor{Blue2}{Error $d(\hbS,\bbS^*)$}},
    xticklabel style={font=\footnotesize},
    yticklabel style={
        % Red2,
        Blue2,
        /pgf/number format/fixed,
        /pgf/number format/precision=2,
        font=\footnotesize
    },
    % y tick label style = {bax},
    % ymin=2e-2,
    % ymax=0.18,
    ymin=1.1e-4,
    ymax=2,
    ymode=log,
    legend style={
        at={(.5,1.02)},
        anchor=south,
        font=\footnotesize},
    legend columns=4,
    width=180,
    height=175,
    xtick pos=left,
    ytick pos=left,
    label style={font=\small},
    tick label style={font=\small}
    ]
    
    \addlegendimage{empty legend}
        \label{err_empty}
     \addplot[black!85!white, mark=triangle, solid, mark size=3pt] 
        table [x=NN, y=ST-group-error] {\errtable};
        \label{err_st}
    \addplot[Blue2, mark=o, solid] 
        table [x=NN, y=FST-group-error] {\errtable};
        \label{err_fstcg}
    \addplot[white!60!Blue2, mark=square, solid] 
        table [x=NN, y=FST-Eig-group-error] {\errtable};
        \label{err_fstvg}

    \addlegendimage{empty legend}
    \addlegendimage{/pgfplots/refstyle=bias_st}
    \addlegendimage{/pgfplots/refstyle=bias_fstcg}
    \addlegendimage{/pgfplots/refstyle=bias_fstvg}
    
    \addlegendentry{\textbf{Left axis:}~~}
    \addlegendentry{$\st$}
    \addlegendentry{$\fstcg$}
    \addlegendentry{$\fstvg$}
    \addlegendentry{\textbf{Right axis:}~~}
    \addlegendentry{$\st$}
    \addlegendentry{$\fstcg$}
    \addlegendentry{$\fstvg$}
\end{axis}

\end{tikzpicture}}
    \end{minipage}
    \hspace{0.9cm}
    \begin{minipage}[b][][b]{0.3\textwidth}
        \scalebox{.85}{\begin{tikzpicture}[baseline,scale=.9,trim axis left, trim axis right]

\pgfmathsetmacro{\initmarksize}{3.5}

\pgfplotstableread{data/exp2_bias_error_table_rw2.csv}\errbiastable

\begin{axis}[
    % scale only axis,
    xlabel={(b) Error $d(\hbS,\bbS^*)$},
    % xmin=1e2,
    % xmax=1e5,
    ylabel={Group-wise bias $b_{\scriptscriptstyle \rm G}(\hbS)$},
    xticklabel style={font=\footnotesize},
    yticklabel style={
        /pgf/number format/fixed,
        /pgf/number format/precision=2,
        font=\footnotesize
    },
    ymax=1.55,
    grid style=densely dashed,
    grid=both,
    legend style={
        at={(.5,1.02)},
        anchor=south,
        font=\footnotesize},
    legend columns=3,
    width=180,
    height=175,
    label style={font=\small},
    tick label style={font=\small}
    ]

    % ST
    \addplot[black!80!white, mark=triangle, solid, mark size=2pt] table [x=ST-group-error, y=ST-group-bias] {\errbiastable};
    \addplot[mark options={black!40!white}, mark repeat=20, mark size=\initmarksize, only marks, mark=triangle*, forget plot] table [x=ST-group-error, y=ST-group-bias] {\errbiastable};

    % FST-G
    \addplot[Blue2, mark=o, solid] table [x=FST-group-error, y=FST-group-bias] {\errbiastable};
    \addplot[mark options={black!30!Blue2}, mark repeat=20, mark size=\initmarksize, only marks, mark=*, forget plot] table [x=FST-group-error, y=FST-group-bias] {\errbiastable};

    % FST-N
    \addplot[Red2, mark=square, solid, mark size=1.5pt]  table [ x=FST-node-error, y=FST-node-bias] {\errbiastable};
    \addplot[mark options={black!30!Red2}, mark repeat=20, mark size=\initmarksize, only marks, mark=square*, forget plot] table [x=FST-node-error, y=FST-node-bias] {\errbiastable};

    % ST ba
    \addplot[white!20!Purple2, mark=diamond, solid, mark size=2pt] table [x=Rw-group-error, y=Rw-group-bias] {\errbiastable};
    \addplot[mark options={black!20!Purple2}, mark repeat=20, mark size=\initmarksize, only marks, mark=diamond*, forget plot] table [x=Rw-group-error, y=Rw-group-bias] {\errbiastable};
    
    % \addlegendimage{empty legend}

    % FST-EIG
    \addplot[white!70!Blue2, mark=o, solid] table [x=ST-Eig-group-error, y=ST-Eig-group-bias] {\errbiastable};
    \addplot[mark options={white!40!Blue2}, mark repeat=20, mark size=\initmarksize, only marks, mark=*, forget plot] table [x=ST-Eig-group-error, y=ST-Eig-group-bias] {\errbiastable};

    % % FGL
    % \addplot[black!10!Green2, mark=triangle, solid, mark options={rotate=180}, mark size=3pt] table [x=FGL-group-error, y=FGL-group-bias] {\errbiastable};
    % \addplot[black!50!Green2, mark repeat=20, mark size=\initmarksize, only marks, mark=triangle*, forget plot, mark options={rotate=180}] table [x=FGL-group-error, y=FGL-group-bias] {\errbiastable};

    \addlegendentry{$\st$~}
    \addlegendentry{$\fstcg$~}
    \addlegendentry{$\fstcn$}
    \addlegendentry{$\stba$~}
    % \addlegendentry{}
    \addlegendentry{$\fstvg$}
    % \addlegendentry{${\rm FGL}$, $\Rgr$}
\end{axis}

\end{tikzpicture}}
    \end{minipage}
    \hspace{0.1cm}
    \begin{minipage}[b][][b]{0.3\textwidth}
        \scalebox{.85}{\begin{tikzpicture}[baseline,scale=.9,trim axis left, trim axis right]

\pgfmathsetmacro{\initmarksize}{3.5}

\pgfplotstableread{data/exp1_errors_and_bias.csv}\errbiastable

\begin{axis}[
    % scale only axis,
    xlabel={(c) Error $d(\hbS,\bbS^*)$},
    % xmin=1e2,
    % xmax=1e5,
    ylabel={Group-wise bias $b_{\scriptscriptstyle \rm G}(\hbS)$},
    xticklabel style={font=\footnotesize},
    yticklabel style={
        /pgf/number format/fixed,
        /pgf/number format/precision=2,
        font=\footnotesize
    },
    grid style=densely dashed,
    grid=both,
    legend style={
        at={(.5,1.02)},
        anchor=south,
        font=\footnotesize},
    legend columns=3,
    width=180,
    height=175,
    label style={font=\small},
    tick label style={font=\small}
    ]
    
    % ST
    \addplot[black!80!white, mark=triangle, solid, mark size=2pt] table [x=mean-errors, y=mean-est-bias] {\errbiastable};
    \addplot[mark options={black!40!white}, mark repeat=20, mark size=\initmarksize, only marks, mark=triangle*, forget plot] table [x=mean-errors, y=mean-est-bias] {\errbiastable};

    % FST-G
    % \addplot[black!10!blue, mark=o, solid] table [x=mean-ferrors, y=mean-fest-bias] {\errbiastable};
    % \addplot[mark options={black!50!blue}, mark repeat=20, mark size=\initmarksize, only marks, mark=*, forget plot] table [x=mean-ferrors, y=mean-fest-bias] {\errbiastable};
    \addplot[Blue2, mark=o, solid] table [x=mean-ferrors, y=mean-fest-bias] {\errbiastable};
    \addplot[mark options={black!30!Blue2}, mark repeat=20, mark size=\initmarksize, only marks, mark=*, forget plot] table [x=mean-ferrors, y=mean-fest-bias] {\errbiastable};

    % FST-N
    \addplot[Red2, mark=square, solid, mark size=1.5pt]  table [ x=mean-ferrorsn, y=mean-fest-biasn] {\errbiastable};
    \addplot[mark options={black!30!Red2}, mark repeat=20, mark size=\initmarksize, only marks, mark=square*, forget plot] table [x=mean-ferrorsn, y=mean-fest-biasn] {\errbiastable};
    
    % ST-RW
    \addplot[black!5!Gold2, mark=diamond, solid, mark size=2pt] table [x=mean-rw-errors, y=mean-rw-est-bias] {\errbiastable};
    \addplot[mark options={black!20!Gold2}, mark repeat=20, mark size=\initmarksize, only marks, mark=diamond*, forget plot] table [x=mean-rw-errors, y=mean-rw-est-bias] {\errbiastable};

    % FST-EIG
    \addplot[white!70!Blue2, mark=o, solid] table [x=mean-errors-eig, y=mean-est-bias-eig] {\errbiastable};
    \addplot[mark options={white!40!Blue2}, mark repeat=20, mark size=\initmarksize, only marks, mark=*, forget plot] table [x=mean-errors-eig, y=mean-est-bias-eig] {\errbiastable};

    % FGL
    \addplot[black!10!Green2, mark=triangle, solid, mark options={rotate=180}, mark size=3pt] table [x=mean-errors-FGL, y=mean-est-bias-FGL] {\errbiastable};
    \addplot[black!50!Green2, mark repeat=20, mark size=\initmarksize, only marks, mark=triangle*, forget plot, mark options={rotate=180}] table [x=mean-errors-FGL, y=mean-est-bias-FGL] {\errbiastable};

    \addlegendentry{$\st$}
    \addlegendentry{$\fstcg$}
    \addlegendentry{$\fstcn$}
    \addlegendentry{$\strw$}
    \addlegendentry{$\fstvg$}
    \addlegendentry{$\fgl$}
\end{axis}

\end{tikzpicture}}
    \end{minipage}
    \caption{Performance of estimates $\hbS$ for different graph estimation methods under varying conditions. Bias $b_{\rm\scriptscriptstyle G}(\hbS)$ and error $d(\hbS,\bbS^*)$ (a) as the number of nodes $N$ increases, (b) as the data becomes more biased toward within-group connections,  
    (c) as the number of samples $M$ increases.}
    \label{fig:synth_biaserror}
\end{figure*}
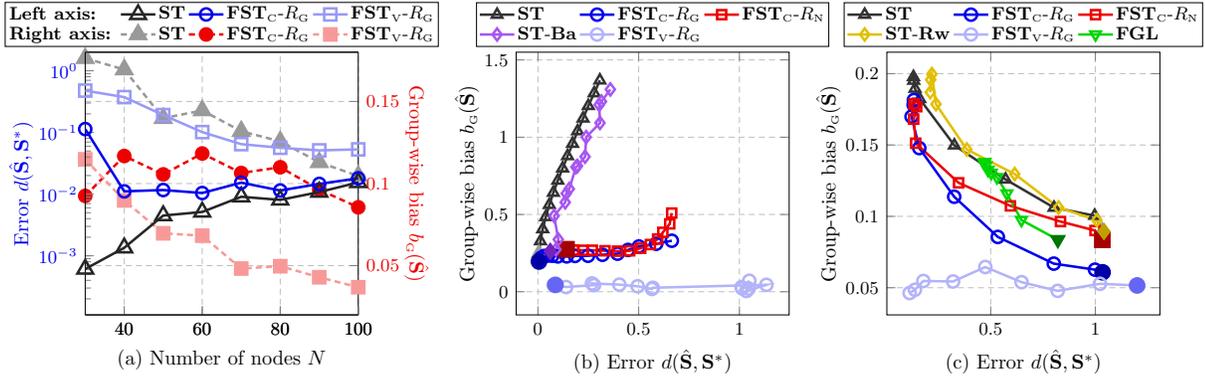
% -----------------------------------------------

% =================================================
\subsection{Comparing Fair Graph Estimation Approaches}
\label{ss:baseline}

We compare FairSpecTemp in~\eqref{eq:opt_C_l1} and~\eqref{eq:opt_V_P_l1} with different fair graph estimation baselines, where we estimate ER graphs of $N=100$ nodes and an edge probability $p=0.05$ for $G=2$ imbalanced groups.
We show bias and error as the number of stationary graph signals increases from $M=10^2$ (filled markers) to $M=10^6$ in Figure~\ref{fig:synth_biaserror}(c).
Since the true graph $\bbS^*$ is biased due to imbalanced groups, a consistent trend across methods is an increase in both accuracy and group-wise bias as $M$ increases.
% unlike fig3b, a more accurate sample covariance estimated from more graph signals will hurt the bias since the true graph is unfair since we are better able to recover inherent biases in the underlying system
In general, all fair graph methods achieve lower bias than $\st$ for all values of $M$.
The random edge rewirings in $\strw$ can achieve slight improvements in bias relative to $\st$, but the additional stochasticity also introduces a slight increase in error.
While $\fgl$ obtains consistently fairer graphs than $\st$, it assumes a stricter signal model than our graph-stationarity assumption, yielding worse error at higher $M$.
As for FairSpecTemp methods, we first note the value of direct bias constraints via~\eqref{eq:opt_C_l1}: $\fstcg$ and $\fstcn$ both achieve lower bias than $\st$ while maintaining accuracy.
However, since $\Rno$ is stricter than $\Rgr$, $\fstcn$ requires a looser constraint to maintain competitive accuracy, hence the fairer estimates from $\fstcg$.
We also observe the different advantages between FairSpecTemp in~\eqref{eq:opt_C_l1} and~\eqref{eq:opt_V_P_l1}.
When $M$ is low, our estimates of the covariance matrix eigenvectors $\hbV$ will be poor and therefore detrimental for the flexible $\fstvg$, therefore applying $\fstcg$ is preferable.
However, in a high-sample regime, we observe the power of increased degrees of freedom in $\fstvg$, as we achieve competitive accuracy in estimation, but we see a dramatic improvement in bias for $\fstvg$ relative to all other methods.
This reveals the value of both variants of FairSpecTemp.
We may require explicit bias constraints via $\fstcg$, but certain settings allow us to mitigate the tradeoff between fairness and accuracy using $\fstvg$.
% =================================================

% =================================================
\subsection{Estimating Fair Graphs for Investing}
\label{ss:finance}

Finally, we apply graph estimation and our group-wise bias metric $\Rgr$ for a real-world financial investment task, where we estimate graphs from real-world market data.
Specifically, we consider graphs connecting companies from the S\&P 500 index with groups corresponding to different sectors, and, as our nodal observations, we observe company log-returns indicating changes in market value over time.
A given set of $N$ companies and $M$ daily log-return values comprises a data matrix $\bbX \in \reals^{N \times M}$ from which we estimate graphs indicating which companies exhibit similarities in their return profiles.
Thus, a common approach is to use graph structure to design investment strategies, such as investing when the graph exhibits low connectivity since minimizing correlations among investments is key to reducing risk~\citep{cardoso2020learning}.
We instead consider employing $b_{\rm\scriptscriptstyle G}(\hbS)$ by investing when the \emph{group-wise bias} is sufficiently high.
Indeed, since companies typically exhibit more similar behavior within the same sector, we may expect a higher $b_{\rm\scriptscriptstyle G}(\hbS)$ in general due to within-group edge preferences.
% The typical approach of investing when graphs have fewer connections does not take into account this likely sector-based structure.
Thus, a low $b_{\rm\scriptscriptstyle G}(\hbS)$ implies a notable increase in similarities across sectors, indicating a riskier period such as the COVID-19 pandemic.
We proceed with financial investment in two scenarios, one short-term with frequent investments opportunities and another, longer-term setting with less frequent graph evaluations.

The first setup involves a three-month window in which we estimate a graph $\hbS_t$ every other day via sliding windows of log-return observations from February 2020 to May 2020, where the $N=71$ nodes represent companies from $G = 3$ sectors: Communication Services, Materials, and Energy, chosen for their relative balance in group size.
We obtain a sequence of graphs $\{\hbS_t\}_{t=1}^{38}$, from which we estimate the time-varying group-wise bias $\{ b_{\rm\scriptscriptstyle G}(\hbS_t) \}_{t=1}^{38}$.
If the bias exceeds a threshold, then this implies greater diversity across sectors and we invest; otherwise, we do not.
Figure~\ref{fig:invest}(a) shows the value of investment for different graph estimation approaches and the group-wise bias of the estimated graphs.
We compare $\st$, $\strw$, $\fstcg$, and $\fstcn$, along with two more traditional approaches, one estimating a correlation matrix across companies, referred to as \textbf{Corr.}, and the baseline \textbf{Strategy I}, which refers to investing and selling every other day, regardless of bias $b_{\rm\scriptscriptstyle G}(\hbS_t)$~\citep{cardoso2020learning}.
For a fair comparison, each method is assigned the threshold that yields the largest possible final investment value.
Figure~\ref{fig:invest}(a) illustrates that $\fstcg$ clearly outperforms its competitors with the highest value at the end of the investment period, with $\fstcn$ and $\st$ as the next best.
A closer look at the bias in Figure~\ref{fig:invest}(a) shows that $\fstcg$, $\fstcn$, and $\st$ show similar trends of lower values of $b_{\rm\scriptscriptstyle G}(\hbS_t)$ around March 2020, the period in which investment was unwise based on the steep decrease in value of the naive \textbf{Strategy I}.
However, since $\st$ and $\fstcn$ do not explicitly encourage low group-wise bias, they experience greater noise in $b_{\rm\scriptscriptstyle G}(\hbS_t)$ over time, which can yield false positive investment flags.
In contrast, $\fstcg$ returns a consistently more stable sequence $b_{\rm\scriptscriptstyle G}(\hbS_t)$, so higher values of bias are more indicative of appropriate investment opportunities.

For the second setup, we consider a more challenging scenario with a longer period, from January 2019 to December 2022; less frequent investment opportunities, estimating graphs weekly for graph sequences of length $170$; and a larger number of companies $N = 112$ spanning a different set of sectors: Real Estate, Energy, and Healthcare.
We again present the investment value and estimated bias $b_{\rm\scriptscriptstyle G}(\hbS)$ in Figure~\ref{fig:invest}(b), which support our observations from the short-term setup in Figure~\ref{fig:invest}(a).
Prior to the COVID-19 pandemic, all methods behave similarly; however, starting early 2020, only $\st$, $\fstcg$, and $\fstcn$ both maintain and increase their value in comparison with $\strw$, \textbf{Corr.}, and \textbf{Strategy~I}, which experience a large decrease in value at the beginning of the pandemic.
Here, $\fstcn$ obtains the highest return value, closely followed by $\fstcg$ and $\st$.
In such a setting with more companies and larger gaps in between investment opportunities, we find a potential advantage to imposing node-wise fairness by specifically encouraging each node show diversity in similarity across sectors.
This more explicit encouragement of diversity in connectivity may be necessary for larger-scale, volatile situations.
Thus, we not only demonstrate the value of our bias metrics for real-world applications, where we impose more informative structural rules for investment that lead to reliable strategies, but we also show that FairSpecTemp performs well with our proposed investment strategy.

% -----------------------------------------------
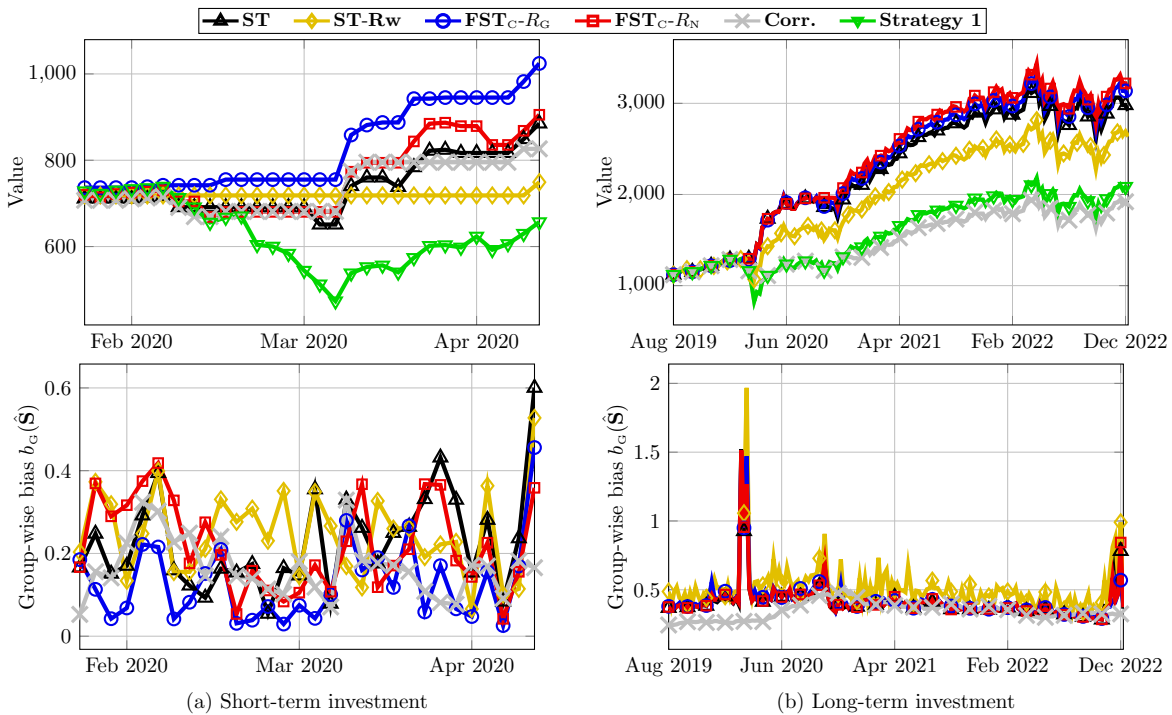
\begin{figure*}[t]
    \centering
    \begin{minipage}[b]{.3\textwidth}
        \begin{tikzpicture}[baseline,scale=.8,trim axis left, trim axis right]

\pgfmathsetmacro{\initmarksize}{3.5}

\pgfplotstableread{data/finance_results.csv}\errbiastable

\begin{axis}[
    % scale only axis,
    % xlabel={Time},
    % xmin=0,
    xmin=18,
    xmax=76,
    ylabel={Value},
    % ytick={0, 0.25, 0.5, 0.75, 1},
    % ymin=0,
    % ymax=1,
    % grid style=densely dashed,
    xtick={2,24,46,68},
    xticklabels={Jan 2020, Feb 2020, Mar 2020, Apr 2020},
    grid=both,
    legend style={
        at={(.25, 1.02)},
        anchor=south west,
        font=\footnotesize},
    % legend style={
    %     at={(.5,1.02)},
    %     anchor=south,
    %     font=\footnotesize},
    legend columns=6,
    width=260,
    height=180,
    label style={font=\small},
    tick label style={font=\small}
    ]

    % ST
    % \addplot[Black, mark=triangle, line width=2, solid, mark size=2pt]  table [ x=time, y=st ] {\errbiastable};
    \addplot[Black, line width=2, solid, forget plot]  table [ x=time, y=st ] {\errbiastable};
    \addplot[Black, mark=triangle, line width=1, only marks, mark size=2.5pt, forget plot]  table [ x=time, y=st ] {\errbiastable};
    \addplot[Black, mark=triangle, line width=1.5, solid, mark size=2pt] coordinates {(-10,0)};
    
    % ST-RW
    % \addplot[black!5!Gold2, mark=diamond, line width=2, solid, mark size=2pt] table [x=time, y=strw] {\errbiastable};
    \addplot[black!5!Gold2, line width=2, solid, forget plot] table [x=time, y=strw] {\errbiastable};
    \addplot[black!5!Gold2, mark=diamond, line width=1, only marks, mark size=2.5pt, forget plot] table [x=time, y=strw] {\errbiastable};
    \addplot[black!5!Gold2, mark=diamond, line width=1.5, solid, mark size=2pt] coordinates {(-10,0)};

    % FST-G
    % \addplot[Blue2, mark=o, line width=2, solid] table [x=time, y=fstg] {\errbiastable};
    \addplot[Blue2, line width=2, solid, forget plot] table [x=time, y=fstg] {\errbiastable};
    \addplot[Blue2, mark=o, line width=1, only marks, solid, forget plot] table [x=time, y=fstg] {\errbiastable};
    \addplot[Blue2, mark=o, line width=1.5, solid] coordinates {(-10,0)};

    % FST-N
    % \addplot[Red2, mark=square, line width=2, solid, mark size=1.5] table [x=time, y=fstn] {\errbiastable};
    \addplot[Red2, line width=2, solid, forget plot] table [x=time, y=fstn] {\errbiastable};
    \addplot[Red2, mark=square, line width=1, only marks, mark size=1.5, forget plot] table [x=time, y=fstn] {\errbiastable};
    \addplot[Red2, mark=square, line width=1.5, solid, mark size=1.5] coordinates {(-10,0)};

    % Corr
    % \addplot[Gray0, mark=x, line width=2, solid, mark size=3pt]  table [ x=time, y=corr ] {\errbiastable};
    \addplot[Gray0, line width=2, solid, forget plot]  table [ x=time, y=corr ] {\errbiastable};
    \addplot[Gray0, mark=x, line width=1.5, only marks, mark size=3.5pt, forget plot]  table [ x=time, y=corr ] {\errbiastable};
    \addplot[Gray0, mark=x, line width=1.5, solid, mark size=3pt] coordinates {(-10,0)};

    % Strat1
    % \addplot[black!10!Green2, mark=triangle, line width=2, solid, mark options={rotate=180}, mark size=3pt]  table [ x=time, y=strat1 ] {\errbiastable};
    \addplot[black!10!Green2, line width=2, solid, forget plot]  table [ x=time, y=strat1 ] {\errbiastable};
    \addplot[black!10!Green2, mark=triangle, line width=1, only marks, mark options={rotate=180}, mark size=3.5pt, forget plot] table [ x=time, y=strat1 ] {\errbiastable};
    \addplot[black!10!Green2, mark=triangle, line width=1.5, solid, mark options={rotate=180}, mark size=2.5pt] coordinates {(-10,0)};

    \addlegendentry{$\st$~~~~}
    \addlegendentry{$\strw$~~~~}
    \addlegendentry{$\fstcg$~~~~}
    \addlegendentry{$\fstcn$~~~~}
    \addlegendentry{\textbf{Corr.}~~~~}
    \addlegendentry{\textbf{Strategy 1}}
\end{axis}

\end{tikzpicture}
    \end{minipage}
    \hspace{3cm}
    \begin{minipage}[b]{.3\textwidth}
        \begin{tikzpicture}[baseline,scale=.8,trim axis left, trim axis right]

\pgfmathsetmacro{\initmarksize}{3.5}

\pgfplotstableread{data/finance_results_5d.csv}\errbiastable

\begin{axis}[
    % scale only axis,
    % xlabel={Time},
    % xmin=0,
    xmin=1,
    xmax=170,
    ylabel={Value},
    % ytick={0, 0.25, 0.5, 0.75, 1},
    % ymin=0,
    % ymax=1,
    % gridl style=densely dashed,
    xtick={1,43,85,127,169},
    xticklabels={Aug 2019, Jun 2020, Apr 2021, Feb 2022, Dec 2022},
    grid=both,
    legend style={
        at={(.25, 1.02)},
        anchor=south west,
        font=\footnotesize},
    % legend style={
    %     at={(.5,1.02)},
    %     anchor=south,
    %     font=\footnotesize},
    legend columns=6,
    width=260,
    height=180,
    label style={font=\small},
    tick label style={font=\small}
    ]

    % ST
    % \addplot[Black, mark=triangle, line width=2, solid, mark repeat=5, mark size=2pt]  table [ x=time, y=st ] {\errbiastable};
    \addplot[Black, line width=2, solid, forget plot]  table [ x=time, y=st ] {\errbiastable};
    \addplot[Black, mark=triangle, line width=1, only marks, mark repeat=7, mark size=2.5pt, forget plot]  table [ x=time, y=st ] {\errbiastable};
    \addplot[Black, mark=triangle, line width=1.5, solid, mark size=2pt] coordinates {(-10,0)};

    % ST-RW
    % \addplot[black!5!Gold2, mark=diamond, line width=2, solid, mark repeat=5, mark size=2pt] table [x=time, y=strw] {\errbiastable};
    \addplot[black!5!Gold2, line width=2, solid, forget plot] table [x=time, y=strw] {\errbiastable};
    \addplot[black!5!Gold2, mark=diamond, line width=1, mark repeat=5, only marks, mark size=2.5pt, forget plot] table [x=time, y=strw] {\errbiastable};
    \addplot[black!5!Gold2, mark=diamond, line width=1.5, solid, mark size=2pt] coordinates {(-10,0)};

    % FST-G
    % \addplot[Blue2, mark=o, line width=2, solid, mark repeat=5] table [x=time, y=fstg] {\errbiastable};
    \addplot[Blue2, line width=2, solid, forget plot] table [x=time, y=fstg] {\errbiastable};
    \addplot[Blue2, mark=o, line width=1, mark repeat=7, only marks, solid, forget plot] table [x=time, y=fstg] {\errbiastable};
    \addplot[Blue2, mark=o, line width=1.5, solid] coordinates {(-10,0)};

    % FST-N
    % \addplot[Red2, mark=square, line width=2, solid, mark repeat=5, mark size=1.5] table [x=time, y=fstn] {\errbiastable};
    \addplot[Red2, line width=2, solid, forget plot] table [x=time, y=fstn] {\errbiastable};
    \addplot[Red2, mark=square, line width=1, mark repeat=7, only marks, mark size=1.5, forget plot] table [x=time, y=fstn] {\errbiastable};
    \addplot[Red2, mark=square, line width=1.5, solid, mark size=1.5] coordinates {(-10,0)};

    % Corr
    % \addplot[Gray0, mark=x, line width=2, solid, mark repeat=5, mark size=3pt]  table [ x=time, y=corr ] {\errbiastable};
    \addplot[Gray0, line width=2, solid, forget plot]  table [ x=time, y=corr ] {\errbiastable};
    \addplot[Gray0, mark=x, line width=1.5, mark repeat=7, only marks, mark size=3.5pt, forget plot]  table [ x=time, y=corr ] {\errbiastable};
    \addplot[Gray0, mark=x, line width=1.5, solid, mark size=3pt] coordinates {(-10,0)};

    % Strat1
    % \addplot[black!10!Green2, mark=triangle, line width=2, solid, mark repeat=5, mark options={rotate=180}, mark size=3pt]  table [ x=time, y=str1 ] {\errbiastable};
    \addplot[black!10!Green2, line width=2, solid, forget plot]  table [ x=time, y=str1 ] {\errbiastable};
    \addplot[black!10!Green2, mark=triangle, line width=1, mark repeat=7, only marks, mark options={rotate=180}, mark size=3.5pt, forget plot] table [ x=time, y=str1 ] {\errbiastable};
    \addplot[black!10!Green2, mark=triangle, line width=1.5, solid, mark options={rotate=180}, mark size=2.5pt] coordinates {(-10,0)};

    % \addlegendentry{${\rm ST}$~~~~}
    % \addlegendentry{${\rm ST}$-${\rm Rw}$~~~~}
    % \addlegendentry{$\FSTC$, $\Rgr$~~~~}
    % \addlegendentry{$\FSTC$, $\Rno$~~~~}
    % \addlegendentry{Corr.~~~~}
    % \addlegendentry{Strategy 1}
\end{axis}

\end{tikzpicture}
    \end{minipage}
    \\
    \begin{minipage}[b]{.3\textwidth}
        \begin{tikzpicture}[baseline,scale=.8,trim axis left, trim axis right]

\pgfmathsetmacro{\initmarksize}{3.5}

\pgfplotstableread{data/finance_algconn.csv}\errbiastable

\begin{axis}[
    % scale only axis,
    xlabel={(a) Short-term investment},
    xmin=18,
    xmax=76,
    ylabel={Group-wise bias $b_{\scriptscriptstyle \rm G}(\hbS)$},
    % ytick={0, 0.25, 0.5, 0.75, 1},
    % ymin=0,
    % ymax=1,
    % grid style=densely dashed,
    xtick={2,24,46,68},
    xticklabels={Jan 2020, Feb 2020, Mar 2020, Apr 2020},
    grid=both,
    legend style={
        at={(.5, 1.02)},
        anchor=south west,
        font=\footnotesize},
    % legend style={
    %     at={(.5,1.02)},
    %     anchor=south,
    %     font=\footnotesize},
    legend columns=5,
    width=260,
    height=180,
    label style={font=\small},
    tick label style={font=\small}
    ]

    % ST
    % \addplot[Black, mark=triangle, line width=2, solid, mark size=2pt]  table [ x=time, y=st ] {\errbiastable};
    \addplot[Black, line width=2, solid, forget plot]  table [ x=time, y=st ] {\errbiastable};
    \addplot[Black, mark=triangle, line width=1, only marks, mark size=2.5pt, forget plot]  table [ x=time, y=st ] {\errbiastable};
    \addplot[Black, mark=triangle, line width=1.5, solid, mark size=2pt] coordinates {(-10,0)};

    % ST-RW
    % \addplot[black!5!Gold2, mark=diamond, line width=2, solid, mark size=2pt] table [x=time, y=strw] {\errbiastable};
    \addplot[black!5!Gold2, line width=2, solid, forget plot] table [x=time, y=strw] {\errbiastable};
    \addplot[black!5!Gold2, mark=diamond, line width=1, only marks, mark size=2.5pt, forget plot] table [x=time, y=strw] {\errbiastable};
    \addplot[black!5!Gold2, mark=diamond, line width=1.5, solid, mark size=2pt] coordinates {(-10,0)};

    % FST-G
    % \addplot[Blue2, mark=o, line width=2, solid] table [x=time, y=fstg] {\errbiastable};
    \addplot[Blue2, line width=2, solid, forget plot] table [x=time, y=fstg] {\errbiastable};
    \addplot[Blue2, mark=o, line width=1, only marks, solid, forget plot] table [x=time, y=fstg] {\errbiastable};
    \addplot[Blue2, mark=o, line width=1.5, solid] coordinates {(-10,0)};

    % FST-N
    % \addplot[Red2, mark=square, line width=2, solid, mark size=1.5] table [x=time, y=fstn] {\errbiastable};
    \addplot[Red2, line width=2, solid, forget plot] table [x=time, y=fstn] {\errbiastable};
    \addplot[Red2, mark=square, line width=1, only marks, mark size=1.5, forget plot] table [x=time, y=fstn] {\errbiastable};
    \addplot[Red2, mark=square, line width=1.5, solid, mark size=1.5] coordinates {(-10,0)};

    % Corr
    % \addplot[Gray0, mark=x, line width=2, solid, mark size=3pt]  table [ x=time, y=corr ] {\errbiastable};
    \addplot[Gray0, line width=2, solid, forget plot]  table [ x=time, y=corr ] {\errbiastable};
    \addplot[Gray0, mark=x, line width=1.5, only marks, mark size=3.5pt, forget plot]  table [ x=time, y=corr ] {\errbiastable};
    \addplot[Gray0, mark=x, line width=1.5, solid, mark size=3pt] coordinates {(-10,0)};

    % \addlegendentry{${\rm ST}$~~~~}
    % \addlegendentry{${\rm ST}$-${\rm Rw}$~~~~}
    % \addlegendentry{$\FSTC$, $\Rgr$~~~~}
    % \addlegendentry{$\FSTC$, $\Rno$~~~~}
    % \addlegendentry{Corr.}
\end{axis}

\end{tikzpicture}
    \end{minipage}
    \hspace{3cm}
    \begin{minipage}[b]{.3\textwidth}
        \begin{tikzpicture}[baseline,scale=.8,trim axis left, trim axis right]

\pgfmathsetmacro{\initmarksize}{3.5}

\pgfplotstableread{data/alg_conn_exp4.csv}\errbiastable

\begin{axis}[
    % scale only axis,
    xlabel={(b) Long-term investment},
    xmin=1,
    xmax=170,
    ylabel={Group-wise bias $b_{\scriptscriptstyle \rm G}(\hbS)$},
    % ytick={0, 0.25, 0.5, 0.75, 1},
    % ymin=0,
    % ymax=1,
    % grid style=densely dashed,
    xtick={1,43,85,127,169},
    xticklabels={Aug 2019, Jun 2020, Apr 2021, Feb 2022, Dec 2022},
    grid=both,
    legend style={
        at={(.5,1.02)},
        anchor=south,
        font=\footnotesize},
    legend columns=5,
    width=260,
    height=180,
    label style={font=\small},
    tick label style={font=\small}
    ]

    % ST
    % \addplot[Black, mark=triangle, line width=2, solid, mark size=2pt]  table [ x=time, y=st ] {\errbiastable};
    \addplot[Black, line width=2, solid, forget plot]  table [ x=time, y=st ] {\errbiastable};
    \addplot[Black, mark=triangle, line width=1, only marks, mark size=2.5pt, mark repeat=7, forget plot]  table [ x=time, y=st ] {\errbiastable};
    \addplot[Black, mark=triangle, line width=1.5, solid, mark size=2pt] coordinates {(-10,0)};

    % ST-RW
    % \addplot[black!5!Gold2, mark=diamond, line width=2, solid, mark size=2pt] table [x=time, y=strw] {\errbiastable};
    \addplot[black!5!Gold2, line width=2, solid, forget plot] table [x=time, y=strw] {\errbiastable};
    \addplot[black!5!Gold2, mark=diamond, line width=1, only marks, mark size=2.5pt, mark repeat=7, forget plot] table [x=time, y=strw] {\errbiastable};
    \addplot[black!5!Gold2, mark=diamond, line width=1.5, solid, mark size=2pt] coordinates {(-10,0)};

    % FST-G
    % \addplot[Blue2, mark=o, line width=2, solid] table [x=time, y=fstg] {\errbiastable};
    \addplot[Blue2, line width=2, solid, forget plot] table [x=time, y=fstg] {\errbiastable};
    \addplot[Blue2, mark=o, line width=1, only marks, solid, mark repeat=7, forget plot] table [x=time, y=fstg] {\errbiastable};
    \addplot[Blue2, mark=o, line width=1.5, solid] coordinates {(-10,0)};

    % FST-N
    % \addplot[Red2, mark=square, line width=2, solid, mark size=1.5] table [x=time, y=fstn] {\errbiastable};
    \addplot[Red2, line width=2, solid, forget plot] table [x=time, y=fstn] {\errbiastable};
    \addplot[Red2, mark=square, line width=1, only marks, mark size=1.5, mark repeat=7, forget plot] table [x=time, y=fstn] {\errbiastable};
    \addplot[Red2, mark=square, line width=1.5, solid, mark size=1.5] coordinates {(-10,0)};

    % Corr
    % \addplot[Gray0, mark=x, line width=2, solid, mark size=3pt]  table [ x=time, y=corr ] {\errbiastable};
    \addplot[Gray0, line width=2, solid, forget plot]  table [ x=time, y=corr ] {\errbiastable};
    \addplot[Gray0, mark=x, line width=1.5, only marks, mark size=3.5pt, mark repeat=7, forget plot]  table [ x=time, y=corr ] {\errbiastable};
    \addplot[Gray0, mark=x, line width=1.5, solid, mark size=3pt] coordinates {(-10,0)};

    % \addlegendentry{${\rm ST}$~~~~}
    % \addlegendentry{${\rm ST}$-${\rm Rw}$~~~~}
    % \addlegendentry{$\FSTC$, $\Rgr$~~~~}
    % \addlegendentry{$\FSTC$, $\Rno$~~~~}
    % \addlegendentry{Corr.}
\end{axis}

\end{tikzpicture}
    \end{minipage}
    \caption{Investment value and group-wise bias $b_{\rm\scriptscriptstyle G}(\hbS)$ of estimated graphs $\hbS$ over time: (a) three months estimating graphs every other day and (b) two years and five months with weekly graph estimation.}
\label{fig:invest}
\end{figure*}
% -----------------------------------------------
% =================================================
% %%%%%%%%%%%%%%%%%%%%%%%%%%%%%%%%%%%%%%%%%%%%%%%%%%%%%%%%%%%%%%

% %%%%%%%%%%%%%%%%%%%%%%%%%%%%%%%%%%%%%%%%%%%%%%%%%%%%%%%%%%%%%%
\section{Conclusion}
\label{s:conclusion}

In this work, we defined group and individual fairness for graphs, along with metrics to quantify topological bias.
These metrics allowed us to propose two constrained optimization problems for estimating fair graphs from stationary graph signals, with performance guarantees demonstrating when we do or do not experience a tradeoff between fairness and accuracy. 
Future work will see expansion towards other definitions of fairness for graph connectivity, such as extending the notions of multi-precision to graphs. 
As mentioned previously, most existing topological bias metrics consider fairness in terms of single-hop connections, but we aim to develop similar, effective approaches with multi-hop fairness considerations.
Moreover, we may estimate fair graphs while determining the optimal balance between group or individual fairness, which we can impose through adaptive or bilevel optimization.
We also plan to extend this task to more difficult graph settings such as estimating unbiased directed graphs for fair influence maximization.
As another example, nodes in social networks often do not have known group memberships.
Finally, our optimization-based, GSP-driven work aims to participate in the push towards the intersection between GSP and graph-based machine learning.
Incorporating well-founded tools into highly effective models leads to trustworthy yet powerful methods, a necessity for tools increasingly being employed in large-scale yet sensitive applications.
% %%%%%%%%%%%%%%%%%%%%%%%%%%%%%%%%%%%%%%%%%%%%%%%%%%%%%%%%%%%%%%

% %%%%%%%%%%%%%%%%%%%%%%%%%%%%%%%%%%%%%%%%%%%%%%%%%%%%%%%%%%%%%%
\acks{Research was supported by NSF under award CCF-2340481, the Army Research Office under Grant Number W911NF-17-S-0002, the Spanish AEI (10.13039/501100011033) grants PID2022-136887NB-I00 and PID2023-149457OB-I00, and the Community of Madrid via the Ellis Madrid Unit and grants URJC/CAM F1180 and TEC-2024/COM-89. The views and conclusions contained in this document are those of the authors and should not be interpreted as representing the official policies, either expressed or implied, of the Army Research Office or the U.S. Army or the U.S. Government. The U.S. Government is authorized to reproduce and distribute reprints for Government purposes notwithstanding any copyright notation herein.}
% %%%%%%%%%%%%%%%%%%%%%%%%%%%%%%%%%%%%%%%%%%%%%%%%%%%%%%%%%%%%%%

% %%%%%%%%%%%%%%%%%%%%%%%%%%%%%%%%%%%%%%%%%%%%%%%%%%%%%%%%%%%%%%
\appendix

% ==============================================================
\section{Auxiliary Results}
\label{app:supp}

\begin{mylemma}[Claim 2, \citealt{navarro2022jointinferencemultiple}]\label{lem:target_feas}
    Under conditions \textit{(A1)} and \textit{(A2)} of Assumption~\ref{assump:error_bound}, 
    with probability at least $1 - e^{-c_2 \log N}$ for some constant $c_2 > 0$, we have that
    \alna{
        \| \hbC \bbS^* - \bbS^* \hbC \|_F \leq \epsilon.
    \label{eq:lem1}}
\end{mylemma}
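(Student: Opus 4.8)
The plan is to strip the deterministic part out of the commutator first. Graph stationarity (Section~\ref{ss:gsp}) gives the \emph{exact} identity $\bbC\bbS^* = \bbS^*\bbC$, so the quantity of interest is a pure sampling fluctuation,
\[
    \hbC\bbS^* - \bbS^*\hbC = (\hbC-\bbC)\bbS^* - \bbS^*(\hbC-\bbC).
\]
Writing each signal as $\bbx_m = \bbH(\bbS^*)\bbw_m$ with $\bbw_m\sim\ccalN(\bbzero,\bbI)$, using that the filter $\bbH(\bbS^*)$ is symmetric and commutes with $\bbS^*$, and setting $\hbM := \tfrac1M\sum_{m=1}^M \bbw_m\bbw_m^\top - \bbI$, one obtains the clean form $\hbC\bbS^* - \bbS^*\hbC = \bbH(\bbS^*)\big(\hbM\bbS^* - \bbS^*\hbM\big)\bbH(\bbS^*)$. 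This exhibits the object as an average of $M$ i.i.d., zero-mean matrices, and each of its entries is a fixed linear combination $\sum_{a,c} K^{(ij)}_{ac}\hat M_{ac}$ of the centered Gaussian products $\hat M_{ac} = \tfrac1M\sum_m (w_{m,a}w_{m,c}-\delta_{ac})$, with deterministic coefficients $K^{(ij)}_{ac} = H_{ia}[\bbS^*\bbH(\bbS^*)]_{cj} - H_{ja}[\bbS^*\bbH(\bbS^*)]_{ci}$.

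Next I would bound each entry by a Bernstein-type inequality for sums of sub-exponential variables. A short expansion shows that $\sum_{a,c}(K^{(ij)}_{ac})^2 \le 2[\bbC]_{ii}[\bbS^*\bbC\bbS^*]_{jj} + 2[\bbC]_{jj}[\bbS^*\bbC\bbS^*]_{ii}$, because $\sum_a H_{ia}^2 = [\bbH(\bbS^*)^2]_{ii} = [\bbC]_{ii}$ and $\sum_c [\bbH(\bbS^*)\bbS^*]_{jc}^2 = [\bbS^*\bbH(\bbS^*)^2\bbS^*]_{jj} = [\bbS^*\bbC\bbS^*]_{jj}$; up to the correlations among the $\hat M_{ac}$'s (a bounded factor), this makes the variance of entry $(i,j)$ of order $\omega^2/M$, where $\omega = \max\{\normm{\bbC^-}_{\infty},\normm{(\bbS^*\bbC\bbS^*)^-}_{\infty}\}$ is exactly the largest diagonal entry appearing above. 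Hence, for each fixed $(i,j)$ and any $t>0$, $\mbP\big(\,\big| [\hbC\bbS^*-\bbS^*\hbC]_{ij} \big| > t\,\big) \le \exp\!\big(-c\min\{t^2 M/\omega^2,\, tM/\omega\}\big)$. Condition \textit{(A1)}, $\log N = o(M^{1/3})$, ensures that with the choice $t \asymp \omega\sqrt{\log N/M}$ the first term of the minimum is active, so the tail is genuinely of sub-Gaussian type $N^{-c'}$ (and controls the correlated off-diagonal contributions as well).

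The final step is a union bound plus a crude norm conversion. Taking $t = c_0\,\omega\sqrt{\log N/M}$ and unioning over the $O(N^2)$ entries yields, on an event of probability at least $1-e^{-c_2\log N}$ with $c_2$ as large as we like once $c_0$ (hence $c_1$) is chosen large, the entrywise bound $\max_{i,j}\big| [\hbC\bbS^*-\bbS^*\hbC]_{ij} \big| \le c_0\,\omega\sqrt{\log N/M}$. Then $\| \hbC\bbS^* - \bbS^*\hbC \|_F \le N\max_{i,j}\big| [\hbC\bbS^*-\bbS^*\hbC]_{ij} \big| \le c_1 N\omega\sqrt{\log N/M} \le \epsilon$ by \textit{(A2)}, which is the claim.

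The main obstacle I anticipate is the bookkeeping in the second step: carefully expanding the quadratic form to confirm the $\omega^2/M$ variance scale (in particular that the diagonal $a=c$ terms and the cross-covariances between distinct $\hat M_{ac}$ are absorbed into the same scale), and then tracking constants through Bernstein so that the exponent in $1-e^{-c_2\log N}$ stays positive under \textit{(A2)}. Conceptually nothing is new here — this is Claim~2 of \citet{navarro2022jointinferencemultiple}, itself patterned on the sample-covariance concentration bounds of \citet{segarra2017networktopologyinference} — so the work is essentially verification and constant-matching rather than a fresh idea.
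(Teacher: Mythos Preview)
Your argument is correct in outline, but it takes a substantially different route from the paper's own ``proof.'' The paper does not re-derive the concentration bound at all: it simply invokes Claim~2 of \citet{navarro2022jointinferencemultiple} as a black box for the special case $K=1$, notes that three of the four required conditions there are vacuous when only a single graph is present, and then checks that the remaining growth condition
\[
    \log N = o\!\left( \min\left\{ \frac{M}{(\log M)^2},\, M^{1/3} \right\} \right)
\]
collapses to \textit{(A1)}, $\log N = o(M^{1/3})$, because $M^{1/3} \le M/(\log M)^2$ for all $M>1$. That is the entire proof in the paper.

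What you do instead is reconstruct the concentration argument from scratch: use stationarity to reduce the commutator to a pure sampling term, factor through the white-noise empirical covariance via $\bbH(\bbS^*)$, bound each entry with a Bernstein-type inequality, union-bound over $O(N^2)$ entries, and convert the max-entry bound to a Frobenius bound at the cost of a factor $N$. This is essentially the machinery underlying the cited claim (and the earlier \citet{segarra2017networktopologyinference} analysis), so you are reproving the lemma rather than citing it. The payoff is a self-contained argument that makes transparent where \textit{(A1)}, \textit{(A2)}, and the constant $\omega$ enter; the cost is the bookkeeping you already flag --- tracking the covariances among the $\hat M_{ac}$ and the constants through Bernstein --- which the paper sidesteps entirely by appealing to the prior work. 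Either route is acceptable; if the goal is to match the paper, a one-line citation plus the reduction of the growth condition suffices.
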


\begin{mylemma}\label{lem:grp_tradeoff}
    % For any $\bbS \in \SA$ and $\bbs = \vect(\bbS)_{\ccalL}$ such that $\| \hbSigma\bbs \|_2 \leq \epsilon$ and $\| \bbRgr\bbs \|_2 \leq \tau < \| \bbRgr\bbs^* \|_2$,
    For any $\bbs\in\reals^{ \frac{1}{2}(N^2-N) }$ such that $\bbs \geq \bbzero$, $\| \hbSigma\bbs \|_2 \leq \epsilon$, and $\| \bbRgr\bbs \|_2 \leq \tau < \| \bbRgr\bbs^* \|_2$, 
    \alna{
        \| \bbs \|_1
        &~\leq~&
        \| \bbZ^\top\bbone \|_2^2
        \left(
            G \| \bbRgr\bbs^* \|_2
            +
            \frac{ 2 \epsilon }{ \sigma_{\scriptscriptstyle\min}(\hbSigma) N_{\scriptscriptstyle\min}}
        \right).
    \label{eq:grp_tradeoff}}
\end{mylemma}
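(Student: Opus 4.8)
The plan is to bound $\|\bbs\|_1$ directly from the near-commutativity constraint $\|\hbSigma\bbs\|_2\le\epsilon$ and to absorb the fairness term $G\|\bbRgr\bbs^*\|_2$ for free at the end, since it is nonnegative. If $\hbSigma$ does not have full column rank then $\sigma_{\scriptscriptstyle\min}(\hbSigma)=0$ and the right-hand side of \eqref{eq:grp_tradeoff} is infinite, so I may assume full column rank. Write $\bbS$ for the symmetric, hollow matrix whose strictly lower-triangular entries form $\bbs$; since $\bbs\ge\bbzero$ we have $\bbS\ge\bbzero$ entrywise, and therefore $\|\bbs\|_1=\tfrac12\sum_{i\ne j}S_{ij}=\tfrac12\,\bbone^\top\bbS\bbone=\tfrac12\inner{\bbone\bbone^\top,\bbS}$, where $\inner{\cdot,\cdot}$ denotes the Frobenius inner product.

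Next I would estimate this inner product by Cauchy--Schwarz: $\inner{\bbone\bbone^\top,\bbS}\le\|\bbone\bbone^\top\|_F\|\bbS\|_F=N\|\bbS\|_F$, and since $\|\bbS\|_F^2=2\|\bbs\|_2^2$ this gives $\|\bbs\|_1\le\tfrac{N}{\sqrt2}\|\bbs\|_2$. Full column rank of $\hbSigma$ means $\|\hbSigma\bbv\|_2\ge\sigma_{\scriptscriptstyle\min}(\hbSigma)\|\bbv\|_2$ for all $\bbv$, so $\|\bbs\|_2\le\|\hbSigma\bbs\|_2/\sigma_{\scriptscriptstyle\min}(\hbSigma)\le\epsilon/\sigma_{\scriptscriptstyle\min}(\hbSigma)$ and hence $\|\bbs\|_1\le N\epsilon/(\sqrt2\,\sigma_{\scriptscriptstyle\min}(\hbSigma))$. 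Finally I would trade the ambient dimension $N$ for the group-structured quantity using $\|\bbZ^\top\bbone\|_2^2=\sum_{g=1}^G N_g^2\ge\Nmin\sum_{g=1}^G N_g=\Nmin N$, that is, $N\le\|\bbZ^\top\bbone\|_2^2/\Nmin$, which together with $1/\sqrt2<2$ yields $\|\bbs\|_1\le 2\epsilon\|\bbZ^\top\bbone\|_2^2/(\sigma_{\scriptscriptstyle\min}(\hbSigma)\Nmin)\le\|\bbZ^\top\bbone\|_2^2\big(G\|\bbRgr\bbs^*\|_2+2\epsilon/(\sigma_{\scriptscriptstyle\min}(\hbSigma)\Nmin)\big)$, which is exactly \eqref{eq:grp_tradeoff}.

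There is no real obstacle along this route, and in fact it never invokes the fairness hypothesis $\|\bbRgr\bbs\|_2\le\tau<\|\bbRgr\bbs^*\|_2$ --- that condition only describes the situation in which the lemma is applied, namely an estimate meeting the bias budget while the target $\bbS^*$ does not. A more transparent but more delicate alternative keeps the fairness term explicit: split $\bbone^\top\bbS\bbone$ into the within-group mass $\sum_g\bbz^{(g)\top}\bbS^+\bbz^{(g)}$ and the across-group mass $\sum_{g\ne h}\bbz^{(g)\top}\bbS^+\bbz^{(h)}=\inner{\bbone\bbone^\top-\bbZ\bbZ^\top,\bbS}$; bound the latter by $\sqrt{N^2-\|\bbZ^\top\bbone\|_2^2}\,\|\bbS\|_F$ via commutativity; and tie the two together using $\big|\tfrac{\bbz^{(g)\top}\bbS^+\bbz^{(g)}}{N_g^2-N_g}-\tfrac{\bbz^{(g)\top}\bbS^+\bbz^{(h)}}{N_gN_h}\big|\le\sqrt{G^2-G}\,\|\bbRgr\bbs\|_2\le\sqrt{G^2-G}\,\|\bbRgr\bbs^*\|_2\le G\|\bbRgr\bbs^*\|_2$, which is precisely where the term $G\|\bbRgr\bbs^*\|_2$ originates. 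The subtlety with that route is that the per-group bounds worsen when a single group dominates ($\Nmax$ close to $N$), requiring extra bookkeeping to keep the constants in line --- which is the main reason I would prefer the direct Cauchy--Schwarz argument above.
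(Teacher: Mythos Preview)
Your argument is correct and in fact sharper than the paper's: you prove the stronger bound $\|\bbs\|_1\le 2\epsilon\|\bbZ^\top\bbone\|_2^2/(\sigma_{\scriptscriptstyle\min}(\hbSigma)\Nmin)$ without invoking the fairness hypothesis at all, and then simply append the nonnegative term $G\|\bbRgr\bbs^*\|_2$. The paper takes a genuinely different route, close to the ``alternative'' you sketch at the end: it decomposes $\|\bbS^+\|_1$ into the within-group blocks $\|\bbS^{(g)}\|_1$ and the across-group blocks $\|\bbS^{(g,h)}\|_1$, applies Cauchy--Schwarz with group-size weights to produce a sum of squares that can be compared with $\|\bbRgr\bbs\|_2^2$, and controls the leftover cross term by $\|\bbS^+\|_F^2\le 2\epsilon^2/\sigma_{\scriptscriptstyle\min}^2(\hbSigma)$. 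Only then is the hypothesis $\|\bbRgr\bbs\|_2\le\tau<\|\bbRgr\bbs^*\|_2$ used, to replace $\|\bbRgr\bbs\|_2$ by $\|\bbRgr\bbs^*\|_2$.

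What each approach buys: your direct Cauchy--Schwarz is shorter, more robust, and actually proves that the fairness contribution $G\|\bbRgr\bbs^*\|_2$ in \eqref{eq:grp_tradeoff} is superfluous---which, traced through to Theorem~\ref{thm:err_bnd_gr}, would drop the $\phi_3\sqrt{\Rgr(\bbS^*)}$ term from the upper bound \eqref{eq:err_upp_gr}. The paper's decomposition, on the other hand, makes explicit how the bias measurement $\|\bbRgr\bbs\|_2$ enters the control of $\|\bbs\|_1$ and is therefore aligned with the paper's conceptual narrative of a fairness--accuracy tradeoff; but as your argument reveals, that contribution is dominated by the commutativity term already present, so the tradeoff in the upper bound is more apparent than real.
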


\begin{mylemma}\label{lem:nod_tradeoff}
    % For any $\bbS \in \SA$ with $\bbs = \vect(\bbS)_{\ccalL}$ such that $\| \hbSigma\bbs \|_2 \leq \epsilon$, if $\| \bbRno\bbs \|_2 \leq \tau < \| \bbRno\bbs^* \|_2$, then
    For any $\bbs\in\reals^{ \frac{1}{2}(N^2-N) }$ such that $\bbs \geq \bbzero$, $\| \hbSigma\bbs \|_2 \leq \epsilon$, and $\| \bbRno\bbs \|_2 \leq \tau < \| \bbRno\bbs^* \|_2$, 
    \alna{
        \| \bbs \|_1
        &~\leq~&
        \frac{N_{\scriptscriptstyle\max}^2 \sqrt{G}}{2}
        \left(
            G
            \| \bbRno \bbs^* \|_2
            +
            \frac{ 2 \epsilon }{ \sigma_{\scriptscriptstyle\min}(\hbSigma) N_{\scriptscriptstyle\min} }
        \right).
    \label{eq:nod_tradeoff}}
\end{mylemma}
% ==============================================================

% ==============================================================
\section{Proof of Theorem~\ref{thm:C_convex}}
\label{app:thm_C_convex}

Let $\bbR$ denote the matrix $\bbRgr$ or $\bbRno$ corresponding to the choice of $R$.
We introduce the following vectorization of the optimization problem~\eqref{eq:opt_C_l0} as 
\alna{
    \bbs'
    ~\in~
    &\argmin_{\bbs}& ~~
    \| \bbs \|_0
% &\nonumber\\&
%     &{\rm s.t.~~}& ~~
    \quad
    ~{\rm s.t.~} 
    \quad
    \| \hbSigma \bbs \|_2 \leq \epsilon,~
    \| \bbR\bbs \|_2 \leq \tau,~
    \bbE \bbs \geq \bbone, ~
    \bbs \geq \bbzero,
\label{eq:vec_C_l0}}
where $\bbs' = \vect(\bbSCp)_{\ccalL}$.
We also vectorize the problem~\eqref{eq:opt_C_l1} as
\alna{
    \hat{\Omega}
    =~
    &\argmin_{\bbs}& ~~
    \| \bbs \|_1
% &\nonumber\\&
%     &{\rm s.t.~~}& ~~
    \quad
    ~{\rm s.t.~} 
    \quad
    \| \hbSigma \bbs \|_2 \leq \epsilon,~
    \| \bbR\bbs \|_2 \leq \tau,~
    \bbE \bbs \geq \bbone, ~
    \bbs \geq \bbzero.
\label{eq:vec_C_l1}}
We proceed with showing that under the conditions of Theorem~\ref{thm:C_convex}, the solution set $\hat{\Omega}$ is a singleton containing only $\bbs'$.
Recall that $\bbPhi = [\hbSigma^\top, \bbR^\top, \bbE^\top]^\top$, and consider the auxiliary problem for a given solution $\hbs \in \hat{\Omega}$
\alna{
    \tilde{\Omega}(\hbs)
    =~
    &\argmin_{\bbs}& ~~
    \| \bbs \|_1
% &\nonumber\\&
%     &{\rm s.t.~~}& ~~
    ~~{\rm s.t.~~} ~
    \bbPhi \bbs = \bbPhi \hbs.
    % \hbSigma \bbs = \hbSigma \hbs, ~
    % \bbR \bbs = \bbR \hbs, ~
    % \bbE\bbs = \bbE\hbs
\label{eq:vec_C_l1_aux}}
Observe that any solution in $\tilde{\Omega}(\hbs)$ is non-negative, thus we need only consider the first three constraints of~\eqref{eq:vec_C_l1}.
To see this, let $\tbs \in \tilde{\Omega}(\hbs)$.
\alna{
    \| \tbs \|_1
    \leq
    \| \hbs \|_1
    =
    \bbone^\top \bbE \hbs
    =
    \bbone^\top \bbE \tbs
    \leq
    \| \tbs \|_1,
\nonumber}
thus $\|\tbs\|_1 = \bbone^\top \tbs$ and so $\tbs \geq \bbzero$.
Moreover, we have that $\hbs \in \tilde{\Omega}(\hbs)$.
% By Lemma 1 [Zhang'13] and Corollary 1 [Zhang'13], the solution set of~\eqref{eq:vec_C_l1_aux} is a subset of $\hat{\Omega}$.
We proceed with showing that $\bbs'$ is the unique solution to~\eqref{eq:vec_C_l1_aux}, that is, $\tilde{\Omega}(\hbs) = \{ \bbs' \}$.

If there exists a vector $\bby$ such that $\bby \in \Im{\bbPhi^\top}$, $\bby_{\ccalI} = {\rm sign}(\bbs'_{\ccalI})$, and $\| \bby_{\bar{\ccalI}} \|_{\infty} < 1$, then $\bbs' \in \tilde{\Omega}(\hbs)$~\citep{zhang2016OneConditionSolution}.
Thus, consider the following optimization problem
\alna{
    \min_{\bby_{\bar{\ccalI}},\bbr} &&~
    \| \bby_{\bar{\ccalI}} \|_2^2
    + \psi^2 \| \bbr \|_2^2
% &\nonumber\\&
%     ~~{\rm s.t.}~~&&~
    \quad ~~ 
    {\rm s.t.} 
    ~~ \quad
    \bby_{\bar{\ccalI}} = \bbPhi_{\cdot,\bar{\ccalI}}^\top \bbr, ~
    \bbPhi_{\cdot,\ccalI}^\top \bbr = {\rm sign}(\bbs'_{\ccalI}).
\nonumber
}
Then, with $\bbt = [\psi\bbr^\top, -\bby_{\bar{\ccalI}}^\top]^\top$ and $\bbQ = [ \psi^{-1} \bbPhi^\top, \bbI_{\cdot,\bar{\ccalI}} ]$, we can rewrite the above problem as
\alna{
    \min_{\bbt} &&~
    \| \bbt \|_2^2
% &\nonumber\\&
%     ~~{\rm s.t.}~~&&~
    \quad ~~ 
    {\rm s.t.} 
    ~~ \quad
    \bbQ\bbt = \bbI_{\cdot, \ccalI} {\rm sign}(\bbs'_{\ccalI}),
\nonumber
}
which has solution $\bbt^* = -\bbQ^\top (\bbQ\bbQ^\top)^{-1} \bbI_{\cdot,\ccalI} {\rm sign}(\bbs'_{\ccalI})$.
Then, we obtain 
\alna{
    \bby_{\bar{\ccalI}}^* = - \bbI_{\bar{\ccalI}, \cdot} \left(
        \psi^{-2} \bbPhi^\top \bbPhi + \bbI_{\cdot,\bar{\ccalI}} \bbI_{\bar{\ccalI},\cdot}
        \right)^{-1} \bbI_{\cdot, \ccalI} {\rm sign}(\bbs'_{\ccalI}),
        \nonumber
}
which we can then bound as
\alna{
    \| \bby_{\bar{\ccalI}}^* \|_{\infty}
    &~\leq~&
    % \left\| \left(
    %     \psi^{-2} \bbPhi^\top \bbPhi + \bbI_{\cdot,\bar{\ccalI}} \bbI_{\bar{\ccalI},\cdot}
    % \right)^{-1}_{\bar{\ccalI},\ccalI} \right\|_{\infty} 
    \normms{
        \left(
            \psi^{-2} \bbPhi^\top \bbPhi + \bbI_{\cdot,\bar{\ccalI}} \bbI_{\bar{\ccalI},\cdot}
        \right)^{-1}_{\bar{\ccalI},\ccalI} 
    }_{\infty}
    ~<~
    1,
\nonumber
}
where the final inequality is due to assumption \textit{(A1)}.
Thus, there exists a vector $\bby \in {\rm Im}(\bbPhi^\top)$ such that $\bby_{\ccalI} = {\rm sign}(\bbs'_{\ccalI})$ and $\normv{\bby_{\bar{\ccalI}}}_{\infty}<1$, which satisfies the conditions for $\bbs' \in \tilde{\Omega}(\hbs)$.

By assumption \textit{(A2)} of Theorem~\ref{thm:C_convex}, $\hbSigma_{\cdot,\ccalI}$ is full column rank, and thus so is $\bbPhi_{\cdot,\ccalI}$.
This guarantees that $\tilde{\Omega}(\hbs) = \{ \bbs' \}$, that is, $\bbs'$ is the unique solution of~\eqref{eq:vec_C_l1_aux}.
Since $\hbs \in \tilde{\Omega}(\hbs)$, we have that $\bbs' = \hbs$.
Importantly, assumptions \textit{(A1)} and \textit{(A2)} do not rely on the choice of $\hbs \in \hat{\Omega}$.
Thus, $\bbs'$ is the sole element of $\tilde{\Omega}(\hbs)$ for any $\hbs \in \hat{\Omega}$, and therefore $\hat{\Omega} = \{\bbs'\}$.
% and therefore~\eqref{eq:vec_C_l1}.
% Since $\tilde{\Omega} = \{ \bbs' \} \subseteq \hat{\Omega}$, we have that $\bbs' \in \hat{\Omega}$.
% Under Theorem 1 conditions, $\tilde{\Omega}(\hbs) = \{ \bbs' \}$.
% Since $\hbs \in \tilde{\Omega}(\hbs)$, $\hbs = \bbs'$.
% Since the conditions are the same for any $\hbs \in \hat{\Omega}$, we have that $\hbs = \bbs'$ for every $\hbs\in\hat{\Omega}$.
% Thus, $\hat{\Omega} = \{\bbs'\}$.

Thus, we have shown that under \textit{(A1)} and \textit{(A2)} of Theorem~\ref{thm:C_convex}, $\bbs'$ is the unique solution of~\eqref{eq:vec_C_l1}, and we can relax problem~\eqref{eq:vec_C_l0} for a convex problem~\eqref{eq:vec_C_l1} that has a unique solution equivalent to the desired solution $\bbs'$.
Finally, recall that~\eqref{eq:vec_C_l0} and~\eqref{eq:vec_C_l1} are vectorized versions of~\eqref{eq:opt_C_l0} and~\eqref{eq:opt_C_l1}, respectively, so this implies that $\hbSC = \bbSCp$ for $\hbSC$ minimizing~\eqref{eq:opt_C_l1} and $\bbSCp$ minimizing~\eqref{eq:opt_C_l0}, as desired.
$\hfill\blacksquare$
% ==============================================================

% ==============================================================
\section{Proof of Lemma~\ref{lem:target_feas}}
\label{app:lem_target_feas}

The following proof exploits the result in Claim 2 of~\citep{navarro2022jointinferencemultiple}, which bounds the Frobenius norm of the commutator $\hbC\bbS^* - \bbS^* \hbC$ for a set of $K$ graphs.
In particular, we apply Claim 2 in~\citep{navarro2022jointinferencemultiple} for $K = 1$, which requires satisfying four conditions.
However, as three are trivial for $K = 1$, we proceed with showing that the remaining condition is equivalent to \textit{1)} and \textit{2)} in the statement of Lemma~\ref{lem:target_feas}.

For condition \textit{4)} of Theorem 2 of~\citep{navarro2022jointinferencemultiple}, given that $K=1$ we have that 
\alna{
    \log N
    &~=~&
    o\left(
        \min\left\{ \frac{M}{(\log M)^2}, M^{1/3} \right\}
    \right)
    =
    o(M^{1/3}),
\nonumber}
since $M^{1/3} \leq M/(\log M)^2$ for every $M > 1$.
$\hfill\blacksquare$
% The second condition is satisfied by substituting $K = 1$ in condition \textit{5)} of Theorem 2 of~\citep{navarro2022jointinferencemultiple}.
% ==============================================================

% ==============================================================
\section{Proof of Lemma~\ref{lem:grp_tradeoff}}
\label{app:lem_grp_tradeoff}

For any $\bbS \in \ccalS$, we let the matrix $\bbS^{(g)}$ denote the submatrix of $\bbS^+$ containing edges connecting nodes in group $g$ for each $g \in [G]$ and $\bbS^{(g,h)}$ the submatrix containing edges connecting nodes between groups $g$ and $h$ for every $g,h\in[G]$ such that $g\neq h$.
For any $\bbS \in \ccalS$, 
% and $\bbs = \vect(\bbS)_{\ccalL}$, we have that
\alna{
    \| \bbS^+ \|_1^2
    &~=~&
    \left(
        \sum_{g=1}^G 
        \| \bbS^{(g)} \|_1
        +
        \sum_{g\neq h}
        \| \bbS^{(g,h)} \|_1
    \right)^2
    ~\leq~
    \left(
        \sum_{g=1}^G
        \| \bbS^{(g)} \|_1
    \right)^2
        +
    \left(
        \sum_{g\neq h}
        \| \bbS^{(g,h)} \|_1
    \right)^2
&\nonumber\\&
    &~\leq~&
    2
    \left( \sum_{g=1}^G (N_g^2-N_g)^2 \right) \!\!
    \left( \sum_{g=1}^G \frac{ \| \bbS^{(g)} \|_1^2 }{ (N_g^2-N_g)^2 } \right)
    +
    2
    \left( \sum_{g\neq h} N_g^2 N_h^2 \right) \!\!
    \left( \sum_{g\neq h} \frac{ \| \bbS^{(g,h)} \|_1^2 }{ N_g^2 N_h^2 } \right)
&\nonumber\\&
    &~\leq~&
    2
    \left( \sum_{g\neq h} \frac{ N_g^4 }{ G-1 } \right)
    \left( \sum_{g\neq h} \frac{ \| \bbS^{(g)} \|_1^2 }{ (N_g^2-N_g)^2 } \right)
    +
    2
    \left( \| \bbZ^\top\bbone \|_2^4 - \sum_{g=1}^G N_g^4 \right)
    \left( \sum_{g\neq h} \frac{ \| \bbS^{(g,h)} \|_1^2 }{ N_g^2 N_h^2 } \right)
&\nonumber\\&
    &~\leq~&
    2
    \| \bbZ^\top\bbone \|_2^4
    \sum_{g\neq h}
    \left(\frac{\| \bbS^{(g)} \|_1}{N_g^2-N_g}\right)^2
    + 
    \left(\frac{\| \bbS^{(g,h)} \|_1}{N_g N_h}\right)^2.
\nonumber}
Then, by the definition of $\|\bbRgr\bbs\|_2$ and the fact that $\| \bbs \|_1 = \frac{1}{2} \| \bbS^+ \|_1$ for $\bbs = \vect(\bbS)_{\ccalL}$,
\alna{
    \|\bbs\|_1^2
    &~\leq~&
    \frac{G^2 \| \bbZ^\top\bbone \|_2^4}{2}
    \| \bbRgr\bbs \|_2^2
    +
    \| \bbZ^\top\bbone \|_2^4
    \sum_{g\neq h}
    \frac{ \| \bbS^{(g)} \|_1 \| \bbS^{(g,h)} \|_1 }{(N_g^3-N_g^2)N_h}
&\nonumber\\&
    &~\leq~&
    \frac{G^2 \| \bbZ^\top\bbone \|_2^4}{2}
    \| \bbRgr\bbs \|_2^2
    +
    \frac{\| \bbZ^\top\bbone \|_2^4}{ N_{\scriptscriptstyle\min}^2-\Nmin }
    \sum_{g=1}^G
    \| \bbS^{(g)} \|_F
    \sum_{h\neq g}
    \| \bbS^{(g,h)} \|_F
&\nonumber\\&
    &~\leq~&
%     \frac{G^2 \| \bbZ^\top\bbone \|_2^4}{2}
%     \| \bbRgr\bbs \|_2^2
%     +
%     \frac{\| \bbZ^\top\bbone \|_2^4}{ 2 (N_{\scriptscriptstyle\min}^2-\Nmin) }
%     \left( \sum_{g=1}^G \| \bbS^{(g)} \|_F^2 + \sum_{g\neq h} \| \bbS^{(g,h)} \|_F^2 \right)
% &\nonumber\\&
%     &~=~&
    \frac{G^2 \| \bbZ^\top\bbone \|_2^4}{2}
    \| \bbRgr\bbs \|_2^2
    +
    \frac{\| \bbZ^\top\bbone \|_2^4}{ 2 (N_{\scriptscriptstyle\min}^2-\Nmin) }
    \| \bbS^+ \|_F^2
&\nonumber\\&
    &~\leq~&
    \frac{G^2 \| \bbZ^\top\bbone \|_2^4}{2}
    \| \bbRgr\bbs \|_2^2
    +
    \frac{ \epsilon^2 \| \bbZ^\top\bbone \|_2^4}{ \sigma_{\scriptscriptstyle\min}^2(\hbSigma) (N_{\scriptscriptstyle\min}^2-\Nmin) }
% &\nonumber\\&
%     &~\leq~&
    ~\leq~
    G^2 \| \bbZ^\top\bbone \|_2^4
    \| \bbRgr\bbs \|_2^2
    +
    \frac{ 2 \epsilon^2 \| \bbZ^\top\bbone \|_2^4}{ \sigma_{\scriptscriptstyle\min}^2(\hbSigma) N_{\scriptscriptstyle\min}^2 },
\nonumber}
allowing us to write~\eqref{eq:grp_tradeoff} via the triangle inequality as desired.
$\hfill\blacksquare$
% ==============================================================

% ==============================================================
\section{Proof of Lemma~\ref{lem:nod_tradeoff}}
\label{app:lem_nod_tradeoff}

First, for any $\bbS \in \ccalS$ with $\bbs = \vect(\bbS)_{\ccalL}$ we have that
\alna{
    \| \bbS^+\bbone \|_2^2
    &~=~&
    \sum_{i=1}^N 
    \left( \sum_{j=1}^N S^+_{ij} \right)^2
    \leq~
    N \| \bbS^+ \|_F^2
    ~=~
    2N \| \bbs \|_2^2
    ~\leq~
    \frac{ 2 N \epsilon^2 }{ \sigma^2_{\scriptscriptstyle\min}(\hbSigma) }
\label{eq:nod_tradeoff_eq1}}
since $\hbSigma$ is full column rank.
Second, we have that
\alna{
    \| \bbS^+ \|_1^2
    &~=~&
    \left(
        \sum_{g=1}^G 
        \| \bbS^+\bbz^{(g)} \|_1
    \right)^2
    \leq~
    \left(
        \sum_{g=1}^G
        \sqrt{N_g}
        \| \bbS^+\bbz^{(g)} \|_2
    \right)^2
    ~\leq~
    N_{\scriptscriptstyle \max}
    \| \bbS^+\bbZ \|_F^2.
    % \sum_{g=1}^G 
    % \| \bbS\bbz^{(g)} \|_2^2.
\label{eq:nod_tradeoff_eq2}}
Then, by the definition of $\bbRno$, we can lower bound $\| \bbRno\bbs \|_2^2$ by terms containing $\|\bbS^+\bbone\|_2^2$ and $\|\bbS^+\bbZ\|_F^2$, that is,
\alna{
    \| \bbRno \bbs \|_2^2
    &~=~&
    \frac{1}{GN}
    \sum_{g=1}^G
    \sum_{i=1}^N
    \left(
        \frac{ [ \bbS^+ \bbz^{(g)} ]_i }{ N_g }
        -
        \frac{1}{G-1}
        \sum_{h\neq g}
        \frac{ [ \bbS^+ \bbz^{(h)} ]_i }{ N_h }
    \right)^2
&\nonumber\\&
    &~\geq~&
    \frac{1}{ G N N_{\scriptscriptstyle\max}^2 }
    \|\bbS^+\bbZ\|_F^2
    % \sum_{g=1}^G
    % \| \bbS \bbz^{(g)} \|_2^2
% &\nonumber\\&
%     &&
    +
    \frac{1}{ G (G-1)^2 N N_{\scriptscriptstyle\max}^2 }
    \sum_{g=1}^G
    \| \bbS^+ ( \bbone - \bbz^{(g)} ) \|_2^2
&\nonumber\\&
    &&
    \qquad
    -
    \frac{2}{ G (G-1) N N_{\scriptscriptstyle\min}^2 }
    \sum_{g=1}^G
    (\bbS^+ \bbz^{(g)})^\top
    \bbS^+ ( \bbone - \bbz^{(g)} )
&\nonumber\\&
    &~=~&
    \frac{1}{ G N N_{\scriptscriptstyle\max}^2 }
    \|\bbS^+\bbZ\|_F^2
    % \sum_{g=1}^G
    % \| \bbS \bbz^{(g)} \|_2^2
    +
    \frac{(G-2)\| \bbS^+\bbone \|_2^2}{ G (G-1)^2 N N_{\scriptscriptstyle\max}^2 }
    % \frac{G-2}{ G (G-1)^2 N N_{\max}^2 }
    % \| \bbS\bbone_N \|_2^2
% &\nonumber\\&
%     &&
    +
    \frac{1}{ G (G-1)^2 N N_{\scriptscriptstyle\max}^2 }
    \|\bbS^+\bbZ\|_F^2
    % \sum_{g=1}^G
    % \| \bbS \bbz^{(g)} \|_2^2
&\nonumber\\&
    &&
    \qquad
    -
    \frac{2}{ G (G-1) N N_{\scriptscriptstyle\min}^2 }
    \left(
        \| \bbS^+\bbone \|_2^2
        -
        \|\bbS^+\bbZ\|_F^2
    \right)
&\nonumber\\&
    &~\geq~&
    \frac{G}{(G-1)^2 N N_{\scriptscriptstyle\max}^2}
    \|\bbS^+\bbZ\|_F^2
    \frac{2}{G (G-1) N N_{\scriptscriptstyle\min}^2}
    \normv{ \bbS^+\bbone }_2^2.
\label{eq:bias_nod_lower}}
With~\eqref{eq:nod_tradeoff_eq1},~\eqref{eq:nod_tradeoff_eq2}, and~\eqref{eq:bias_nod_lower}, we bound $\| \bbRno \bbs \|_2^2$ below by $\|\bbS^+\|_1^2$ with
\alna{
    \| \bbRno\bbs \|_2^2
    &~\geq~&
    \frac{ 1 }{ (G-1)^2 N N_{\scriptscriptstyle\max}^3 } 
    \| \bbS^+ \|_1^2
    -
    \frac{ 4 \epsilon^2 }{ G(G-1) N_{\scriptscriptstyle\min}^2 \sigma_{\scriptscriptstyle\min}^2(\hbSigma) },
\nonumber}
which implies that
\alna{
    \| \bbS^+ \|_1^2
    &~\leq~&
    (G-1)^2 N N_{\scriptscriptstyle\max}^3 \| \bbRno\bbs \|_2^2
    +
    \frac{4 (G-1) N N_{\scriptscriptstyle\max}^3 \epsilon^2 }{ G N_{\scriptscriptstyle\min}^2 \sigma_{\scriptscriptstyle\min}^2(\hbSigma) }
&\nonumber\\&
    &~\leq~&
    G^2 N N_{\scriptscriptstyle\max}^3 \| \bbRno\bbs \|_2^2
    +
    \frac{4 N N_{\scriptscriptstyle\max}^3 \epsilon^2 }{ N_{\scriptscriptstyle\min}^2 \sigma_{\scriptscriptstyle\min}^2(\hbSigma) }
% &\nonumber\\&
%     &~\leq~&
    ~\leq~
    G^3 N_{\scriptscriptstyle\max}^4 \| \bbRno\bbs \|_2^2
    +
    \frac{4 G N_{\scriptscriptstyle\max}^4 \epsilon^2 }{ N_{\scriptscriptstyle\min}^2 \sigma_{\scriptscriptstyle\min}^2(\hbSigma) }
% % &\nonumber\\&
% %     &~\leq~&
%     ~\leq~
%     G N_{\scriptscriptstyle\max}^4
%     \left(
%         G
%         \| \bbRno\bbs \|_2
%         +
%         \frac{2 \epsilon }{ N_{\scriptscriptstyle\min} \sigma_{\scriptscriptstyle\min}(\hbSigma) }
%     \right)^2,
\nonumber}
yielding the inequality in~\eqref{eq:nod_tradeoff} by the triangle inequality and since $\| \bbS^+ \|_1^2 = 4\| \bbs \|_1^2$.
$\hfill\blacksquare$
% ==============================================================

% ==============================================================
\section{Proof of Theorem~\ref{thm:err_bnd_gr}}
\label{app:thm_err_bnd_gr}

We bound the error between the estimate $\hbSC$ and $\bbS^*$ via the vectorizations $\hbs$ as the solution to~\eqref{eq:vec_C_l1} for $\bbR = \bbRgr$ and $\bbs^* = \vect(\bbS^*)_{\ccalL}$.
Note that since $\hbSigma$ is full column rank, then the problem~\eqref{eq:vec_C_l1} has a unique solution, that is, its solution set is a singleton $\hat{\Omega} = \{\hbs\}$~\citep[Proposition 2]{shafipour2020OnlineTopologyInference}.
% \draft{To see this, observe that $\hbs$ is a solution to the auxiliary problem~\eqref{eq:vec_C_l1_aux}, where $\bbPhi$ is full column rank due to $\hbSigma$.
% Then, $\hbs$ is the unique solution to~\eqref{eq:vec_C_l1_aux}, which is true for all auxiliary problems}
% \sam{I am not sure why this is the case. If this follows from Gonzalo's result, we should cite it. If it is obvious but I am missing it, we can talk about it at our next meeting}
% \label{eq:vec_C_l1_aux}}

To relate the error $\| \hbs - \bbs^* \|_1$ to the sparsity in $\bbs^*$, we define $\ccalK := {\rm supp}(\bbs^*)$ along with the vector $\hbu \in \reals^{ N(N-1)/2 }$ to minimize the distance
\alna{
    \hbu
    :=
    \argmin_{\bbu}
    \| \hbs - \bbu  \|_1
    ~~{\rm s.t.}~~
    {\rm supp}(\bbu) \subseteq \ccalK,
\nonumber}
and we proceed with bounding the distance $\xi := \| \hbs - \hbu \|_1$.
We have that
\alna{
    \xi
    &~=~&
    \min_{\bbu}~
    \| \hbs - \bbu \|_1
    ~~{\rm s.t.}~~
    {\rm supp}(\bbu) \subseteq \ccalK
&\nonumber\\&
    &~=~&
    \max_{\bbv}~
    \min_{\bbu}~
    \| \hbs - \bbu \|_1
    + \bbv^\top \bbu_{\bar{\ccalK}}
&\nonumber\\&
    &~=~&
    \max_{\bbw}~
    \min_{\bbu}~
    \| \hbs - \bbu \|_1
    + \bbw^\top \bbu
% &\nonumber\\&
%     &&
    ~~~{\rm s.t.}~~~
    {\rm supp}(\bbw) \subseteq \bar{\ccalK},~
    \normv{\bbw}_{\infty}\leq 1,
\nonumber}
where we require that $\normv{\bbw}_{\infty} \leq 1$, otherwise an entry of $\hbu$ may be $-\infty$, yielding unbounded minimization of $\xi$.
Then, we have that
\alna{
    \xi
    &~\leq~&
    \max_{\bbw}~
    \bbw^\top \hbs
    ~~~{\rm s.t.}~~~
    {\rm supp}(\bbw) \subseteq \bar{\ccalK},~
    \normv{\bbw}_{\infty}\leq 1
&\nonumber\\&
    &~=~&
    \max_{\bbw}~
    (\sign(\bbs^*) + \bbw)^\top \hbs
    - \sign(\bbs^*)^\top \hbs
% &\nonumber\\&
%     &&
    ~~~{\rm s.t.}~~~
    {\rm supp}(\bbw) \subseteq \bar{\ccalK},~
    \normv{\bbw}_{\infty}\leq 1
&\nonumber\\&
    &~\leq~&
    \| \hbs \|_1 - \| \bbs^* \|_1
    + \sign(\bbs^*)^\top (\bbs^* - \hbs),
\nonumber}
where the last inequality arises because $\sign(\bbs^*)$ and $\bbw$ have non-overlapping supports and $\normv{\bbw}_{\infty} \leq 1$, so $({\rm sign}(\bbs^*) + \bbw)^\top \hbs \leq \| \hbs \|_1$, and ${\rm sign}(\bbs^*)^\top \bbs^* = \| \bbs^* \|_1$.
Moreover, as $\hbSigma$ is full column rank, 
\alna{
    \xi
    &~\leq~&
    \| \hbs \|_1 - \| \bbs^* \|_1
    + \| (\bbs^* - \hbs)_{\ccalK} \|_1
% &\nonumber\\&
%     &~\leq~&
    ~\leq~
    \| \hbs \|_1 - \| \bbs^* \|_1
    + \sqrt{|\ccalK|} \| \bbs^* - \hbs \|_2
&\nonumber\\&
    &~\leq~&
    \| \hbs \|_1 - \| \bbs^* \|_1
    + \frac{\sqrt{|\ccalK|}}{\sigma_{\scriptscriptstyle\min}(\hbSigma)} \| \hbSigma(\bbs^* - \hbs) \|_2
% &\nonumber\\&
%     &~\leq~&
    ~\leq~
    \| \hbs \|_1 - \| \bbs^* \|_1
    + \frac{2\epsilon\sqrt{|\ccalK|}}{\sigma_{\scriptscriptstyle\min}(\hbSigma)},
\label{eq:xi_l1diff_bound}}
where the final equality is by Lemma~\ref{lem:target_feas}.
With our bound for $\xi = \| \hbs - \hbu \|_1$, we can also bound the distance $\| \bbs^* - \hbu \|_1$.
Since ${\rm supp}(\hbu) \subseteq \ccalK$ and $k \geq \sqrt{|\ccalK|}$ by Assumption \textit{(A4)}, 
\alna{
    \normv{\bbs^* - \hbu}_1
    &~\leq~&
    k \normv{\bbs^* - \hbu}_2
% &\nonumber\\&
%     &~\leq~&
    ~\leq~
    k \normv{\hbs - \hbu}_1
    +
    k \normv{\bbs^* - \hbs}_2
&\nonumber\\&
    &~\leq~&
    k \normv{\hbs - \hbu}_1
    +
    \frac{k}{\sigma_{\scriptscriptstyle\min}(\hbSigma)}
    \normv{\hbSigma(\bbs^* - \hbs)}_2
% &\nonumber\\&
%     &~\leq~&
    ~\leq~
    k \normv{\hbs - \hbu}_1
    +
    \frac{2k\epsilon}{\sigma_{\scriptscriptstyle\min}(\hbSigma)}.
% \label{eq:other_bound}}
\nonumber}
Then, by the definition of $\xi$ and~\eqref{eq:xi_l1diff_bound} we have that
\alna{
    \| \hbs - \bbs^* \|_1
    &~\leq~&
    \xi + \| \bbs^* - \hbu \|_1
&\nonumber\\&
    &~\leq~&
    (1 + k)
    \xi
    +
    \frac{2k\epsilon}{\sigma_{\scriptscriptstyle\min}(\hbSigma)}
% &\nonumber\\&
%     &~\leq~&
    ~\leq~
    (1 + k)
    \left( \normv{ \hbs }_1 - \normv{ \bbs^* }_1 \right)
% &\nonumber\\&
%     && \qquad\qquad
    +
    (2 + k)
    \frac{2k\epsilon}{\sigma_{\scriptscriptstyle\min}(\hbSigma)}.
%     \frac{\phi_1}{2} \epsilon
%     +
%     \frac{\phi_2}{2} \| \bbR\bbs^* \|_2.
\label{eq:err_tri_equ}}
Note that if $\bbs^*$ is a feasible solution of~\eqref{eq:vec_C_l1}, then $\| \hbs \|_1 \leq \|\bbs^*\|_1$ by the optimality of $\hbs$, and the error bound consists only of the second term in the right-hand side of~\eqref{eq:err_tri_equ}.
This proves the lower and upper bounds in~\eqref{eq:err_low_gr} and~\eqref{eq:err_upp_gr}, respectively, when $\Rgr(\bbS^*) \leq \tau^2$.
Thus, we proceed with bounding the error when $\bbs^*$ is not feasible, that is, $\Rgr(\bbS^*) > \tau^2$.

By Lemma~\ref{lem:grp_tradeoff}, if $\|\bbRgr\bbs^*\|_2 > \tau$ then
\alna{
    \| \hbS - \bbS^* \|_1
    &~=~&
    2 \| \hbs - \bbs^* \|_1
% &\nonumber\\&
%     &~\leq~&
    ~\leq~
    \frac{ 4k \epsilon (2 + k) }{ \sigma_{\scriptscriptstyle\min}(\hbSigma) }
    +
    2G \| \bbZ^\top\bbone \|_2^2
    (1 + k)
    \normv{ \bbR \bbs^* }_2
    +
    \frac{ 4 \epsilon \|\bbZ^\top\bbone\|_2^2 (1 + k) }{ \sigma_{\scriptscriptstyle\min}(\hbSigma) N_{\scriptscriptstyle\min} },
\nonumber}
equivalent to the upper bound in~\eqref{eq:err_upp_gr} when $\Rgr(\bbS^*) = \| \bbRgr\bbs^* \|_2^2 > \tau^2$.

We then move on to demonstrating the lower bound in~\eqref{eq:err_low_gr}.
For any $\bbS \in \ccalS$, $\bbS^{(g)} \in \reals^{N_g\times N_g}$ denotes the submatrix of $\bbS^+$ containing edges connecting nodes in group $g$ for every $g\in[G]$.
Similarly, for every $g,h\in[G]$ such that $g\neq h$, $\bbS^{(g,h)} \in \reals^{N_g \times N_h}$ is the submatrix of $\bbS$ containing edges connecting nodes between groups $g$ and $h$.
By the definition of $\bbRgr$ and the fact that $\bbS\geq \bbzero$,
\alna{
    \| \bbRgr\bbs \|_2^2
    &~=~&
    \frac{1}{G^2-G}
    \sum_{g\neq h}
    \left(
        % \frac{ \bbone_{N_g}^\top \bbS^{(g)} \bbone_{N_g} }{ N_g^2 - N_g }
        % -
        % \frac{ \bbone_{N_g}^\top \bbS^{(g,h)} \bbone_{N_h} }{ N_g N_h }
        \frac{ \bbone^\top \bbS^{(g)} \bbone }{ N_g^2 - N_g }
        -
        \frac{ \bbone^\top \bbS^{(g,h)} \bbone }{ N_g N_h }
    \right)^2
    % \leq\,
    % ~\leq~
&\nonumber\\&
    &~\leq~&
    \frac{1}{G^2-G}
    \sum_{g\neq h}
    \left( \frac{ \| \bbS^{(g)} \|_1 }{ N_g^2 - N_g } \right)^2
    % \left( \frac{ \bbone^\top \bbS^{(g)} \bbone }{ N_g^2 - N_g } \right)^2
    +
    \left( \frac{ \| \bbS^{(g,h)} \|_1 }{ N_g N_h } \right)^2
    % \left( \frac{ \bbone^\top \bbS^{(g,h)} \bbone }{ N_g N_h } \right)^2
&\nonumber\\&
    &~\leq~&
    \frac{1}{G^2-G}
    \sum_{g\neq h}
    \left(
        \frac{ \|\bbS^{(g)}\|_F^2 }{ N_g^2 - N_g }
        +
        \frac{ \| \bbS^{(g,h)} \|_F^2 }{ N_g N_h }
    \right)
    % \leq\,
    % ~\leq~
&\nonumber\\&
    &~\leq~&
    \frac{1}{G( N_{\scriptscriptstyle\min}-1 )^2}
    \left(
        \sum_{g=1}^G
        \|\bbS^{(g)}\|_F^2
        +
        \sum_{g\neq h}
        \| \bbS^{(g,h)} \|_F^2
    \right)
&\nonumber\\&
    &~=~&
    \frac{1}{G( N_{\scriptscriptstyle\min}-1 )^2}
    \|\bbS^+\|_F^2
    ~\leq~
    % \frac{1}{G( N_{\scriptscriptstyle\min}-1 )^2}
    \frac{4}{G N_{\scriptscriptstyle\min}^2}
    \|\bbS\|_F^2.
% % &\nonumber\\&
%     % &~=~&
%     ~=~
%     \frac{2}{G( N_{\scriptscriptstyle\min}-1 )^2}
%     \|\bbs\|_2^2.
\nonumber}
% for $\bbs = \bbS_{\ccalL}$.
Then, we have that
\alna{
    \| \hbS - \bbS^* \|_1
    &~\geq~&
    \| \hbS - \bbS^* \|_F
% &\nonumber\\&
%     &~\geq~&
    ~\geq~
    % (N_{\scriptscriptstyle\min}-1) \sqrt{G} \| \bbRgr (\hbs - \bbs^*) \|_2
    \frac{ N_{\scriptscriptstyle\min} \sqrt{G} }{2}
    \| \bbRgr (\hbs - \bbs^*) \|_2
% &\nonumber\\&
%     &~\geq~&
    ~\geq~
    \frac{ N_{\scriptscriptstyle\min} \sqrt{G} }{2}
    % (N_{\scriptscriptstyle\min}-1) \sqrt{G} 
    ( \| \bbRgr\bbs^* \|_2 - \tau ),
\nonumber}
yielding the result in~\eqref{eq:err_low_gr} as desired since $\sqrt{\Rgr(\bbS^*)} = \| \bbRgr\bbs^* \|_2$ and $\| \hbS - \bbS^* \|_1 \geq 0$.
$\hfill\blacksquare$
% ==============================================================

% ==============================================================
\section{Proof of Theorem~\ref{thm:err_bnd_no}}
\label{app:thm_err_bnd_no}

As in Appendix~\ref{app:thm_err_bnd_gr}, we bound the error between $\hbSC$ and $\bbS^*$ via the vectorizations $\hbs$ and $\bbs^*$ for $\bbR = \bbRno$.
We may repeat the steps in the proof of Theorem~\ref{thm:err_bnd_gr} until equation~\eqref{eq:err_tri_equ}.
As before, when $\Rno(\bbS^*) \leq \tau^2$, the target $\bbs^*$ is a feasible solution of~\eqref{eq:vec_C_l1}, and we obtain the upper and lower error bounds in~\eqref{eq:err_upp_no} and~\eqref{eq:err_low_no}, respectively, when $\Rno(\bbS^*) \leq \tau^2$.
We then continue with the case when $\Rno(\bbS^*) > \tau^2$.
By Lemma~\ref{lem:nod_tradeoff}, if $\| \bbRno\bbs^* \|_2 > \tau$ then
\alna{
    \| \hbS - \bbS^* \|_1
    &~=~&
    2 \| \hbs - \bbs^* \|_1
&\nonumber\\&
    &~\leq~&
    \frac{ 4 k \epsilon (2 + k)  }{ \sigma_{\scriptscriptstyle\min}(\hbSigma) }
    +
    \frac{ 2 \epsilon N_{\scriptscriptstyle\max}^2 \sqrt{G} (1 + k) }{ \sigma_{\scriptscriptstyle\min}(\hbSigma) N_{\scriptscriptstyle\min} }
% &\nonumber\\&
%     && \qquad
    +
    G N_{\scriptscriptstyle\max}^2 \sqrt{G}
    (1 + k)
    \normv{ \bbRno \bbs^* }_2,
\nonumber}
yielding the upper bound in~\eqref{eq:err_upp_no} when $\Rno(\bbS^*) = \| \bbRno\bbs^* \|_2^2 > \tau^2$.

For the lower bound in~\eqref{eq:err_low_no}, we use the definition of $\bbRno$ and $\bbS\geq\bbzero$ to get
\alna{
    \| \bbRno\bbs \|_2^2
    &~=~&
    \frac{1}{GN}
    \sum_{g=1}^G
    \sum_{i=1}^N
    \left(
        \frac{ [\bbS^+ \bbz^{(g)}]_i }{ N_g }
        -
        \frac{1}{G-1}
        \sum_{h\neq g}
        \frac{ [\bbS^+ \bbz^{(h)}]_i }{ N_h }
    \right)^2
&\nonumber\\&
    &~\leq~&
    % &~=~&
    \frac{1}{GN}
    \sum_{g=1}^G
    \sum_{i=1}^N
    \frac{ [ \bbS^+\bbz^{(g)} ]_i^2 }{ N_g^2 }
% &\nonumber\\&
%     &&
    +
    \frac{1}{G (G-1)^2 N}
    \sum_{g=1}^G
    \sum_{i=1}^N
    \left(
        \sum_{h\neq g}
        \frac{ [ \bbS^+\bbz^{(h)} ]_i }{ N_h }
    \right)^2
&\nonumber\\&
    &~\leq~&
    \frac{1}{GN N_{\scriptscriptstyle\min}^2}
    % \sum_{g=1}^G \| \bbS\bbz^{(g)} \|_2^2
    \| \bbS^+\bbZ \|_F^2
    % \normv{\bbS\bbZ}_F^2
% &\nonumber\\&
%     &&
    +
    \frac{1}{G (G-1)^2 N N_{\scriptscriptstyle\min}^2 }
    \sum_{g=1}^G
    \| \bbS^+ (\bbone - \bbz^{(g)}) \|_2^2
&\nonumber\\&
    &~\leq~&
    \frac{1}{GN N_{\scriptscriptstyle\min}^2}
    % \normv{\bbS\bbZ}_F^2
    % \sum_{g=1}^G \| \bbS\bbz^{(g)} \|_2^2
    \| \bbS^+\bbZ \|_F^2
    +
    \frac{G-2}{G (G-1)^2 N N_{\scriptscriptstyle\min}^2 }
    \normv{\bbS^+\bbone}_2^2
% &\nonumber\\&
%     &&
    +
    \frac{1}{G (G-1)^2 N N_{\scriptscriptstyle\min}^2 }
    % \sum_{g=1}^G \| \bbS\bbz^{(g)} \|_2^2
    \| \bbS^+\bbZ \|_F^2
&\nonumber\\&
    &~\leq~&
    \left(
        \frac{1}{GN N_{\scriptscriptstyle\min}^2}
        +
        \frac{1}{ G(G-1) N N_{\scriptscriptstyle\min}^2 }
    \right) \normv{\bbS^+}_1^2
&\nonumber\\&
    &~\leq~&
    \frac{2}{ (G-1) N N_{\scriptscriptstyle\min}^2 }
    \normv{\bbS^+}_1^2
    ~\leq~
    % \frac{2}{ (G-1) N N_{\scriptscriptstyle\min}^2 }
    \frac{4}{ G N N_{\scriptscriptstyle\min}^2 }
    \normv{\bbS}_1^2,
\nonumber}
which gives
\alna{
    \| \hbS - \bbS^* \|_1
    &~\geq~&
    \frac{ N_{\scriptscriptstyle\min} \sqrt{GN} }{ 2 }
    \| \bbRno (\hbs - \bbs^*) \|_2
% &\nonumber\\&
%     &~\geq~&
    ~\geq~
    \frac{ N_{\scriptscriptstyle\min} \sqrt{GN} }{ 2 }
    \left(
        \| \bbRno \bbs^* \|_2 - \tau
    \right),
\nonumber}
as in~\eqref{eq:err_low_no} as desired for $\Rno(\bbS^*) > \tau^2$.
$\hfill\blacksquare$
% ==============================================================

% ==============================================================
\section{Proof of Theorem~\ref{thm:V_convex}}
\label{app:thm_V_convex}

The proof follows analogous steps to the proof of Theorem~\ref{thm:C_convex} in Appendix~\ref{app:thm_C_convex}.
We again proceed only for $\ccalS = \SA$, but the result also holds for $\ccalS = \SL$, as discussed in Appendix~\ref{app:Lapl}.

First, consider the following vectorization of~\eqref{eq:opt_V_l0}
\alna{
    \bbs', \bblambda'_{\rm\scriptscriptstyle V}
    ~\in~
    &\argmin_{\bbs, \bblambda}& ~~
    \| \bbs \|_0
    \quad
    ~{\rm s.t.~} 
    \quad
    \| \bbU\bbs - \hbJ \bblambda \|_2 \leq \epsilon,~
    \| \bbR\bbs \|_2 \leq \tau,~
    \bbE \bbs \geq \bbone, ~
    \bbs \geq \bbzero,
\label{eq:vec_V_l0_aux}}
whose optimal set of GSOs can also be obtained via
\alna{
    \bbs'
    ~\in~
    &\argmin_{\bbs}& ~~
    \| \bbs \|_0
    \quad
    ~{\rm s.t.~} 
    \quad
    \| \bbU\bbs - \hbJ \hbJ^\top \bbU \bbs \|_2 \leq \epsilon,~
    \| \bbR\bbs \|_2 \leq \tau,~
    \bbE \bbs \geq \bbone, ~
    \bbs \geq \bbzero
\label{eq:vec_V_l0}}
since the pseudo-inverse of $\hbJ$ is $\hbJ^{\dagger} = \hbJ^\top$~\citep{segarra2017networktopologyinference}.
Indeed, for any solution $(\bbs'_1,\bblambda'_1)$ of~\eqref{eq:vec_V_l0_aux}, $\bbs'_1$ is feasible for~\eqref{eq:vec_V_l0}, and therefore $\| \bbs'_2 \|_0 \leq \| \bbs'_1 \|_0$ for any solution $\bbs'_2$ of~\eqref{eq:vec_V_l0}.
Conversely, $(\bbs'_2,\hbJ^\top \bbU \bbs'_2)$ is a feasible solution of~\eqref{eq:vec_V_l0_aux}, therefore $\| \bbs'_1 \|_0 \leq \| \bbs'_2 \|_0$, so~\eqref{eq:vec_V_l0_aux} and~\eqref{eq:vec_V_l0} share the same optimal vectorized GSOs.
We also vectorize~\eqref{eq:opt_V_l1} as
\alna{
    \hat{\Omega}
    =~
    &\argmin_{\bbs}& ~~
    \| \bbs \|_1
% &\nonumber\\&
%     &{\rm s.t.~~}& ~~
    \quad
    ~{\rm s.t.~} 
    \quad
    \| \hbF \bbs \|_2 \leq \epsilon,~
    \| \bbR\bbs \|_2 \leq \tau,~
    \bbE \bbs \geq \bbone, ~
    \bbs \geq \bbzero.
\label{eq:vec_V_l1}}
The remainder of the proof follows the same steps as in Appendix~\ref{app:thm_C_convex}.
$\hfill\blacksquare$
% ==============================================================

% ==============================================================
\section{Fair Spectral Templates with Graph Laplacian}
\label{app:Lapl}

Here, we introduce analogous theoretical results for graph Laplacian GSOs $\ccalS = \SL$.

\subsection{FairSpecTemp Convex Relaxation}
\label{app:Lapl_convex}

We introduce $\hbSigma_{\rm\scriptscriptstyle L} := [\hbC \oplus (-\hbC)]_{\cdot,\ccalD}\bbE + \hbSigma$ such that $\|\hbC\bbS-\bbS\hbC\|_F = \|\hbSigma_{\rm\scriptscriptstyle L}\bbs\|_2$ for $\bbS\in\SL$ and $\bbs = \vect(\bbS)_{\ccalL}$. 
Then, we adapt the result on the convex relaxation of~\eqref{eq:opt_C_l0} in Theorem~\ref{thm:C_convex} for $\ccalS = \SL$ as follows.

\begin{mycorollary}\label{cor:C_convex}
    % Let $\bbR = \bbRgr$ when $R = \Rgr$, otherwise $\bbR = \bbRno$ when $R = \Rno$. 
    If problems~\eqref{eq:opt_C_l0} and~\eqref{eq:opt_C_l1} are feasible for $\ccalS = \SL$, then for $\bbSCp$ as a solution to~\eqref{eq:opt_C_l0}, we have that $\hbSC = \bbSCp$ is the unique solution to~\eqref{eq:opt_C_l1} if
    \begin{itemize}[left= 15pt .. 24pt, itemsep=1pt]
    \im[(A1)] The submatrix $[\hbSigma_{\rm \scriptscriptstyle L}]_{\cdot,\ccalI}$ is full column rank, and
    \im[(A2)] There exists a constant $\psi > 0$ such that
    \alna{
        \normms{
            \left(
                \psi^{-2} \bbPhi^\top \bbPhi + \bbI_{\cdot,\bar{\ccalI}} \bbI_{\bar{\ccalI},\cdot}
            \right)^{-1}_{\bar{\ccalI},\ccalI} 
        }_{\infty}
        ~<~
        1,
    \label{eq:cor_C_cvx_cond}}
    \end{itemize}
    where $\ccalI = {\rm supp}(\bbs')$ for $\bbs' = \vect(\bbSCp)_{\ccalL}$, and $\bbPhi := [ \hbSigma_{\rm \scriptscriptstyle L}^\top, \bbR^\top, \bbE^\top ]^\top$.
\end{mycorollary}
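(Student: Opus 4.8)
\emph{Proof plan.} The plan is to transcribe the argument of Theorem~\ref{thm:C_convex} (Appendix~\ref{app:thm_C_convex}) almost verbatim, the only genuinely new issue being that a Laplacian GSO has a nonzero diagonal that must be tracked through the vectorization. First I would fix the free variable $\bbs = \vect(\bbS)_{\ccalL}$ for $\bbS \in \SL$ and recall from Section~\ref{ss:fst_commut} that the diagonal is then pinned down by $\vect(\bbS)_{\ccalD} = -\bbE\bbs$, where $\bbE = (\bbone^\top\otimes\bbI)\bbU$. With this, each linear constraint cutting out $\SL$ in~\eqref{eq:valid_SL} --- nonpositive off-diagonals, $\bbS\bbone = \bbzero$, and $\diag(\bbS)\geq\bbone$ --- rewrites as a linear (in)equality in $\bbs$, and the commutator obeys $\|\hbC\bbS - \bbS\hbC\|_F = \|\hbSigma_{\rm \scriptscriptstyle L}\bbs\|_2$ with $\hbSigma_{\rm \scriptscriptstyle L} = [\hbC\oplus(-\hbC)]_{\cdot,\ccalD}\bbE + \hbSigma$, the extra summand capturing the diagonal block that vanished in the adjacency case. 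Since $\Rgr$ and $\Rno$ involve only off-diagonal entries, the fairness constraint still reads $\|\bbR\bbs\|_2^2 \le \tau^2$ with $\bbR$ the matrix $\bbRgr$ or $\bbRno$ (the sign change induced by $\bbS^+ = -\bbA$ being absorbed by the norm). Vectorizing~\eqref{eq:opt_C_l0} and~\eqref{eq:opt_C_l1} exactly as in~\eqref{eq:vec_C_l0}--\eqref{eq:vec_C_l1} then yields the same two problems as for $\ccalS=\SA$, but with $\hbSigma$ replaced by $\hbSigma_{\rm \scriptscriptstyle L}$ and the valid-Laplacian inequalities in place of the valid-adjacency ones; write $\bbs' = \vect(\bbSCp)_{\ccalL}$ for the vectorized $\ell_0$ solution and $\hat{\Omega}$ for the vectorized $\ell_1$ solution set.

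Next I would set $\bbPhi = [\hbSigma_{\rm \scriptscriptstyle L}^\top, \bbR^\top, \bbE^\top]^\top$ and, for an arbitrary $\hbs\in\hat{\Omega}$, form the auxiliary equality-constrained problem $\tilde\Omega(\hbs) = \argmin_{\bbs}\|\bbs\|_1$ subject to $\bbPhi\bbs=\bbPhi\hbs$, as in~\eqref{eq:vec_C_l1_aux}; the target is to show $\tilde\Omega(\hbs)=\{\bbs'\}$ for every such $\hbs$. Existence of $\bbs'$ in $\tilde\Omega(\hbs)$ would be established through the dual certificate of \citep[Theorem 1]{zhang2016OneConditionSolution}: it suffices to exhibit $\bby\in\Im{\bbPhi^\top}$ with $\bby_{\ccalI}=\sign(\bbs'_{\ccalI})$ and $\|\bby_{\bar{\ccalI}}\|_\infty < 1$, where $\ccalI = {\rm supp}(\bbs')$. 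That vector comes from the identical least-squares construction used in Appendix~\ref{app:thm_C_convex} --- minimize $\|\bby_{\bar{\ccalI}}\|_2^2 + \psi^2\|\bbr\|_2^2$ subject to $\bby_{\bar{\ccalI}} = \bbPhi_{\cdot,\bar{\ccalI}}^\top\bbr$ and $\bbPhi_{\cdot,\ccalI}^\top\bbr = \sign(\bbs'_{\ccalI})$ --- whose closed-form minimizer satisfies $\|\bby_{\bar{\ccalI}}\|_\infty \le \normms{(\psi^{-2}\bbPhi^\top\bbPhi + \bbI_{\cdot,\bar{\ccalI}}\bbI_{\bar{\ccalI},\cdot})^{-1}_{\bar{\ccalI},\ccalI}}_\infty$, which is $<1$ exactly by condition \textit{(A2)} of Corollary~\ref{cor:C_convex} in~\eqref{eq:cor_C_cvx_cond}. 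Uniqueness is then forced by condition \textit{(A1)}: since $[\hbSigma_{\rm \scriptscriptstyle L}]_{\cdot,\ccalI}$ is full column rank, so is $\bbPhi_{\cdot,\ccalI}$, hence $\tilde\Omega(\hbs)$ is a singleton and therefore equals $\{\bbs'\}$; because $\hbs\in\tilde\Omega(\hbs)$ always, we get $\hbs = \bbs'$ for every $\hbs\in\hat{\Omega}$, so $\hat{\Omega}=\{\bbs'\}$, and devectorizing gives $\hbSC=\bbSCp$.

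The only step I expect to require real care --- and the place the Laplacian case diverges from the adjacency case --- is the reduction showing that any $\tbs\in\tilde\Omega(\hbs)$ automatically inherits the sign and feasibility constraints of the original vectorized problem; in Appendix~\ref{app:thm_C_convex} this was the chain $\|\tbs\|_1 \le \|\hbs\|_1 = \bbone^\top\bbE\hbs = \bbone^\top\bbE\tbs \le \|\tbs\|_1$, which pins $\tbs\ge\bbzero$. For $\SL$ the off-diagonal entries are nonpositive, so the corresponding identity carries a sign flip and one instead concludes $\tbs\le\bbzero$; I would also check that the remaining constraints $\bbS\bbone=\bbzero$ and $\diag(\bbS) = -\bbE\tbs \ge \bbone$, being linear and either encoded in $\bbPhi$ through $\bbE$ or implied by $\bbPhi\tbs = \bbPhi\hbs$ together with feasibility of $\hbs$, do not obstruct the reduction. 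Since none of this affects the dual-certificate or the rank argument, the remainder is a direct transcription, and I would simply flag these sign-and-bookkeeping adjustments and otherwise refer to Appendix~\ref{app:thm_C_convex}.
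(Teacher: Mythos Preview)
Your proposal is correct and follows the same approach as the paper: the paper's proof simply writes down the vectorized Laplacian problem with the sign-flipped constraints $-\bbE\bbs \geq \bbone$ and $\bbs \leq \bbzero$, then states that the auxiliary problem~\eqref{eq:vec_C_l1_aux} and the remainder of Appendix~\ref{app:thm_C_convex} carry over unchanged. Your plan spells out more of the bookkeeping (the diagonal recovery $\vect(\bbS)_{\ccalD}=-\bbE\bbs$, the sign flip in the $\|\tbs\|_1 = -\bbone^\top\bbE\tbs$ chain forcing $\tbs\le\bbzero$, and the automatic inheritance of $-\bbE\tbs\geq\bbone$ from $\bbE\tbs=\bbE\hbs$), but the structure is identical.
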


\begin{proof}
    Observe that the equivalent vectorization of~\eqref{eq:opt_C_l0} for $\ccalS = \SL$ is
    \alna{
        \bbs'
        ~\in~
        &\argmin_{\bbs}& ~~
        \| \bbs \|_0
    % &\nonumber\\&
    %     &{\rm s.t.~~}& ~~
        ~~~~{\rm s.t.}~~~~
        \| \hbSigma_{\rm \scriptscriptstyle L} \bbs \|_2 \leq \epsilon,~
        \| \bbR\bbs \|_2 \leq \tau,~
        -\bbE \bbs \geq \bbone, ~
        \bbs \leq \bbzero,
    \nonumber}
    so an analogous procedure with the same auxiliary problem~\eqref{eq:vec_C_l1_aux} yields the same guarantee.
\end{proof}
Similarly, we adapt Theorem~\ref{thm:V_convex} on the convex relaxation of~\eqref{eq:opt_V_l0} for $\ccalS=\SL$.
To this end, we define $\hbF_{\rm\scriptscriptstyle L} := [\bbI-\hbJ\hbJ^\top]_{\cdot,\ccalD}\bbE + \hbF$.

\begin{mycorollary}\label{cor:V_convex}
    % Let $\bbR = \bbRgr$ when $R = \Rgr$, otherwise $\bbR = \bbRno$ when $R = \Rno$. 
    If problems~\eqref{eq:opt_V_l0} and~\eqref{eq:opt_V_l1} are feasible for $\ccalS = \SL$, then for $\bbSVp$ as a solution to~\eqref{eq:opt_V_l0}, we have that $\hbSV = \bbSVp$ is the unique solution to~\eqref{eq:opt_V_l1} if
    \begin{itemize}[left= 15pt .. 24pt, itemsep=1pt]
    \im[(A1)] The submatrix $[\hbF_{\rm\scriptscriptstyle L}]_{\cdot,\ccalI}$ is full column rank, and
    \im[(A2)] There exists a constant $\psi > 0$ such that
    \alna{
        \normms{
            \left(
                \psi^{-2} \bbPsi^\top \bbPsi + \bbI_{\cdot,\bar{\ccalI}} \bbI_{\bar{\ccalI},\cdot}
            \right)^{-1}_{\bar{\ccalI},\ccalI}
        }_{\infty}
        ~<~
        1,
    \label{eq:cor_V_cvx_cond}}
    \end{itemize}
    where $\ccalI = {\rm supp}(\bbs')$ for $\bbs' = \vect(\bbSVp)_{\ccalL}$, and $\bbPsi := [ \hbF_{\rm\scriptscriptstyle L}^\top, \bbR, \bbE^\top ]^\top$.
\end{mycorollary}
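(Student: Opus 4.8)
The plan is to transcribe, almost line for line, the argument of Appendix~\ref{app:thm_V_convex} (which itself reduces to Appendix~\ref{app:thm_C_convex}), changing only the pieces of linear algebra that distinguish a graph Laplacian from an adjacency matrix, exactly as Corollary~\ref{cor:C_convex} does for the commutativity-based problem. The first step is to write down the vectorized form of~\eqref{eq:opt_V_l0} for $\ccalS=\SL$: for $\bbS\in\SL$ the lower-triangular block $\bbs=\vect(\bbS)_{\ccalL}$ still determines $\bbS$, but now $\vect(\bbS)$ carries the fixed diagonal $\diag(\bbS)=-\bbE\bbs$ in addition to the off-diagonal part $\bbU\bbs$; substituting this into $\|\bbS-\sum_i\lambda_i\hbv_i\hbv_i^{\top}\|_F$ and eliminating the free variable $\bblambda$ via $\hbJ^{\dagger}=\hbJ^{\top}$~\citep{segarra2017networktopologyinference} replaces the residual operator $\hbF=(\bbI-\hbJ\hbJ^{\top})\bbU$ by $\hbF_{\rm\scriptscriptstyle L}=[\bbI-\hbJ\hbJ^{\top}]_{\cdot,\ccalD}\bbE+\hbF$. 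Thus~\eqref{eq:opt_V_l0} vectorizes to $\min_{\bbs}\|\bbs\|_0$ subject to $\|\hbF_{\rm\scriptscriptstyle L}\bbs\|_2\le\epsilon$, $\|\bbR\bbs\|_2\le\tau$, $-\bbE\bbs\ge\bbone$, $\bbs\le\bbzero$, and the equivalence between this $\bbs$-only problem and the joint $(\bbs,\bblambda)$ problem is the same $\hbJ^{\dagger}=\hbJ^{\top}$ bookkeeping used in Appendix~\ref{app:thm_V_convex}; relaxing $\|\bbs\|_0$ to $\|\bbs\|_1$ vectorizes~\eqref{eq:opt_V_l1}.

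From here the proof is verbatim Appendix~\ref{app:thm_C_convex} with $\hbF_{\rm\scriptscriptstyle L}$ and $\bbPsi=[\hbF_{\rm\scriptscriptstyle L}^{\top},\bbR^{\top},\bbE^{\top}]^{\top}$. I would fix an arbitrary minimizer $\hbs$ of the vectorized convex problem, pass to the auxiliary problem $\min_{\bbs}\|\bbs\|_1$ s.t. $\bbPsi\bbs=\bbPsi\hbs$ as in~\eqref{eq:vec_C_l1_aux}, and first argue that any solution $\tbs$ of it is sign-definite: since $\bbE$ is a block of $\bbPsi$, $\bbPsi\bbs=\bbPsi\hbs$ forces $\bbE\tbs=\bbE\hbs$, so $\|\tbs\|_1\le\|\hbs\|_1=-\bbone^{\top}\bbE\hbs=-\bbone^{\top}\bbE\tbs\le\|\tbs\|_1$, whence equality throughout and $\tbs\le\bbzero$, so $\tbs$ is feasible for the vectorized~\eqref{eq:opt_V_l1} and it suffices to prove $\tilde{\Omega}(\hbs)=\{\bbs'\}$. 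Then I would construct the dual certificate exactly as before: solving $\min\|\bby_{\bar{\ccalI}}\|_2^2+\psi^2\|\bbr\|_2^2$ s.t. $\bby_{\bar{\ccalI}}=\bbPsi_{\cdot,\bar{\ccalI}}^{\top}\bbr$, $\bbPsi_{\cdot,\ccalI}^{\top}\bbr={\rm sign}(\bbs'_{\ccalI})$ gives $\bby_{\bar{\ccalI}}^{*}=-\bbI_{\bar{\ccalI},\cdot}(\psi^{-2}\bbPsi^{\top}\bbPsi+\bbI_{\cdot,\bar{\ccalI}}\bbI_{\bar{\ccalI},\cdot})^{-1}\bbI_{\cdot,\ccalI}{\rm sign}(\bbs'_{\ccalI})$, so that condition \textit{(A2)}, i.e.~\eqref{eq:cor_V_cvx_cond}, yields $\|\bby_{\bar{\ccalI}}^{*}\|_{\infty}<1$; by~\citep[Theorem 1]{zhang2016OneConditionSolution} this certifies $\bbs'\in\tilde{\Omega}(\hbs)$, and condition \textit{(A1)}, full column rank of $[\hbF_{\rm\scriptscriptstyle L}]_{\cdot,\ccalI}$ (hence of $\bbPsi_{\cdot,\ccalI}$), upgrades it to $\tilde{\Omega}(\hbs)=\{\bbs'\}$. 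Since $\hbs\in\tilde{\Omega}(\hbs)$ and neither condition depends on the choice of $\hbs$, the solution set of the vectorized~\eqref{eq:opt_V_l1} is $\{\bbs'\}$; de-vectorizing yields $\hbSV=\bbSVp$.

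I do not expect a genuinely new obstacle: the substantive content (the reduction to the one-condition criterion of~\citep{zhang2016OneConditionSolution} and the least-norm dual-certificate construction) carries over untouched from Appendix~\ref{app:thm_C_convex} and Appendix~\ref{app:thm_V_convex}. The only places that need real care, and that I would write out in full, are the Laplacian-specific bookkeeping: checking that $\hbF_{\rm\scriptscriptstyle L}$ is indeed the residual operator after substituting the Laplacian parametrization of $\vect(\bbS)$ and minimizing out $\bblambda$, and verifying that the sign-definiteness step goes through with the reversed constraints $-\bbE\bbs\ge\bbone$ and $\bbs\le\bbzero$ in place of $\bbE\bbs\ge\bbone$, $\bbs\ge\bbzero$. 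Everything else is transcription, so the proof can be stated compactly by invoking the two earlier proofs and only flagging these substitutions.
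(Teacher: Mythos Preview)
Your proposal is correct and follows precisely the route the paper intends: the paper omits the proof entirely, stating only that it is analogous to that of Corollary~\ref{cor:C_convex}, which in turn is the Laplacian vectorization plus the auxiliary-problem argument of Appendix~\ref{app:thm_C_convex}. Your identification of the two Laplacian-specific checks (that $\hbF_{\rm\scriptscriptstyle L}$ is the correct residual operator and that the sign-definiteness step survives the reversed constraints $-\bbE\bbs\ge\bbone$, $\bbs\le\bbzero$) is exactly what distinguishes this corollary from Theorem~\ref{thm:V_convex}, and the rest is indeed transcription.
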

The proof of Corollary~\ref{cor:V_convex} is analogous to that of Corollary~\ref{cor:C_convex} and is thus omitted.

\subsection{FairSpecTemp Error Bound}
\label{app:Lapl_error}

Let conditions \textit{(A1)}, \textit{(A2)}, and \textit{(A4)} hold from Assumption~\ref{assump:error_bound}, and let $\hbSigma_{\rm\scriptscriptstyle L}$ be full column rank, analogous to condition \textit{(A3)}.
% Consider an analogous assumption to Assumption~\ref{assump:error_bound}, where $\hbSigma_{\rm\scriptscriptstyle L}$ is full column rank.
To see why the error bounds in Theorems~\ref{thm:err_bnd_gr} and~\ref{thm:err_bnd_no} are equivalent for $\ccalS = \SA$ and $\ccalS = \SL$ up to a scaling, first observe that by the proofs of the lower bounds in~\eqref{eq:err_low_gr} and~\eqref{eq:err_low_no}, we obtain the same results if $\diag(\bbS) \neq \bbzero$.
Then, for the upper bounds in~\eqref{eq:err_upp_gr} and~\eqref{eq:err_upp_no} and $\hbS,\bbS^*\in\SL$, observe that
\alna{
    \| \hbS - \bbS^* \|_1
    &~\leq~&
    \| (\hbS - \bbS^*)^+ \|_1
    +
    \| (\hbS - \bbS^*)^- \|_1
&\nonumber\\&
    &~=~&
    2\| \hbs - \bbs^* \|_1
    +
    \| \bbE(\hbs - \bbs^*) \|_1
&\nonumber\\&
    &~\leq~&
    (2 + \sigma_{\max}(\bbE))
    \| \hbs - \bbs^* \|_1,
\nonumber}
thus the upper bounds from Theorems~\ref{thm:err_bnd_gr} and~\ref{thm:err_bnd_no} are scaled by $\frac{2 + \sigma_{\max}(\bbE)}{2}$ for $\ccalS = \SL$ with the assumption that $\hbSigma_{\rm\scriptscriptstyle L}$ is full column rank, which allows us to repeat the steps in Appendices~\ref{app:thm_err_bnd_gr} and~\ref{app:thm_err_bnd_no} for the proofs of Theorems~\ref{thm:err_bnd_gr} and~\ref{thm:err_bnd_no}, respectively.

% ==============================================================
% %%%%%%%%%%%%%%%%%%%%%%%%%%%%%%%%%%%%%%%%%%%%%%%%%%%%%%%%%%%%%%

% \bibliographystyle{ieeetr}
% % \bibliographystyle{IEEEtran}
% \bibliography{citations}
\vskip 0.2in
\bibliography{citations}

\end{document}